\documentclass{article}

\usepackage{iclr2027_conference,times}
\iclrfinalcopy

\usepackage{amsmath}
\usepackage{graphicx} 
\usepackage{titletoc}
\usepackage{pgfplots}
\pgfplotsset{compat=1.18}
\newcommand\DoToC{%
  \startcontents
  \printcontents{}{1}{\hrulefill\vskip0pt}
  \vskip0pt \noindent\hrulefill
  }
\usepackage[utf8]{inputenc} % allow utf-8 input
\usepackage{booktabs}
\usepackage{enumerate}
\usepackage{subcaption}
\usepackage{bbding} 
\usepackage{enumitem}
\usepackage[T1]{fontenc}    % use 8-bit T1 fonts
\usepackage{hyperref}       % hyperlinks
\usepackage{url}            % simple URL typesetting
\usepackage{booktabs}       % professional-quality tables
\usepackage{amsfonts}       % blackboard math symbols
\usepackage{nicefrac}       % compact symbols for 1/2, etc.
\usepackage{microtype}      % microtypography
\usepackage{xcolor}    
\usepackage{algorithm}
\usepackage{algorithmic}
\usepackage{tikz}
\usetikzlibrary{arrows.meta,positioning,fit}

\usepackage{ulem}
\usepackage[table]{xcolor}
\usepackage{colortbl}
\usepackage{amsmath}
\usepackage{booktabs}
\usepackage{makecell}
\usepackage{algorithm}
\usepackage{amsfonts, amssymb}
\usepackage{algorithmic}
\usepackage{mathtools}
\usepackage{amsthm}
\usepackage[mathscr]{eucal}
\usepackage{bm}
\usepackage{graphicx}
\usepackage{multirow}

\usepackage{amsmath,lipsum}
\usepackage{amssymb} 
\usepackage{wrapfig}
\usepackage{ulem}
\usepackage{amsmath,amsfonts,bm}
\def\eqref#1{equation~\ref{#1}}
\def\1{\bm{1}}

\DeclareMathAlphabet{\mathsfit}{\encodingdefault}{\sfdefault}{m}{sl}
\SetMathAlphabet{\mathsfit}{bold}{\encodingdefault}{\sfdefault}{bx}{n}

\usepackage[english]{babel}
\newtheorem{proposition}{Proposition}
\newtheorem{theorem}{Theorem}

\newtheorem{corollary}{Corollary}[theorem]
\newtheorem{lemma}{Lemma}
\newtheorem{definition}{Definition}
\hypersetup{
    colorlinks=true,
    linkcolor=black,
    citecolor=teal,
    urlcolor=black
}
\newtheorem{assumption}{Assumption}
\usepackage{subfiles}
\usepackage{xr}
\definecolor{myblue}{HTML}{b2f0ff}
\definecolor{myblue2}{HTML}{cef5ff}
\definecolor{myblue3}{HTML}{e7faff}
\definecolor{revisiondarkgreen}{RGB}{0,100,0}

\usepackage{xcolor}

\usepackage[utf8]{inputenc} % allow utf-8 input
\usepackage[T1]{fontenc}    % use 8-bit T1 fonts
\usepackage{hyperref}       % hyperlinks
\usepackage{url}            % simple URL typesetting
\usepackage{booktabs}       % professional-quality tables
\usepackage{amsfonts}       % blackboard math symbols
\usepackage{nicefrac}       % compact symbols for 1/2, etc.
\usepackage{microtype}      % microtypography
\usepackage{xcolor}         % colors

\title{Hessian Rank Constraint for Learning Structure of Nonlinear Latent Variable Models}

\author{
\normalfont Zijian Li$^{1,4}$ \quad Ruichu Cai$^{2}$ \quad Feng Xie$^{3}$ \quad Xinshuai Dong$^{4}$ \quad Haoyue Dai$^{4}$\\[0.35em]
\normalfont Yuewen Sun$^{1,4}$ \quad Yujia Zheng$^{5}$ \quad Guangyi Chen$^{1,4}$ \quad Yingyao Hu$^{6}$ \quad Kun Zhang$^{1,4}$\\[0.65em]
\normalfont\small $^{1}$Mohamed bin Zayed University of Artificial Intelligence, Abu Dhabi, UAE\\
\normalfont\small $^{2}$Guangdong University of Technology, Guangzhou, China\\
\normalfont\small $^{3}$Beijing Technology and Business University, Beijing, China\\
\normalfont\small $^{4}$Carnegie Mellon University, Pittsburgh, PA, USA\\
\normalfont\small $^{5}$University of Illinois Urbana--Champaign, Champaign, IL, USA\\
\normalfont\small $^{6}$Johns Hopkins University, Baltimore, MD, USA
}
\date{}

\usepackage{etoolbox}\newcommand{\smallmathenvironment}[1]{\BeforeBeginEnvironment{#1}{\begingroup\small}\AfterEndEnvironment{#1}{\endgroup}}\smallmathenvironment{equation}\smallmathenvironment{equation*}\smallmathenvironment{align}\smallmathenvironment{align*}\smallmathenvironment{alignat}\smallmathenvironment{alignat*}\smallmathenvironment{flalign}\smallmathenvironment{flalign*}\smallmathenvironment{gather}\smallmathenvironment{gather*}\smallmathenvironment{multline}\smallmathenvironment{multline*}\smallmathenvironment{displaymath}
\begin{document}

\maketitle
\lhead{}

\begin{abstract}

Uncovering latent variables and their causal relations from observed data is a fundamental yet challenging problem. Existing methods often rely on restrictive assumptions, such as linear relations or invertible mixing functions. To better address this problem under general nonlinear mixing procedures, we propose a condition called the cross-Hessian Rank Constraint (HRC), which serves as a primitive rank-based tool for nonlinear latent causal discovery. In particular, we show that a rank-based property arises from the cross-Hessian of the observed-data log-density in the nonlinear case, revealing information about the latent variables, and reduces to the Tetrad constraints in the linear Gaussian case. More specifically, when two groups of observed variables are d-separated by a set of lower-dimensional latent variables, the rank of this cross-Hessian is equal to the dimension of the latent variables, under a mild affine derivative assumption on the conditional log-density derivatives. This assumption can be naturally satisfied when the noise level is low or the relevant nonlinearity is moderate.
\textcolor{black}{As a downstream application, we instantiate HRC in the pure one-factor measurement setting for locating latent variables and recovering their causal structure up to Markov equivalence.} Experimental results on synthetic and real-world datasets support the theoretical claims.
\end{abstract}

\section{Introduction}

Discovering causal structure in the presence of causally related latent confounders is one of the central and most challenging problems in causal discovery. Classical approaches, such as FCI and its variants \citep{spirtes2000causation,pearl2009causality,zhang2008completeness,colombo2012learning,akbari2021recursive,mokhtarian2025recursive}, use conditional independence information to recover aspects of the causal graph under latent confounding. However, these methods do not determine the number or the locations of latent variables, which has motivated a growing line of work on causal discovery among latent variables \citep{silva2006learning,kummerfeld2016causal}.

Recovering causal structure among latent variables is typically studied in settings where observed variables are leaves in the graph and are not directly adjacent. Within this setting, existing approaches rely on additional structural or distributional assumptions and exploit a variety of statistical signatures, including rank constraints \citep{silva2006learning,kummerfeld2016causal,huang2022latent,xie2022identification,dong2023versatile}, higher-order moments \citep{shimizu2009estimation,cai2019triad,xie2020generalized,adams2021identification,chen2022identification}, matrix decomposition \citep{anandkumar2013learning,chen2025identification}, copula-based formulations \citep{cui2018learning}, multi-domain information \citep{zeng2021causal,li2023causal,sturma2023unpaired,zhang2024causal}, and mixture-based oracles \citep{kivva2021learning,kivva2022identifiability}. Despite their methodological diversity, most of these methods still rely on restrictive assumptions, such as linear measurement relations from latent to observed variables or discrete latent variables. These assumptions may be hard to satisfy in real-world scenarios, where observed variables are often generated through general nonlinear processes. Please refer to Appendix \ref{app:related_works} for further discussion of related work on causal discovery in the presence of latent variables.

Extending causal discovery among latent variables to the general nonlinear setting remains relatively underexplored, and most of existing works focus on a special class of hierarchical latent-structure discovery problems. In this direction, \textcolor{black}{\citet{kong2023identification} identify hierarchical latent variables under a specific nonlinear latent variable model and then recover the structure among them.} However, they assume that latent variables are deterministic functions of measured variables. This assumption is often violated, for instance, if each measured variable has its own noise. More recently, \citet{prashant2025differentiable} study the nonlinear latent hierarchical model from a different perspective. Specifically, they leverage the rank of the Jacobian of the conditional expectation mapping between groups of observed variables to infer the size of separating latent sets and recover the latent structures. Nevertheless, they assume the existence of a function of the measured variables with the same dimensionality as the latent variables that make them conditionally independent. 
Although these works mark important steps, they primarily focus on a specific class of hierarchical latent-structure discovery problems and rely on restrictive assumptions. For causal discovery among latent variables under general nonlinear mixing processes, a more primitive tool is still urgently needed. 

Accordingly, in this paper, we propose a primitive tool for causal discovery among latent variables under general nonlinear mixing procedures. In linear latent variable models, Tetrad constraints characterize rank deficiencies in cross-covariance matrices \citep{huang2022latent}. Interestingly, we show that an analogous rank-based property also arises in the nonlinear setting. Specifically, if two groups of observed variables are d-separated by a lower-dimensional set of latent variables, then the cross-Hessian of the log-density of the observed variables has rank equal to the dimension of these latent variables, under a mild affine derivative assumption that holds for the conditional log-density derivatives. Furthermore, this assumption is compatible with the nonlinear mixing procedure from latent to observed variables and is naturally satisfied in regimes with low noise or moderate nonlinearity. Taken together, these observations lead to the cross-Hessian Rank Constraint (\textbf{HRC}), a primitive rank-based criterion for nonlinear latent causal discovery.

\textcolor{black}{As a concrete algorithmic application, we instantiate HRC in the standard pure one-factor measurement setting, where it can be used to locate latent variables and recover their causal structure up to Markov equivalence.} Furthermore, it can also be seamlessly incorporated with minimal modification into existing causal clustering methods for one-factor measurement models \citep{kummerfeld2016causal} and into PC-style rank-based causal discovery among latent variables \citep{silva2006learning,dong2023versatile}. Experiments on synthetic data support our theoretical results, and experiments on a real-world macroeconomic dataset show that the proposed approach recovers economically interpretable latent structure, highlighting its potential for practical nonlinear latent causal discovery.

\section{Preliminary}

Denote by $\mathcal G:=(\mathcal V,\mathcal E)$ a directed acyclic graph (DAG), where $\mathcal V:=\mathcal L\cup\mathcal X$ consists of $n+m$ nodes. $\mathcal L:=\{L_i\}_{i=1}^n$ denotes the set of $n$ latent nodes, and $\mathcal X:=\{X_j\}_{j=1}^m$ denotes the set of $m$ observed nodes.\footnote{Throughout the paper, we use ``observed variables'' and ``measured variables'' interchangeably to refer to variables directly available in the data.} The edge set $\mathcal E$ represents causal relations, and $\epsilon_{V_i}$ denotes the independent noise associated with node $V_i$. These variables are generated via the following nonlinear latent variable model:
\begin{equation}
\displaystyle
 L_i = f_{L_i}(\mathrm{Pa}(L_i), \epsilon_{L_i}), \quad
    X_j = f_{X_j}(\mathrm{Pa}(X_j), \epsilon_{X_j}),
\end{equation}
where $f_{L_i}$ and $f_{X_j}$ denote the general nonlinear functions and $\mathrm{Pa}(\cdot)$ denotes the set of parent variables of a given node in $\mathcal{G}$. Note that $\mathrm{Pa}(L_i)\subseteq\mathcal L$ and $\mathrm{Pa}(X_j)\subseteq\mathcal V$. Thus, latent variables are allowed to cause observed variables, and causal relations among observed variables are also allowed, but observed variables cannot be ancestors of latent variables. Such a data generation process provides a general formulation of the latent-variable causal models commonly considered in prior works \citep{silva2006learning,kummerfeld2016causal}, while allowing a general nonlinear process. 
\textcolor{black}{Uppercase letters denote random variables or vectors and lowercase letters their realizations; \(P\) and \(p\) denote distributions and densities, respectively. See Appendix~\ref{app:notation} for further notation.}
\textcolor{black}{For disjoint $A,B\subseteq\mathcal X$, let $p_{A,B}$ be their marginal density under $P_{A,B}$. Its log-density cross-Hessian is
\begin{equation}
\displaystyle
\mathbf H_{A,B}
:=
\frac{\partial^2\log p_{A,B}}
{\partial A\,\partial B^\top}.
\label{eq:cross-hessian}
\end{equation}
Here, derivatives are taken with respect to the corresponding arguments of the density. The notation $\operatorname{rank}(\mathbf H_{A,B})$ refers to the rank of this matrix-valued function.}
Based on the aforementioned data generation process, our goal is twofold. First, we seek a criterion from the observed distribution that reflects the d-separation by latent variables under general nonlinear processes. Second, we use this criterion for causal discovery among latent variables. More specifically, we identify the locations of latent variables, i.e., which groups of observed variables are associated with the same latent variables, and then recover the causal relations among the latent variables up to the Markov equivalence class.

\section{Cross-Hessian Rank Constraint}\label{sec:cross-hessian-rank-constraint}

In this section, we develop the cross-Hessian Rank Constraint (HRC). Specifically, under the affine derivative assumption, Theorem~\ref{thm:latent-dimension} establishes the rank upper bound implied by the d-separation of latent variables. Moreover, under an explicit non-degeneracy condition, Theorem~\ref{thm:cross-hessian-constraints} shows that this bound is tight and equals the minimum cardinality of these latent variables. \textcolor{black}{Finally, we quantify the deviation from the ideal low-rank form when the affine-derivative condition holds only approximately, and show how to estimate the cross-Hessian from observed variables using kernel density estimation (KDE), and provide a finite-confidence rank recovery condition.}

% 5/5:改成hessian rank dificiency
% \vspace{-1mm}
\subsection{Identifiability of the Dimensions of latent variables}
\begin{wrapfigure}{r}{0.33\textwidth}
\centering
\vspace{-35pt}
\scalebox{0.55}{
\begin{tikzpicture}[
    >={Stealth[length=4mm,width=2.8mm]},
    line width=1pt,
    latent/.style={
        draw,
        circle,
        minimum size=13mm,
        inner sep=0pt,
        font=\Large\bfseries
    },
    obs/.style={
        draw,
        circle,
        minimum size=13mm,
        inner sep=0pt,
        fill=gray!20,
        font=\Large\bfseries
    },
    groupbox/.style={
        draw,
        rounded corners,
        dashed,
        inner sep=7pt
    }
]

% latent variable
\node[latent] (L1) at (2.4,0) {$L_1$};

% observed variables
\node[obs] (X1) at (-0.6,-2.2) {$X_1$};
\node[obs] (X2) at ( 1.6,-2.2) {$X_2$};
\node[obs] (X3) at ( 3.2,-2.2) {$X_3$};
\node[obs] (X4) at ( 5.4,-2.2) {$X_4$};

% latent to observed
\draw[->] (L1) -- (X1);
\draw[->] (L1) -- (X2);
\draw[->] (L1) -- (X3);
\draw[->] (L1) -- (X4);

% within-group observed causal relations
\draw[->] (X1) -- (X2);
\draw[->] (X4) -- (X3);

% group boxes
\node[
    groupbox,
    fit=(X1)(X2),
    label={[font=\Large]below:{$A=\{X_1,X_2\}$}}
] {};

\node[
    groupbox,
    fit=(X3)(X4),
    label={[font=\Large]below:{$B=\{X_3,X_4\}$}}
] {};

\end{tikzpicture}
}
% \vspace{-5pt}
\caption{A causal graph where the latent variable $L_1$ d-separates two observed groups, $A=\{X_1,X_2\}$ and $B=\{X_3,X_4\}$}
\label{fig:latent-separation}
% \vspace{-20pt}
\end{wrapfigure}
In the linear setting, d-separation by latent variables induces rank constraints on cross-covariance matrices, which can reveal latent dimension \citep{spearman1928pearson,silva2006learning,Sullivant-T-separation}. For instance, if a single latent variable d-separates two observed groups $A$ and $B$, then $\operatorname{rank}(\Sigma_{A,B})\leq 1$. Because covariance captures only linear associations, this method does not directly extend to nonlinear measurement processes. We establish an analogous rank constraint using the cross-Hessian of the observed marginal log-density. If a single latent variable d-separates $A$ and $B$, then $\operatorname{rank}(\mathbf H_{A,B})\leq 1$. When the minimum latent separator has cardinality one, the non-degeneracy condition introduced later sharpens this bound to equality. In Figure~\ref{fig:latent-separation}, for example, $L_1$ d-separates $A=\{X_1,X_2\}$ and $B=\{X_3,X_4\}$, so the upper bound is one and becomes exact under non-degeneracy. We first state two assumptions.

\begin{assumption}[\textbf{Smoothness and Integrability}]
\label{assump:regularity}
\textcolor{black}{
For fixed disjoint observed variables $A,B$ and latent variables
$\mathcal Z\subseteq\mathcal L$. let $Z$ denote its associated random vector.
The densities $p_{A,B,Z}$, $p_{A,B}$, $p_{A\mid Z}$, and $p_{B\mid Z}$ are
positive and twice continuously differentiable in their observed arguments,
and differentiation under the relevant integrals is valid up to second order.
Moreover, $\operatorname{Cov}(Z\mid A,B)$ exists and is finite $P_{A,B}$-almost surely.
}
\end{assumption}
\vspace{-1.5mm}
\textcolor{black}{
\textbf{Discussion:} These are standard regularity conditions for differentiating latent-variable marginal likelihoods \citep{louis1982finding}. Specifically, positivity and twice continuous differentiability ensure that the relevant log-densities and cross-Hessian are well defined. Differentiation under the integral justifies passing derivatives through the latent marginalization, and finiteness of the posterior covariance makes the covariance factorization well defined.
}
\begin{wrapfigure}{r}{0.40\textwidth}
\vspace{-0.5\baselineskip}
\centering
\begin{tikzpicture}
\begin{axis}[
    width=0.38\textwidth,
    height=0.30\textwidth,
    view={125}{32},
    xlabel={$z$},
    ylabel={$\log x$},
    zlabel={$x\,\nabla_x\log p_{X\mid Z}(x\mid z)$},
    domain=-1:1.5,
    y domain=-0.7:0.7,
    samples=24,
    samples y=20,
    colormap/viridis,
    grid=major,
    z buffer=sort,
    tick label style={font=\small},
    label style={font=\normalsize},
]
\addplot3[surf,shader=interp] {x-y-1};
\end{axis}
\end{tikzpicture}
\caption{\small Scaled first-order derivative for
$X=\exp(Z+\epsilon)$ with $\epsilon\sim\mathcal N(0,1)$.
Since $x\nabla_x\log p_{X\mid Z}(x\mid z)=z-\log x-1$, the surface
is a plane; the positive scaling by $x$ preserves affinity in $z$.}
\label{fig:main_nonlinear_density_local_linearity}
\vspace{-1.5\baselineskip}
\end{wrapfigure}
\vspace{-1.5mm}
\begin{assumption}[\textbf{Affine Derivatives}]
\label{assump:local-linearity}
\vspace{-2mm}
\textcolor{black}{
Let $A$ and $B$ be disjoint groups of observed variables, and fix a latent variables $\mathcal Z\subseteq\mathcal L$ between them in $\mathcal G$. Write
$d=|\mathcal Z|$, and let $Z\in\mathbb R^d$ be its associated random
vector. The first-order derivatives of the conditional log-densities are
affine in its realization $z$, which are shown as follows
\begin{equation}
\begin{split}
    &\nabla_a\log p_{A\mid Z}(a\mid z)
=
\mathbf u_A(a)+\mathbf V_A(a)z, \quad\\ 
&\nabla_b\log p_{B\mid Z}(b\mid z)
=
\mathbf u_B(b)+\mathbf V_B(b)z,
\end{split}
\label{eq:local-linearity-scores}
\end{equation}
in which $\mathbf u_A(a)\in\mathbb R^{|A|}$ and
$\mathbf u_B(b)\in\mathbb R^{|B|}$ are intercept terms, while
$\mathbf V_A(a)\in\mathbb R^{|A|\times d}$ and
$\mathbf V_B(b)\in\mathbb R^{|B|\times d}$ are slope matrices.
}
\end{assumption}
\textcolor{black}{
\textbf{Discussion:}
Let
$\mathbf s_A(a,z):=\nabla_a\log p_{A\mid Z}(a\mid z)$,
and define $\mathbf s_B(b,z)$ analogously. Assumption~\ref{assump:local-linearity}
does not require the mixing procedure to be
linear. Instead, it constrains how the conditional score changes with the
latent realization while the observed value is held fixed. More explicitly, the first-order Taylor expansion with respect to $z$ is}
\begin{equation}
    \mathbf s_A(a,z+\Delta z)
=
\mathbf s_A(a,z)
+
\nabla_z\mathbf s_A(a,z)\Delta z
+
\mathbf R_A(a,z,\Delta z),
\end{equation}
\textcolor{black}{where $\mathbf R_A(a,z,\Delta z)$ collects the second- and higher-order terms. Assumption~\ref{assump:local-linearity} requires
$\nabla_z\mathbf s_A(a,z)=\mathbf V_A(a)$ to be independent of $z$ and
the second-order derivatives with respect to $z$ to vanish. Thus, for any $z$ and $z+\Delta z$ in the domain where the assumption holds,
$\mathbf s_A(a,z+\Delta z)-\mathbf s_A(a,z)=\mathbf V_A(a)\Delta z$ exactly,
with no second- or higher-order remainder; equivalently,
$\mathbf R_A(a,z,\Delta z)=0$. Intuitively, $\mathbf V_A(a)$ describes how the conditional
score responds to movements in the latent space.
Figure~\ref{fig:main_nonlinear_density_local_linearity} illustrates this point using the nonlinear measurement
$X=\exp(\lambda^\top Z+\epsilon)$. Although the measurement function is nonlinear,
its conditional score varies affinely with $z$ and therefore satisfies the
assumption. More examples are provided in Appendix~\ref{app:local_linearity}. 
}

\begin{theorem}[\textbf{Cross-Hessian Rank under d-Separation by Latent Variables}]
\label{thm:latent-dimension}{
\textcolor{black}{
Let $A,B$ be disjoint observed groups and $\mathcal Z\subseteq\mathcal L$ a set of
latent variables that d-separates them. Let $Z\in\mathbb R^d$ be its associated
random vector, where $d=|\mathcal Z|$. If Assumptions~\ref{assump:regularity}
and~\ref{assump:local-linearity} hold for $Z$, then}
\begin{equation}
\operatorname{rank}\!\left(\mathbf H_{A,B}\right)\le d.
\label{eq:functional-rank-upper-bound}
\end{equation}
}
\end{theorem}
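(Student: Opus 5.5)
The plan is to write the observed marginal as a latent mixture, use Louis' identity to turn the cross-Hessian into a posterior covariance of the conditional scores, and then let Assumption~\ref{assump:local-linearity} factor that covariance through the $d$-dimensional latent vector $Z$.

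\textbf{Step 1: factorize the marginal.} Since $\mathcal Z$ d-separates $A$ and $B$, the conditional density factorizes as $p_{A,B\mid Z}(a,b\mid z)=p_{A\mid Z}(a\mid z)\,p_{B\mid Z}(b\mid z)$. Hence
\[
p_{A,B}(a,b)=\int p_{A\mid Z}(a\mid z)\,p_{B\mid Z}(b\mid z)\,p_Z(z)\,dz .
\]

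\textbf{Step 2: compute the first derivative.} Differentiating under the integral (Assumption~\ref{assump:regularity}) and dividing by $p_{A,B}$ gives
\[
\nabla_a\log p_{A,B}(a,b)=\mathbb E\!\left[\mathbf s_A(a,Z)\mid A=a,B=b\right].
\]
This is the usual Fisher/Louis identity. It holds because $\nabla_a$ of the integrand equals $\mathbf s_A(a,z)$ times the integrand, and $\mathbf s_B$ does not depend on $a$.

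\textbf{Step 3: compute the cross derivative.} Differentiate the conditional expectation in Step 2 with respect to $b$. Since $\mathbf s_A(a,z)$ does not depend on $b$, only the posterior weight $p(z\mid a,b)\propto p_{A\mid Z}\,p_{B\mid Z}\,p_Z$ depends on $b$. Its log-derivative is
\[
\nabla_b\log p(z\mid a,b)=\mathbf s_B(b,z)-\mathbb E[\mathbf s_B(b,Z)\mid a,b].
\]
This yields the covariance identity
\[
\mathbf H_{A,B}(a,b)=\operatorname{Cov}\!\left(\mathbf s_A(a,Z),\,\mathbf s_B(b,Z)\mid A=a,B=b\right).
\]
Finiteness of the posterior covariance, from Assumption~\ref{assump:regularity}, makes this well defined.

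\textbf{Step 4: use the affine scores.} Substitute the affine forms from Assumption~\ref{assump:local-linearity}. The intercepts $\mathbf u_A(a)$ and $\mathbf u_B(b)$ are constant given $(a,b)$, so they drop out of the covariance, leaving
\[
\mathbf H_{A,B}(a,b)=\mathbf V_A(a)\,\operatorname{Cov}(Z\mid A=a,B=b)\,\mathbf V_B(b)^\top .
\]
The middle factor is $d\times d$, so at every $(a,b)$ the matrix factors through $\mathbb R^d$ and has rank at most $d$. This proves (\ref{eq:functional-rank-upper-bound}).

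\textbf{Main obstacle.} The delicate point is Step 3, differentiating a posterior expectation. One must justify moving $\nabla_b$ inside both the numerator and the normalizing integral, and check that the product $\mathbf s_A\mathbf s_B^\top$ is integrable against the posterior. I would handle this by writing the expectation as a ratio of integrals $N(a,b)/p_{A,B}(a,b)$ and applying the quotient rule, using the second-order differentiation-under-integral clause of Assumption~\ref{assump:regularity}. Affinity of the scores makes the relevant integrability reduce to finiteness of the posterior second moments of $Z$, which that assumption already provides.
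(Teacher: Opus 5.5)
Your proposal is correct and follows essentially the same route as the paper. You get the conditional factorization from d-separation (this uses the Markov property, which the paper states explicitly), derive the Louis-type identity $\mathbf H_{A,B}(y)=\operatorname{Cov}\bigl(\mathbf s_A(y_A,Z),\mathbf s_B(y_B,Z)\mid A=y_A,B=y_B\bigr)$, and let the affine scores collapse it to $\mathbf V_A\operatorname{Cov}(Z\mid A,B)\mathbf V_B^\top$ with a $d\times d$ middle factor. The only cosmetic difference is that the paper differentiates $p_{A,B}$ twice under the integral, noting that the mixed conditional log-density term vanishes, and then applies the quotient formula for $\nabla_{y_A}\nabla_{y_B}^\top\log p$; this is exactly the ratio-of-integrals computation you propose for justifying your Step 3.
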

\textbf{Proof sketch and discussion.}
\textcolor{black}{
The full proof is in Appendix~\ref{app:the1}. Using the conditional scores
defined above, the causal Markov property and the fact that $\mathcal Z$
d-separates $A$ and $B$ give
$p_{A,B\mid Z}=p_{A\mid Z}p_{B\mid Z}$. Taking mixed derivatives of the marginal log-density gives
$\mathbf H_{A,B}=\operatorname{Cov}(\mathbf s_A(A,Z),\mathbf s_B(B,Z)\mid A,B)$.
Under Assumption~\ref{assump:local-linearity}, these conditional scores are
affine in $Z$, and hence}
\begin{equation}
\mathbf H_{A,B}
=
\mathbf V_A(A)\,
\operatorname{Cov}(Z\mid A,B)\,
\mathbf V_B(B)^\top.
\label{equ:hessian_decompose}
\end{equation}
\textcolor{black}{The three factors have dimensions $|A|\times d$,
$d\times d$, and $d\times |B|$, respectively. The rank inequality for
matrix products therefore gives $\operatorname{rank}(\mathbf H_{A,B})\leq d$.
It is noted that direct causal relations within $A$ or within $B$ are also allowed. Moreover, we next introduce a non-degeneracy assumption under which this upper bound is strengthened to equality as follows.}
\begin{assumption}[\textbf{Non-degeneracy}]
\label{assump:nondegeneracy}
For disjoint groups $A$ and $B$ of observed variables, let
$\mathcal Z^*$ be a minimum-cardinality set of latent variables that
d-separates them, with $d^*:=|\mathcal Z^*|$ and associated random vector
$Z^*\in\mathbb R^{d^*}$. Under $P_{A,B}$, almost surely,
(i) $\operatorname{Cov}(Z^*\mid A,B)$ is positive definite; and (ii)
$\mathbf V_A(A)$ and $\mathbf V_B(B)$ have full column rank $d^*$.
\end{assumption}
Assumption~\ref{assump:nondegeneracy} ensures that no rank is lost in the cross-Hessian decomposition associated with a minimum set of latent variables that d-separates $A$ and $B$. Condition~(i) requires the posterior covariance of $Z^*$ to retain variation along every latent direction. Condition~(ii) requires each observed group to be locally informative about every such direction. In a linear Gaussian measurement model, $\mathbf V_A=\Omega_A^{-1}\Lambda_A$ and $\mathbf V_B=\Omega_B^{-1}\Lambda_B$, where $\Omega_A$ and $\Omega_B$ denote the covariance matrices of the measurement noises in groups $A$ and $B$, respectively. Thus, condition~(ii) is equivalent to requiring both loading matrices to have full column rank. When full rank is feasible, violations form a measure-zero algebraic subset of the unrestricted loading-parameter space, paralleling the generic exclusion of rank-deficient parameterizations in linear rank-faithfulness results \citep{Sullivant-T-separation,huang2022latent,dong2023versatile}. Please refer to Appendix~
\ref{app:regular_non_degeneracy_condition} for more discussion.

\begin{theorem}[\textbf{Hessian Rank Constraint}]
\label{thm:cross-hessian-constraints}

Let $A$ and $B$ be disjoint groups of observed variables, and let
$\mathcal Z^*$ be a minimum-cardinality set of latent variables that d-separates them. Write $d^*:=|\mathcal Z^*|$. Under Assumptions~\ref{assump:regularity} and~\ref{assump:local-linearity} with respect to $\mathcal Z^*$, and under Assumption~\ref{assump:nondegeneracy},
we have
\begin{equation}
\operatorname{rank}\!\left(\mathbf H_{A,B}\right)=d^*.
\label{eq:functional-rank-equality}
\end{equation}
\vspace{-7mm}
\end{theorem}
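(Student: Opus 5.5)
The proof reuses the factorization behind Theorem~\ref{thm:latent-dimension}, now applied with $\mathcal Z=\mathcal Z^*$, and checks that no rank is lost in any factor. Since $\mathcal Z^*$ d-separates $A$ and $B$, Theorem~\ref{thm:latent-dimension} with $d=d^*$ (Assumptions~\ref{assump:regularity} and~\ref{assump:local-linearity} are assumed for $\mathcal Z^*$) immediately gives $\operatorname{rank}(\mathbf H_{A,B})\le d^*$. Only the reverse inequality remains, and it is pointwise: for $P_{A,B}$-almost every $(a,b)$ we show that the matrix $\mathbf H_{A,B}(a,b)$ has rank at least $d^*$.

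To do so, I would first recall the decomposition \eqref{equ:hessian_decompose} established in the proof of Theorem~\ref{thm:latent-dimension}. Writing $\Sigma(a,b):=\operatorname{Cov}(Z^*\mid A=a,B=b)$, it reads
\begin{equation*}
\mathbf H_{A,B}(a,b)=\mathbf V_A(a)\,\Sigma(a,b)\,\mathbf V_B(b)^\top .
\end{equation*}
This comes from $p_{A,B}=\int p_{A\mid Z^*}p_{B\mid Z^*}p_{Z^*}\,dz$. Differentiating $\log p_{A,B}$ in $a$ gives $\mathbb E[\mathbf s_A(a,Z^*)\mid a,b]$. Differentiating this posterior mean in $b$ then yields the posterior covariance of $\mathbf s_A$ and $\mathbf s_B$. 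Under affinity, this covariance reduces to the displayed product. Assumption~\ref{assump:regularity} justifies each step.

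Next comes the linear-algebra step. Fix $(a,b)$ in the full-measure set on which Assumption~\ref{assump:nondegeneracy} holds. There $\mathbf V_A(a)\in\mathbb R^{|A|\times d^*}$ has full column rank, so it has a left inverse $\mathbf V_A(a)^{+}$ with $\mathbf V_A(a)^{+}\mathbf V_A(a)=I_{d^*}$. Likewise $\mathbf V_B(b)^\top$ has a right inverse. Hence
\begin{equation*}
\mathbf V_A(a)^{+}\,\mathbf H_{A,B}(a,b)\,\bigl(\mathbf V_B(b)^{+}\bigr)^{\top}=\Sigma(a,b).
\end{equation*}
The right-hand side is positive definite and therefore has rank $d^*$. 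Multiplying a matrix by other matrices cannot increase its rank, so $\operatorname{rank}\mathbf H_{A,B}(a,b)\ge d^*$. Equivalently, one can invoke Sylvester's rank inequality: multiplying by a full-column-rank matrix on the left, or a full-row-rank matrix on the right, preserves rank. Combining this with the upper bound gives equality almost everywhere, which is the meaning of the rank of the matrix-valued function in \eqref{eq:functional-rank-equality}. If a version valid everywhere is wanted, continuity of $\mathbf H_{A,B}$ makes rank lower semicontinuous, so the rank is at least $d^*$ on an open dense set and at most $d^*$ everywhere.

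The main obstacle is not the algebra but a consistency point. The slope matrices $\mathbf V_A,\mathbf V_B$ and the covariance in the decomposition must be exactly those associated with the minimum separator $\mathcal Z^*$, so that Assumption~\ref{assump:nondegeneracy} applies to the very factors appearing in \eqref{equ:hessian_decompose}. I would state explicitly that Theorem~\ref{thm:latent-dimension} is invoked with $\mathcal Z=\mathcal Z^*$. I would also state that the almost-sure qualifiers in Assumptions~\ref{assump:regularity} and~\ref{assump:nondegeneracy} are intersected into a single $P_{A,B}$-full-measure set on which both the factorization and the nondegeneracy hold simultaneously.
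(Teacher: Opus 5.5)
Your proof is correct and takes the paper's approach. Both apply the factorization $\mathbf H_{A,B}=\mathbf V_A(A)\operatorname{Cov}(Z^*\mid A,B)\mathbf V_B(B)^\top$ with the minimum separator $\mathcal Z^*$, and use Assumption~\ref{assump:nondegeneracy} to conclude that the rank is exactly $d^*$ outside a $P_{A,B}$-null set. Your left/right-inverse argument spells out the linear-algebra step that the paper states in one line.
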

\textbf{Proof sketch and discussion.}
Equation~(\ref{equ:hessian_decompose}) gives the cross-Hessian decomposition associated with
the minimum separator $\mathcal Z^*$ of cardinality $d^*$.
Under Assumption~\ref{assump:nondegeneracy}, the posterior covariance is
positive definite and both slope matrices have full column rank $d^*$;
hence the decomposition has rank exactly $d^*$.
Please refer to Appendix~\ref{app:the2} for the complete proof.

\textcolor{black}{As a direct specialization of Theorem~\ref{thm:cross-hessian-constraints}, we next consider the linear Gaussian case as shown in Corollary \ref{coro1}. The linear Gaussian conditional log-density derivatives are affine in the latent variables, so Assumption~\ref{assump:local-linearity} holds automatically. Appendix~\ref{app:cor1} gives the proof.}\begin{corollary}
\label{coro1}
\textbf{(Linear Gaussian Special Case)}
Let $A$ and $B$ be disjoint observed groups. Let $\mathcal Z^*$ be a
minimum latent d-separator between them, write $d^*=|\mathcal Z^*|$, and
let $Z^*\in\mathbb R^{d^*}$ be its associated random vector. Suppose the
full structural causal model is linear Gaussian, the joint distribution of $(A,B)$ is non-degenerate, and
Assumptions~\ref{assump:regularity} and~\ref{assump:nondegeneracy} hold.
Then
\[
\operatorname{rank}\!\left(\mathbf H_{A,B}\right)=d^*.
\]
\end{corollary}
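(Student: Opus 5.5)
The plan is to reduce Corollary~\ref{coro1} to Theorem~\ref{thm:cross-hessian-constraints}. All assumptions of that theorem are given except Assumption~\ref{assump:local-linearity} for $\mathcal Z^*$, so the work is to verify that assumption in the linear Gaussian model.

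First I would write the linear Gaussian SCM in reduced form. Every variable is a linear combination of the independent Gaussian noises, so $(A,B,Z^*)$ is jointly Gaussian. Because $(A,B)$ is non-degenerate, the conditional laws $P_{A\mid Z^*}$ and $P_{B\mid Z^*}$ are Gaussian with
\[
A\mid Z^*=z\sim\mathcal N(\mu_A+\Lambda_A z,\ \Omega_A),\qquad B\mid Z^*=z\sim\mathcal N(\mu_B+\Lambda_B z,\ \Omega_B).
\]
Here $\Lambda_A=\Sigma_{A Z^*}\Sigma_{Z^*Z^*}^{-1}$, and $\Omega_A$ is the Schur complement covariance, which does not depend on $z$. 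If $\Sigma_{Z^*Z^*}$ were singular, I would replace $\Sigma_{Z^*Z^*}^{-1}$ with a pseudo-inverse on the support. Assumption~\ref{assump:nondegeneracy}(i) makes $\operatorname{Cov}(Z^*\mid A,B)$ positive definite, and this rules out degenerate directions, so the ordinary inverse is fine. I also need $\Omega_A$ and $\Omega_B$ to be positive definite. This follows because $\Omega_A\succeq\operatorname{Cov}(A\mid Z^*,B)$, and the latter is positive definite: otherwise some nonzero linear combination of $A$ would be an affine function of $(Z^*,B)$, and combined with the d-separation this would make $p_{A,B}$ degenerate. The positivity of $p_{A\mid Z}$ in Assumption~\ref{assump:regularity} gives this directly anyway.

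Next I would differentiate the Gaussian log-density:
\[
\nabla_a\log p_{A\mid Z^*}(a\mid z)=-\Omega_A^{-1}(a-\mu_A)+\Omega_A^{-1}\Lambda_A z.
\]
This has exactly the form of Assumption~\ref{assump:local-linearity} with $\mathbf u_A(a)=-\Omega_A^{-1}(a-\mu_A)$ and constant slope $\mathbf V_A=\Omega_A^{-1}\Lambda_A$. The same computation gives $\mathbf V_B=\Omega_B^{-1}\Lambda_B$. This matches the expression quoted after Assumption~\ref{assump:nondegeneracy}.

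With Assumption~\ref{assump:local-linearity} established, Theorem~\ref{thm:cross-hessian-constraints} applies directly and yields $\operatorname{rank}(\mathbf H_{A,B})=d^*$. As a sanity check, I would compute the result directly. In this case $\mathbf H_{A,B}$ is the constant $(A,B)$ block of $-\Sigma_{AB}^{-1}$, the negative inverse joint covariance. Also, $\operatorname{Cov}(Z^*\mid A,B)$ is constant, so the decomposition~\eqref{equ:hessian_decompose} becomes $\Omega_A^{-1}\Lambda_A\,\mathrm{Cov}(Z^*\mid A,B)\,\Lambda_B^\top\Omega_B^{-1}$. Its rank is $d^*$ because the outer factors have full column rank by condition (ii) and the middle factor is positive definite by condition (i).

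The main obstacle is the first step: making sure the conditional densities given $Z^*$ alone are Gaussian with $z$-independent covariance and nonsingular $\Omega_A$ and $\Omega_B$. Other latent and observed variables are marginalized out here, but joint Gaussianity handles that cleanly. The remaining steps are routine.
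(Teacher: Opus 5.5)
Your proposal is correct and follows essentially the paper's route: both show that the Gaussian conditional scores are affine in $z$, with slopes $\Omega_A^{-1}\Lambda_A$ and $\Omega_B^{-1}\Lambda_B$, and then conclude from the factorization $\mathbf V_A\operatorname{Cov}(Z^*\mid A,B)\mathbf V_B^\top$ under Assumption~\ref{assump:nondegeneracy}; the paper additionally uses the block-inverse identity to show $\operatorname{rank}(\mathbf H_{A,B})=\operatorname{rank}(\Sigma_{A,B})$, which you only touch on as a sanity check. One minor point is that your first justification of $\Omega_A\succ 0$ (d-separation plus non-degeneracy of $(A,B)$) fails on its own, since a noiseless indicator $X=Z^*$ placed in $A$ leaves $(A,B)$ non-degenerate while making $\Omega_A$ singular; your fallback to the positive density $p_{A\mid Z}$ in Assumption~\ref{assump:regularity} does close the gap.
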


\subsection{Robustness Analysis of the Cross-Hessian Rank Constraint}
\textcolor{black}{Theorems~\ref{thm:latent-dimension} and~\ref{thm:cross-hessian-constraints}
give exact cross-Hessian rank results under Assumption~\ref{assump:local-linearity}.
When the conditional scores are not exactly affine in the latent variables,
we quantify the resulting approximation error under a local smoothness condition.}

\textcolor{black}{\begin{theorem}[\textbf{Approximate Cross-Hessian Factorization}]
\label{the:error_bound}
For $A,B,Z$ as in Theorem~\ref{thm:latent-dimension}, fix
$y=(y_A^\top,y_B^\top)^\top$ with $p_{A,B}(y)>0$. Under
Assumption~\ref{assump:regularity}, suppose that, for some
$z_0\in\mathbb R^d$ and $\delta>0$:
\begin{itemize}[leftmargin=*,itemsep=0pt,topsep=1pt,parsep=0pt]
\item \textbf{(Lipschitz Continuity.)} The conditional scores
$\mathbf s_A(y_A,\cdot)$ and $\mathbf s_B(y_B,\cdot)$ are twice continuously
differentiable on $\mathcal B(z_0,\delta)$, and their componentwise Hessians
are $M$-Lipschitz on this ball.
\item \textbf{(Posterior Concentration.)} The conditional distribution
$P_{Z\mid A=y_A,B=y_B}$ is supported on $\mathcal B(z_0,\delta)$.
\end{itemize}
Let $\mathbf V_A=\nabla_z\mathbf s_A(y_A,z_0)$ and
$\mathbf V_B=\nabla_z\mathbf s_B(y_B,z_0)$ be the corresponding score
Jacobians. Let $\widetilde{\mathbf C}_A$ and $\widetilde{\mathbf C}_B$ collect
the componentwise curvature bounds $\|\nabla_z^2s_i(\cdot,z_0)\|_2+\delta M/3$
for the respective conditional scores.
Let $\mathbf H^{\mathrm{aff}}_{A,B}(y)$ denote the cross-Hessian under
Assumption~\ref{assump:local-linearity}. Then
{\small
\begin{equation}
\left\|\mathbf H_{A,B}(y)-\mathbf H^{\mathrm{aff}}_{A,B}(y)\right\|_2
\leq \tfrac12\delta^3\!\left(\|\mathbf V_A\|_2\|\widetilde{\mathbf C}_B\|_2+\|\mathbf V_B\|_2\|\widetilde{\mathbf C}_A\|_2\right)+\tfrac14\delta^4\|\widetilde{\mathbf C}_A\|_2\|\widetilde{\mathbf C}_B\|_2.
\label{eq:approximate-affine-cross-hessian-bound}
\end{equation}
}
\end{theorem}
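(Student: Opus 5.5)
The plan is to start from the exact covariance identity behind Theorem~\ref{thm:latent-dimension}. Under Assumption~\ref{assump:regularity} and d-separation, $p_{A,B\mid Z}=p_{A\mid Z}p_{B\mid Z}$, and differentiating $\log p_{A,B}=\log\int p_{A\mid Z}p_{B\mid Z}p_Z\,dz$ twice gives
\[
\mathbf H_{A,B}(y)=\operatorname{Cov}\bigl(\mathbf s_A(y_A,Z),\mathbf s_B(y_B,Z)\,\big|\,A=y_A,B=y_B\bigr).
\]
This identity does not use Assumption~\ref{assump:local-linearity}, so it can be applied to the true model. I interpret $\mathbf H^{\mathrm{aff}}_{A,B}(y)$ as the same covariance computed with each score replaced by its first-order expansion about $z_0$, namely $\mathbf s_A(y_A,z_0)+\mathbf V_A(Z-z_0)$, and analogously for $B$. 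Constant shifts do not change a covariance, so $\mathbf H^{\mathrm{aff}}_{A,B}(y)=\mathbf V_A\operatorname{Cov}(Z\mid y)\mathbf V_B^\top$, which matches \eqref{equ:hessian_decompose}.

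Next I expand each score component to second order about $z_0$ with an integral remainder. This gives $\mathbf s_A(y_A,z)=\mathbf s_A(y_A,z_0)+\mathbf V_A(z-z_0)+\mathbf r_A(z)$, where the $i$-th component of the remainder is $\int_0^1(1-t)(z-z_0)^\top\nabla_z^2 s_i(z_0+t(z-z_0))(z-z_0)\,dt$. On $\mathcal B(z_0,\delta)$ the Lipschitz condition bounds the Hessian along the path by $\|\nabla_z^2 s_i(z_0)\|_2+Mt\delta$. Integrating against the weight $(1-t)$ gives $|r_{A,i}(z)|\le\delta^2\bigl(\tfrac12\|\nabla^2_z s_i(z_0)\|_2+\tfrac16 M\delta\bigr)=\tfrac12\delta^2\bigl(\|\nabla^2_z s_i(z_0)\|_2+\delta M/3\bigr)$. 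This is exactly where the constant $\delta M/3$ in $\widetilde{\mathbf C}$ comes from. Stacking the components, $\|\mathbf r_A(Z)\|_2\le\tfrac12\delta^2\|\widetilde{\mathbf C}_A\|_2$ almost surely under the posterior, by posterior concentration, and the same bound holds for $B$.

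By bilinearity, the difference splits into three cross terms:
\[
\mathbf H_{A,B}-\mathbf H^{\mathrm{aff}}_{A,B}=\mathbf V_A\operatorname{Cov}(Z,\mathbf r_B)+\operatorname{Cov}(\mathbf r_A,Z)\mathbf V_B^\top+\operatorname{Cov}(\mathbf r_A,\mathbf r_B),
\]
with every covariance taken under the posterior. Each term is then bounded by $\|\operatorname{Cov}(U,W)\|_2\le \mathbb E\|U-\mathbb EU\|\,\|W-\mathbb EW\|$. Since $Z$ is supported in a ball of radius $\delta$, its centered value (or, more simply, $Z-z_0$ with care) is controlled by $\delta$, and the remainders are controlled by $\tfrac12\delta^2\|\widetilde{\mathbf C}\|_2$. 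This yields terms of order $\delta\cdot\tfrac12\delta^2\|\mathbf V_A\|_2\|\widetilde{\mathbf C}_B\|_2$ and $\tfrac14\delta^4\|\widetilde{\mathbf C}_A\|_2\|\widetilde{\mathbf C}_B\|_2$, which is the stated form.

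The main obstacle is getting the constants exactly right. Centering can double the radii: $\|Z-\mathbb EZ\|\le2\delta$, and the centered remainder can have norm up to $\delta^2\|\widetilde{\mathbf C}\|_2$. To avoid this I will use the uncentered form $\operatorname{Cov}(U,W)=\mathbb E[(U-c)(W-\mathbb EW)^\top]$, which holds for any constant $c$. I will take $c=z_0$ for $Z$ and $c=0$ for the remainders, and keep the centering only on the other factor. Cauchy--Schwarz then gives $\|\operatorname{Cov}(U,W)\|_2\le\sqrt{\mathbb E\|U-c\|^2}\sqrt{\operatorname{tr}\operatorname{Var}W}\le\sqrt{\mathbb E\|U-c\|^2\,\mathbb E\|W-c'\|^2}$, valid for any constants $c,c'$. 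This recovers the bounds $\delta\cdot\tfrac12\delta^2\|\widetilde{\mathbf C}\|_2$ and $\tfrac14\delta^4\|\widetilde{\mathbf C}_A\|_2\|\widetilde{\mathbf C}_B\|_2$ without factor-of-two losses. Summing the three pieces gives \eqref{eq:approximate-affine-cross-hessian-bound}.
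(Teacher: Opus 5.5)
Your proposal is correct and follows the paper's proof: it uses the same posterior score-covariance (Fisher) identity, the same integral-remainder Taylor bound $|r_{G,i}|\le\tfrac12\widetilde C_{G,i}\delta^2$, and the same three-term decomposition of $\mathbf H_{A,B}-\mathbf H^{\mathrm{aff}}_{A,B}$, finished by the triangle inequality. The only difference is in how the cross-covariances are bounded. You apply one vector Cauchy--Schwarz with second moments taken about the fixed centers $z_0$ and $0$. The paper instead bounds entrywise, using Popoviciu's inequality to get $\operatorname{Var}(d^\top Z)\le\delta^2$, and then passes through the Frobenius norm; both routes give exactly the same constants.
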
}
\textbf{Discussion.}\textcolor{black}{The proof is provided in Appendix~\ref{app:the3}. Theorem~\ref{the:error_bound} quantifies how departures from Assumption~\ref{assump:local-linearity} affect the cross-Hessian. The bound decreases when the posterior latent values are more concentrated and the conditional scores have lower local curvature. Thus, the theorem extends the exact affine characterization by explicitly controlling its approximation error. Although it does not prescribe how to estimate the cross-Hessian, it clarifies how the distribution of the latent variables and the curvature of the conditional scores affect HRC. Additional experiments in Appendix~\ref{app:more_exp} examine this behavior empirically.}

\subsection{Hessian Estimation and Rank Recovery}
\textcolor{black}{We estimate the cross-Hessian rank from observed data in two steps. First, we estimate the cross-Hessian matrix. Second, we recover its rank by thresholding the singular values of the estimated matrix. For the first step, write $Y=(A^\top,B^\top)^\top\in\mathbb R^s$, where
$s=|A|+|B|$, and let $Y^{(1)},\ldots,Y^{(N)}$ be the corresponding i.i.d.
marginal samples. Using kernel density estimation (KDE)
\citep{chen2017tutorial} with kernel $\mathcal K$ and bandwidth $h>0$, we estimate $p_{A,B}$ and its cross-Hessian by
{\small
\begin{equation}
\resizebox{\linewidth}{!}{$\displaystyle
\widehat{\mathbf H}_{A,B}(y)=\left[\nabla_y^2\log\widehat p_h(y)\right]_{A,B}=\frac{\nabla^2_{y_A,y_B}\widehat p_h(y)}{\widehat p_h(y)}-\frac{\nabla_{y_A}\widehat p_h(y)\,\nabla_{y_B}\widehat p_h(y)^\top}{\widehat p_h(y)^2},\quad
\widehat p_h(y)=\frac{1}{Nh^s}\sum_{i=1}^N\mathcal K\!\left(\frac{y-Y^{(i)}}{h}\right).
$}
\label{eq:kde-cross-hessian}
\end{equation}
}}
\textcolor{black}{The derivation of the estimated cross-Hessian in Equation~(\ref{eq:kde-cross-hessian}) is provided in Appendix~\ref{app:kde-cross-hessian-derivation}.}

\textcolor{black}{For the second step, finite-sample estimation error can make singular values that are zero for the true cross-Hessian appear slightly positive. We therefore compute the singular values of $\widehat{\mathbf H}_{A,B}(y)$, retain only those larger than a threshold $\tau$, and use their number as the estimated rank.}
\textcolor{black}{The theoretically valid threshold depends on unknown density and smoothness quantities. We therefore use a nonparametric bootstrap to calibrate a numerical threshold in practice. For each bootstrap replicate $b=1,\ldots,B_{\mathrm{boot}}$, we resample
$\{Y^{(i)}\}_{i=1}^N$ with replacement, recompute the marginal KDE cross-Hessian
$\widehat{\mathbf H}_{A,B}^{(b)}(y)$ using Equation~(\ref{eq:kde-cross-hessian}), and record
{\small
\begin{equation}
e_b
=
\left\|
\widehat{\mathbf H}_{A,B}^{(b)}(y)
-
\widehat{\mathbf H}_{A,B}(y)
\right\|_2.
\end{equation}
}
Let $\tau_{\mathrm{boot}}$ be the $q_{\mathrm{boot}}$-th empirical quantile of
$\{e_b\}_{b=1}^{B_{\mathrm{boot}}}$. Let $\sigma_1(\cdot)$ denote the largest singular value, and let $\rho\in(0,1)$ be a prespecified, dimensionless relative tolerance held fixed throughout a given analysis. We define the relative numerical threshold
$\tau_{\mathrm{rel}}
=
\rho\,\sigma_1(\widehat{\mathbf H}_{A,B}(y))$. This scale-adaptive threshold sets a lower bound on the final cutoff: singular values no larger than a $\rho$ fraction of the largest estimated singular value are treated as numerically zero even when the bootstrap threshold is smaller. We then report
{\small
\begin{equation}
\begin{aligned}
\tau
&=
\max\{\tau_{\mathrm{boot}},\tau_{\mathrm{rel}}\},&
\widehat r_{\tau}
&=
\#\left\{
j:
\sigma_j(\widehat{\mathbf H}_{A,B}(y))>\tau
\right\}.
\end{aligned}
\end{equation}
}
The resulting pointwise rank estimate counts every singular value above $\tau$ as nonzero and treats all remaining singular values as numerical zero.}
\textcolor{black}{The bootstrap operates at the matrix level: each resample produces a complete KDE cross-Hessian, and the empirical distribution of its spectral-norm deviation determines $\tau_{\mathrm{boot}}$. This provides a single data-adaptive cutoff that reflects estimation variability across the matrix, while $\tau_{\mathrm{rel}}$ prevents numerically negligible singular values from being retained when the bootstrap cutoff is small. Because HRC is evaluated pointwise, we repeat this procedure at a finite set of interior evaluation points after excluding low-density regions. This yields one rank estimate at each point; we use their modal value as the final rank estimate in all subsequent HRC decisions, reducing sensitivity to any individual location. Theoretical guarantees for the KDE cross-Hessian estimator and the resulting rank-recovery rule are provided in Appendix~\ref{app:kde-rank-recovery}, and implementation details for evaluation-point selection and rank aggregation are given in Appendix~\ref{app:rank-aggregation}.
}

% We now use the aforementioned results for causal discovery in nonlinear measurement models. Following the standard setting of one-factor measurement-model~\citep{kummerfeld2016causal}, we consider a pure measurement model in which each observed variable has at most one latent parent, has no observed parents, and has no correlated measurement error with other observed variables. We call an observed variable a pure indicator of $L_k$ if its only latent parent is $L_k$. Moreover, a set $C_k\subseteq X$ is called a pure cluster of $L_k$ if every variable in $C_k$ is a pure indicator of $L_k$; in this case, $L_k$ is called the \textit{key latent} of $C_k$. Under this setting, our application consists of two stages: first, we locate latent variables by clustering observed indicators that share the same one-dimensional key latent, replacing the classical covariance-rank constraint with the HRC rank-one constraint; second, given the estimated pure measurement model, we use pure indicators as surrogates to recover the causal structure among the latent variables up to Markov equivalence.

% 5.5谨慎提measurement model

% 我们用上面的工具学结构，具体来说我们focuse on
% \vspace{-1mm}
\section{Application in Learning the Structure among Latent Variables}\textcolor{black}{Although HRC is not restricted to pure measurement models, an end-to-end procedure requires observed groups that can be associated with individual latent variables. We therefore instantiate HRC in the standard pure one-factor measurement setting, using its rank constraints first to locate latent variables and then to test conditional independence among them. The resulting FOFC+PC procedure is an illustrative application rather than a restriction of HRC.}
We now use the aforementioned results for causal discovery in nonlinear measurement models \footnote{Please refer to Appendix \ref{def:measurement_model} for the definition of measurement model.}. Following the standard setting of one-factor measurement models~\citep{kummerfeld2016causal}, we consider a pure measurement model in which each observed variable has at most one latent parent, has no observed parents, and has no correlated measurement error with other observed variables. For a better understanding, the definition of pure indicator, pure cluster, and key latent is shown as:
\begin{definition}[\textbf{Pure Indicator, Pure Cluster, and Key Latent}]
\label{def:pure}
An observed variable $X_i$ is called a \textit{pure indicator} of a latent variable $L_k$ if, in the measurement part of the graph, $L_k$ is the only latent parent of $X_i$, $X_i$ has no observed parents, and $X_i$ shares no correlated measurement error with other observed variables. A set $C_k\subseteq\mathcal X$ is called a \textit{pure cluster} of $L_k$ if all variables in $C_k$ are pure indicators of $L_k$ and are not indicators of any other latent variable. In this case, $L_k$ is called the \textit{key latent} of $C_k$, since it is the common latent source underlying the cluster.
% \vspace{-2mm}
\end{definition}
Under this setting, our application consists of two stages: first, we locate latent variables by clustering observed indicators that share the same one-dimensional key latent, replacing the classical covariance-rank constraint with the HRC rank-one constraint; second, given the estimated pure measurement model, we use pure indicators as surrogates to recover the causal structure among the latent variables up to Markov equivalence. Note that our contribution here is not a new latent-location or causal-discovery algorithm, but the use of HRC as a new building block compatible with existing algorithms.

% Under this setting, our goal is not to introduce a new clustering or causal discovery pipeline from scratch, but to show that the proposed Hessian Rank Constraint can serve as a nonlinear rank-based tool that is compatible with existing latent-variable discovery algorithms. Specifically, we demonstrate two applications: first, HRC can replace the classical covariance-rank constraint in one-factor causal clustering to locate latent variables by grouping observed indicators that share the same one-dimensional key latent; second, once a pure measurement model is obtained, HRC can be used with pure indicators as surrogates to test conditional independencies among latent variables and recover their causal structure up to Markov equivalence.

% \vspace{-1mm}
\subsection{Locating Latent Variables via HRC Clustering}
% \vspace{-1mm}
\begin{wrapfigure}{r}{0.6\textwidth}
\centering
% \vspace{-8pt}
\scalebox{0.58}{
\begin{tikzpicture}[
    >={Stealth[length=4mm,width=2.8mm]},
    line width=1pt,
    latent/.style={
        draw,
        circle,
        minimum size=12mm,
        inner sep=0pt,
        font=\large\bfseries
    },
    obs/.style={
        draw,
        circle,
        minimum size=11mm,
        inner sep=0pt,
        fill=gray!20,
        font=\large\bfseries
    },
    groupbox/.style={
        draw,
        rounded corners,
        dashed,
        inner sep=6pt
    }
]

% latent variables
\node[latent] (L1) at (0,0) {$L_1$};
\node[latent] (L2) at (4.8,0) {$L_2$};
\node[latent] (L3) at (9.6,0) {$L_3$};

% latent causal relations
\draw[->] (L1) -- (L2);
\draw[->] (L2) -- (L3);

% observed variables for L1
\node[obs] (X1) at (-1.8,-2.3) {$X_1$};
\node[obs] (X2) at (-0.6,-2.3) {$X_2$};
\node[obs] (X3) at ( 0.6,-2.3) {$X_3$};
\node[obs] (X4) at ( 1.8,-2.3) {$X_4$};

% observed variables for L2
\node[obs] (X5) at (3.6,-2.3) {$X_5$};
\node[obs] (X6) at (4.8,-2.3) {$X_6$};
\node[obs] (X7) at (6.0,-2.3) {$X_7$};

% observed variables for L3
\node[obs] (X8)  at (8.4,-2.3) {$X_8$};
\node[obs] (X9)  at (9.6,-2.3) {$X_9$};
\node[obs] (X10) at (10.8,-2.3) {$X_{10}$};

% measurement edges
\draw[->] (L1) -- (X1);
\draw[->] (L1) -- (X2);
\draw[->] (L1) -- (X3);
\draw[->] (L1) -- (X4);

\draw[->] (L2) -- (X5);
\draw[->] (L2) -- (X6);
\draw[->] (L2) -- (X7);

\draw[->] (L3) -- (X8);
\draw[->] (L3) -- (X9);
\draw[->] (L3) -- (X10);

% cluster boxes
\node[groupbox, fit=(X1)(X2)(X3)(X4)] (C1) {};
\node[groupbox, fit=(X5)(X6)(X7)] (C2) {};
\node[groupbox, fit=(X8)(X9)(X10)] (C3) {};

% cluster labels
\node[font=\Large] at (C1.south) [below=4pt] {$C_1$: pure cluster of $L_1$};
\node[font=\Large] at (C2.south) [below=4pt] {$C_2$: pure cluster of $L_2$};
\node[font=\Large] at (C3.south) [below=4pt] {$C_3$: pure cluster of $L_3$};

\end{tikzpicture}
}
% \vspace{-5pt}
\caption{Illustration of HRC-based latent location and latent causal discovery. The observed variables form three pure measurement clusters $C_1,C_2,C_3$, corresponding to the key latents $L_1,L_2,L_3$, respectively. The latent variables may be causally related, e.g., $L_1\to L_2\to L_3$.}
\label{fig:hrc_location_example}
% \vspace{-5pt}
\end{wrapfigure}
The first task is latent-variable location, namely identifying which observed variables are associated with the same key latent. We follow the Find One Factor Clusters (FOFC in short) framework~\citep{kummerfeld2016causal} and adapt it to the nonlinear setting using the HRC constraint. At a high level, the algorithm searches over triples and quartets of observed variables. A candidate triple is retained as pure only when adding another measured variable produces the rank pattern expected from a one-factor measurement structure; otherwise, the triple is rejected as mixed. In our version, this rank pattern is tested by HRC: a quartet is accepted as a nonlinear one-factor quartet if all three pairwise partitions have cross-Hessian rank one. Retained triples are then merged into maximal clusters, each of which is interpreted as the location of one latent variable. Please refer to Appendix~\ref{app:fofc_details} for more details of the FOFC algorithm.

To make this procedure concrete, we further provide an example in Figure~\ref{fig:hrc_location_example}. The observed variables form three pure clusters: $C_1=\{X_1,X_2,X_3,X_4\}$ for $L_1$, $C_2=\{X_5,X_6,X_7\}$ for $L_2$, and $C_3=\{X_8,X_9,X_{10}\}$ for $L_3$, while the latent variables themselves may be causally related, e.g., $L_1\to L_2\to L_3$. For example, the triple $\{X_1,X_2,X_3\}$ is retained because adding $X_4$ forms a rank-one quartet explained by the single key latent $L_1$; similarly, triples inside $C_2$ and $C_3$ are retained and then merged into their corresponding maximal clusters.
The above clustering procedure is motivated by two theoretical ingredients. First, Theorem~\ref{thm:cross-hessian-constraints} implies a rank-one cross-Hessian for pairwise splits whose minimum latent separator is the same one-dimensional key latent, under its stated assumptions. This provides the oracle rank-one criterion used by the nonlinear FOFC analogue, but does not by itself prove end-to-end correctness of the clustering procedure. Second, Proposition~\ref{pro:unique_latent} establishes uniqueness of the latent explanation conditional on a recovered pure cluster. Accordingly, the final HRC-PC theorem states the downstream HRC-PC guarantee conditional on correct recovery of the pure measurement model.
\begin{proposition}[\textbf{Uniqueness of the Key Latent} \citep{kummerfeld2016causal}]
\label{pro:unique_latent}Let $C\subseteq\mathcal X$ be a set of observed variables with $|C|\geq 2$. Suppose that $L_1$ is a key latent for $C$. Under the faithfulness and non-degeneracy conditions of the measurement model, there does not exist another latent variable $L_2\neq L_1$, with $L_2$ not perfectly correlated with $L_1$, such that $L_2$ is also a key latent for the same cluster $C$.
% \vspace{-1mm}
\end{proposition}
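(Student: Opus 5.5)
We argue by contradiction, using d-separation together with the faithfulness and non-degeneracy conditions of the measurement model. Suppose $L_1$ and $L_2\neq L_1$ are both key latents for $C$, with $|C|\ge 2$. By Definition~\ref{def:pure}, each $X_i\in C$ has $L_1$ as its only latent parent, has no observed parents, and shares no correlated error. For $L_2$ also to be a key latent, it would have to be the unique latent parent of every $X_i\in C$. Since a variable has only one set of parents, the only way this can be consistent is if $L_2$ and $L_1$ play the same role for $C$. Read literally, the definition therefore already forces $L_1=L_2$.

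The substantive content, and what we would actually prove, is the weaker statement that no latent $L_2$ can serve as a one-dimensional separator or common source for $C$ without being perfectly correlated with $L_1$. We proceed in three steps.

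\textbf{Step 1.} Take two distinct members $X_a,X_b\in C$. Because both are pure indicators of $L_1$, the set $\{L_1\}$ d-separates $X_a$ from $X_b$. Under faithfulness, $X_a\not\perp X_b$, since they share the parent $L_1$ and non-degeneracy makes the edges effective. Hence the minimum separator has cardinality one. By Theorem~\ref{thm:cross-hessian-constraints} under the stated assumptions, $\operatorname{rank}(\mathbf H_{X_a,X_b})=1$.

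\textbf{Step 2.} Suppose $L_2$ is also a key latent for $C$, i.e.\ $X_a\perp X_b\mid L_2$ for every pair in $C$, with $L_2$ the common source. Consider the trek structure. Every path between $X_a$ and $X_b$ passes through $L_1$, the sole parent of each. So for $\{L_2\}$ to block the path $X_a\leftarrow L_1\rightarrow X_b$, we need $L_2=L_1$. The alternative is that conditioning on $L_2$ renders $L_1$ effectively determined, i.e.\ $L_1$ is a measurable function of $L_2$ almost surely. Faithfulness rules out accidental independences, so only this determinism case remains.

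\textbf{Step 3.} Invoke the non-degeneracy condition of the measurement model, in particular that $\operatorname{Cov}(L_1\mid L_2)$ is nondegenerate unless $L_1$ and $L_2$ are perfectly related. Conditional on $L_2$, the residual variation of $L_1$ would then still couple $X_a$ and $X_b$, so the conditional cross-Hessian decomposition~(\ref{equ:hessian_decompose}) with $Z=L_2$ would be nonzero, contradicting the separation. Hence $L_2$ must be perfectly correlated with $L_1$, which the proposition excludes.

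The main obstacle is Step 2, which makes ``faithfulness'' precise in the nonlinear setting. We would state it as: a conditional independence $X_a\perp X_b\mid L_2$ holds only if $\{L_2\}$ d-separates them in $\mathcal G$. Since $L_1$ is the unique parent of both, d-separation by a single node requires that node to be $L_1$ itself. All remaining cases then reduce to perfect correlation with $L_1$, which follows the original argument of \citet{kummerfeld2016causal}.
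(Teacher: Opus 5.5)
\paragraph{Verdict.} The paper does not prove this proposition; it cites \citet{kummerfeld2016causal}. Against Definition~\ref{def:pure}, your opening observation is already a complete proof. For any $X\in C$, being a pure indicator of both $L_1$ and $L_2$ makes each of them \emph{the only} latent parent of $X$. Hence $L_1=L_2$, contradicting $L_2\neq L_1$. Faithfulness, non-degeneracy, the perfect-correlation clause and $|C|\geq 2$ are not even needed.

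\paragraph{The observational reading.} Suppose you instead read ``$L_2$ is a key latent'' as $X_a\perp\!\!\!\perp X_b\mid L_2$ for all pairs in $C$. This is not a weaker claim, since it ranges over more candidates $L_2$. The correct argument is the one in your final paragraph. Faithfulness turns the independence into d-separation by $\{L_2\}$. The path $X_a\leftarrow L_1\rightarrow X_b$ has the noncollider $L_1$ as its only interior node, so $\{L_2\}$ blocks it only if $L_2=L_1$. That finishes the proof, and no cases remain.

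\paragraph{The gap.} You set these arguments aside and commit to Steps 1--3, which do not work.

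\emph{Step 1.} It invokes Theorem~\ref{thm:cross-hessian-constraints}, which needs Assumptions~\ref{assump:regularity} and~\ref{assump:local-linearity}. Neither is a hypothesis of the proposition, and the conclusion $\operatorname{rank}(\mathbf H_{X_a,X_b})=1$ is never used afterwards.

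\emph{Step 2.} The ``determinism alternative'' cannot survive your own hypothesis. If $L_1$ were a function of $L_2$, then $X_a\perp\!\!\!\perp X_b\mid L_2$ would hold although $\{L_2\}$ does not d-separate $X_a$ and $X_b$. That is precisely a faithfulness violation, so faithfulness cannot leave this case over.

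\emph{Step 3.} It fails on its own terms, for three reasons.
\begin{enumerate}
\item The condition on $\operatorname{Cov}(L_1\mid L_2)$ is not a non-degeneracy condition stated anywhere in the paper.
\item Equation~(\ref{equ:hessian_decompose}) with $Z=L_2$ is derived from $p_{A,B\mid Z}=p_{A\mid Z}p_{B\mid Z}$, so it presupposes the separation you are trying to refute.
\item A nonzero rank-one $\mathbf H_{A,B}$ is exactly what separation by a single latent predicts, so nothing is contradicted.
\end{enumerate}
What you would actually need is the cross-Hessian of $\log p_{X_a,X_b\mid L_2}$. Its factorization runs through $L_1$ rather than $L_2$. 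It would require Assumption~\ref{assump:local-linearity} for $L_1$ and $\operatorname{Var}(L_1\mid X_a,X_b,L_2)>0$, and again neither is a hypothesis here.

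\emph{Closing inference.} ``$L_1$ is a measurable function of $L_2$'' does not give perfect correlation in nonlinear models. Take $L_2\sim\mathcal N(0,1)$ and $L_1=L_2^3$. Then $L_1$ is determined by $L_2$, yet $\operatorname{Corr}(L_1,L_2)=3/\sqrt{15}<1$. So the step to ``perfectly correlated'' is a non sequitur. Without faithfulness, your reformulated claim would admit exactly such counterexamples.

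\paragraph{Fix.} Drop Steps 1 and 3 and the determinism discussion. Keep either the definitional argument or the faithfulness one-liner.
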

This proposition ensures that the latent explanation of a recovered pure cluster is unique up to perfect correlation. Therefore, once HRC clustering identifies a pure one-factor cluster, the cluster can be interpreted as the location of a distinct latent variable, rather than as an ambiguous group that could be assigned to multiple unrelated latents.

% This lemma guarantees that once HRC clustering identifies a pure one-factor cluster, the latent variable associated with this cluster is unique up to perfect correlation. Hence, the output clusters can be interpreted as the locations of distinct latent variables, rather than ambiguous groups that could be assigned to multiple unrelated latents. Combining this uniqueness property with the rank-one characterization from Theorem~2, the first stage produces a pure measurement model that can be used as input to the latent causal discovery step.

\subsection{Learning Causal Structure among Latent Variables}

After the first stage, each recovered cluster $C_k$ provides measured indicators associated with a key latent $L_k$. The second task is to learn the causal structure among these latent variables. We use pure indicators as observed surrogates for the corresponding latent variables. To test latent conditional independence, we construct two observed groups from these indicators and check whether their cross-Hessian rank equals the dimension of the conditioning latent variables.
% This idea follows rank-based latent causal discovery \citep{silva2006learning}, but uses cross-Hessian rank rather than covariance rank.
% 5.5Following the idea of MIMBuild and its PC-style variants \citep{silva2006learning}, 这句话拿掉，然后说我们的basic idea是xxxx
%  In our nonlinear setting, the classical
% covariance-rank test is replaced by the HRC rank test.这句话去掉，包括后面

In general, to test whether two latent variables $L_i$ and $L_j$ are conditionally independent given a latent set $Q\subseteq\mathcal L\setminus\{L_i,L_j\}$ with $|Q|=q$, we construct two disjoint observed groups $A$ and $B$ from their measured indicators. Specifically, $A$ contains one indicator of $L_i$, $B$ contains one indicator of $L_j$, and both $A$ and $B$ contain one indicator for each latent variable in $Q$. Thus, each conditioning latent is represented on both sides of the partition, while the two endpoint latents are represented on different sides. Keeping the original rank-based procedure, we simply replace the covariance-rank condition with the Hessian-rank condition by checking whether the cross-Hessian $\mathbf H_{A,B}$ has rank $q$.

For example, in Figure~\ref{fig:hrc_location_example}, the recovered clusters $C_1,C_2,C_3$ locate the key latents $L_1,L_2,L_3$, respectively. To test whether $L_1 \perp\!\!\!\perp L_3 \mid L_2$, we choose measured variables from the corresponding clusters and split them into two observed groups. Specifically, we may set $A=\{X_1,X_5\}$ and $B=\{X_8,X_6\}$, where $X_1$ and $X_8$ are indicators of the endpoint latents $L_1$ and $L_3$, while $X_5$ and $X_6$ are two indicators of the conditioning latent $L_2$ placed on different sides of the partition. We then determine whether the rank of $\mathbf H_{A,B}$ equals one, corresponding to the one-dimensional conditioning set $\{L_2\}$. The validity of this surrogate rank test is formalized as follows.

% \begin{theorem}[Latent Conditional Independence via HRC]
% Assume a correctly identified pure nonlinear one-factor measurement model satisfying the assumptions of Theorem~2. Let $L_i$ and $L_j$ be two distinct latent variables, and let $Q\subseteq L\setminus\{L_i,L_j\}$ be a set of $q$ latent variables. Suppose that each latent variable in $Q$ has at least two pure indicators. Let $A$ and $B$ be disjoint observed groups constructed as above. Then
% \begin{equation}
% \small
%     L_i \perp\!\!\!\perp L_j \mid Q
% \quad\Longleftrightarrow\quad
% \mathrm{rank}(H_{A,B})=q .
% \end{equation}
% \end{theorem}
\begin{theorem}[\textbf{Latent Conditional Independence via HRC}]
\label{thm:latent-ci-hrc}
Assume a correctly identified pure nonlinear one-factor measurement model whose latent distribution is Markov and faithful to its latent DAG. Let $L_i$ and $L_j$ be distinct latent variables, and let
$Q\subseteq\mathcal L\setminus\{L_i,L_j\}$ with $|Q|=q$. Suppose every latent variable in $Q$ has at least two pure indicators. Construct disjoint observed groups $A$ and $B$ so that $A$ contains one pure indicator of $L_i$, $B$ contains one pure indicator of $L_j$, and each latent in $Q$ is represented by one pure indicator in each group. Suppose Assumptions~\ref{assump:regularity}--\ref{assump:nondegeneracy} hold for these groups with respect to their minimum latent separator. Then
\begin{equation}
L_i\perp\!\!\!\perp L_j\mid Q
\quad\Longleftrightarrow\quad
\operatorname{rank}\!\left(\mathbf H_{A,B}\right)=q.
\end{equation}
\end{theorem}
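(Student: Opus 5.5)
The plan is to reduce both directions to Theorem~\ref{thm:cross-hessian-constraints}. Once the minimum-cardinality latent d-separator of $A$ and $B$ in the full graph $\mathcal G$ is identified, the rank of $\mathbf H_{A,B}$ equals its cardinality $d^*$. The statement then follows from the combinatorial claim
\[
d^*=q \iff L_i\perp\!\!\!\perp L_j\mid Q,
\]
together with $d^*\ge q$ in all cases.

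\textbf{Lower bound $d^*\ge q$.} In a pure one-factor measurement model each observed indicator has exactly one parent, its key latent, and no observed parents. Every path from an indicator $X\in A$ to an indicator $X'\in B$ therefore has the form $X\leftarrow L_X\cdots L_{X'}\to X'$. For each $L_k\in Q$, $A$ contains an indicator $X_k^A$ and $B$ contains an indicator $X_k^B$, both children of $L_k$. The path $X_k^A\leftarrow L_k\to X_k^B$ is blocked by a set $S$ of latents only if $L_k\in S$. Since the latents in $Q$ are pairwise distinct, any latent separator must contain all of $Q$, which gives $d^*\ge q$.

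\textbf{Direction ($\Rightarrow$).} Suppose $L_i\perp\!\!\!\perp L_j\mid Q$. I will show that $Q$ itself d-separates $A$ and $B$ in $\mathcal G$. Any path between an indicator in $A$ and one in $B$ reduces, after stripping the terminal measurement edges, to a path between the corresponding key latents in the latent DAG. The indicators themselves are leaves and never act as colliders or intermediate nodes on such paths.
\begin{itemize}
\item If either key latent lies in $Q$, the path is blocked at that latent, because the latent is a non-collider on the path (it emits an edge to the indicator).
\item Otherwise the key latents are $L_i$ and $L_j$. By faithfulness of the latent distribution to the latent DAG, the conditional independence $L_i\perp\!\!\!\perp L_j\mid Q$ gives d-separation of $L_i$ and $L_j$ by $Q$ in the latent DAG. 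Hence the latent subpath is blocked.
\end{itemize}
One point needs checking: a collider on the latent subpath whose descendant lies in $Q$ is unaffected by appending indicator leaves, since indicators are not in the conditioning set. So $Q$ is a latent separator of cardinality $q$. Combined with the lower bound, it is minimum, and Theorem~\ref{thm:cross-hessian-constraints} gives $\operatorname{rank}(\mathbf H_{A,B})=q$.

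\textbf{Direction ($\Leftarrow$).} Suppose $\operatorname{rank}(\mathbf H_{A,B})=q$. By Theorem~\ref{thm:cross-hessian-constraints}, $d^*=q$. Let $\mathcal Z^*$ be a minimum separator. By the lower-bound argument $Q\subseteq\mathcal Z^*$, and $|\mathcal Z^*|=q=|Q|$, so $\mathcal Z^*=Q$. In particular $Q$ blocks every path from the $L_i$-indicator in $A$ to the $L_j$-indicator in $B$. Now suppose some path between $L_i$ and $L_j$ in the latent DAG were d-connecting given $Q$. Extending it by the two measurement edges would give a d-connecting path between the indicators, because $L_i,L_j\notin Q$ are non-colliders at the ends. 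This is a contradiction, so $L_i$ and $L_j$ are d-separated by $Q$, and the Markov property gives $L_i\perp\!\!\!\perp L_j\mid Q$.

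\textbf{Main obstacle.} The delicate step is the bookkeeping that identifies d-separation in $\mathcal G$ with d-separation in the latent DAG. Two cases need care: paths that pass through latent variables not in the recovered model, and colliders whose descendants include indicators or members of $Q$. I would handle this with a short lemma: under purity, for latents $U,W\notin S$ with indicators $X_U,X_W$, the indicators $X_U,X_W$ are d-separated by $S$ if and only if $U,W$ are d-separated by $S$. The proof uses the facts that indicators have no children and are never in $S$. The other hypothesis to confirm is that Assumptions~\ref{assump:regularity}--\ref{assump:nondegeneracy} are stated "with respect to the minimum latent separator", so Theorem~\ref{thm:cross-hessian-constraints} applies in both directions.
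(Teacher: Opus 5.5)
Your proposal is correct and follows essentially the same route as the paper's proof. In both, the forks $X_k^A\leftarrow L_k\rightarrow X_k^B$ force every latent separator to contain $Q$; faithfulness plus stripping the leaf measurement edges shows that $Q$ separates $A$ and $B$, so Theorem~\ref{thm:cross-hessian-constraints} gives rank $q$; and conversely rank $q$ forces $\mathcal Z^*=Q$, after which any active latent path between $L_i$ and $L_j$ would lift to an active path between the groups. Your explicit leaf lemma only makes precise the bookkeeping the paper does inline: indicators have no children and are never conditioned on, so d-separation of indicators coincides with d-separation of their key latents in the latent DAG.
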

% 5.5Let $A$ and $B$ be disjoint observed groups constructed as above.有点模糊，A，B没有提前定义
Proof can be found in Appendix \ref{app:the5}. This theorem provides a conditional-independence test for latent variables using only observed surrogate groups. Instead of directly testing independence among unobserved variables, we examine the rank of the cross-Hessian block between the corresponding measured indicators. When this rank equals $|Q|$, the dependence between the two observed groups is mediated only by the conditioning latent set $Q$, which corresponds to $L_i \perp\!\!\!\perp L_j\mid Q$. We then incorporate this test into a PC-style constraint-based search over the recovered latent variables, which we call HRC-PC. Its consistency guarantee is stated below.

\begin{theorem}[\textbf{Correctness of HRC-PC}]
\label{thm:hrc-pc-correctness}
\textcolor{black}{Suppose Assumptions~\ref{assump:regularity}--\ref{assump:nondegeneracy} hold for every HRC test performed by HRC-PC, and the latent DAG satisfies the causal Markov and ordinary causal faithfulness assumptions. If HRC-based FOFC correctly recovers the pure measurement clusters and their key latents with the indicators required by HRC-PC, and the empirical HRC rank tests are consistent, then HRC-PC asymptotically recovers the Markov equivalence class of the latent DAG.}
\end{theorem}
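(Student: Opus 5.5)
The plan is to reduce the statement to the classical correctness argument for the PC algorithm with a conditional-independence oracle. Concretely, I will show that, under the stated hypotheses, every query HRC-PC issues to its latent CI test returns, with probability tending to one, the same answer as the true CI oracle on the latent distribution. The standard PC argument under causal Markov and faithfulness then yields the Markov equivalence class.

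\textbf{Step 1: the queries are well defined.} HRC-FOFC is assumed to correctly recover the pure clusters, their key latents, and enough indicators: one indicator for each endpoint, and two distinct indicators for each conditioning latent. Given this, every query $(L_i,L_j,Q)$ posed by PC can be realized by disjoint groups $A,B$ built as in Theorem~\ref{thm:latent-ci-hrc}. Proposition~\ref{pro:unique_latent} ensures each recovered cluster corresponds to a unique latent up to perfect correlation. Hence the latent variables that PC manipulates are exactly the true latent nodes, and the recovered node set is correct.

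\textbf{Step 2: population equivalence.} The Assumptions~\ref{assump:regularity}--\ref{assump:nondegeneracy} are assumed to hold for every test performed. Theorem~\ref{thm:latent-ci-hrc} then gives $L_i\perp\!\!\!\perp L_j\mid Q \iff \operatorname{rank}(\mathbf H_{A,B})=q$ for each such query. So the population HRC test is a perfect CI oracle on the latent distribution.

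\textbf{Step 3: finite-sample consistency.} PC on a fixed latent node set performs at most finitely many tests. The number is bounded by roughly $n^2 2^{n-2}$, and the choice of indicators is fixed for each query. Since each empirical rank test is consistent, a union bound over this finite family shows that the probability that all tests agree with their population values tends to one. On that event, the sequence of tests PC actually runs, which is adaptive and depends on earlier outcomes, coincides with the oracle run, by induction over the steps of PC.

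\textbf{Step 4: invoke PC correctness.} With oracle answers and Markov plus faithfulness of the latent DAG, the skeleton phase removes exactly the non-adjacent pairs. Separating sets are correct, so v-structure orientation is correct, and Meek's rules produce the CPDAG \citep{spirtes2000causation}.

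The main obstacle is Step 3. The issue is that tests are chosen adaptively, and that each HRC rank estimate aggregates pointwise decisions by a modal value. My first approach is to argue over the finite set of all possible queries, rather than only those actually executed, so that adaptivity is harmless once all of them are simultaneously correct. Any needed uniformity over evaluation points is absorbed into the hypothesis that the empirical HRC rank tests are consistent.
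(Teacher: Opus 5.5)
Your proposal is correct and follows the same route as the paper's proof. Both combine Theorem~\ref{thm:latent-ci-hrc} with consistency of the empirical rank tests to reduce HRC-PC to oracle PC on the recovered latents, then invoke PC correctness under Markov and faithfulness; your explicit finite union bound handling the adaptive choice of tests makes rigorous a step the paper states in one line, while your appeal to Proposition~\ref{pro:unique_latent} is unnecessary because correct recovery of the key latents is already a hypothesis.
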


Please see Appendix~\ref{app:the6} for the proof. \textcolor{black}{The result follows by combining correct recovery of the pure measurement clusters by HRC-based FOFC with consistent HRC rank tests. Once the key latents are located, HRC-PC applies standard constraint-based search using the HRC tests as conditional-independence decisions. Under the stated assumptions, it therefore asymptotically recovers the Markov equivalence class of the latent DAG. }
% \begin{algorithm}[htbp]
% \caption{Cross-Hessian PC-MIMBUILD (CH-PC-MIMBUILD)}
% \label{alg:ch_pc_mimbuild_main}
% \textbf{Input:} Pure clusters $\mathbf{C} = \{C_1, \dots, C_k\}$ representing latents $\mathbf{L} = \{L_1, \dots, L_k\}$ ($|C_i| \ge 2$), Rank tolerance $\epsilon$.\\
% \textbf{Output:} CPDAG $G_{struct}$ representing the structural model over $\mathbf{L}$.
% \begin{algorithmic}[1]
% \STATE $G, SepSet \gets \textsc{SkeletonDiscovery}(\mathbf{L}, \mathbf{C}, \epsilon)$
% \STATE $G_{struct} \gets \textsc{EdgeOrientation}(G, SepSet)$
% \RETURN $G_{struct}$
% \end{algorithmic}

% \vspace{0.2cm}
% \textit{*Note on Procedures:}\\
% \textit{\textsc{SkeletonDiscovery}: Iteratively tests latent conditional independencies using the Cross-Hessian rank test to find the undirected skeleton and separation sets.}\\
% \textit{\textsc{EdgeOrientation}: Applies unshielded collider identification and Meek's rules to establish edge directions.}
% \end{algorithm}
\begin{table}[t]
\centering
\caption{Results under different functional forms.}
\vspace{-2mm}
\renewcommand{\arraystretch}{1.15}
\setlength{\tabcolsep}{5pt}
\resizebox{\linewidth}{!}{%
\begin{tabular}{lcccccccccc}
\toprule
& \multicolumn{2}{c}{HRC} & \multicolumn{2}{c}{Prashant et al. [2025]} & \multicolumn{2}{c}{Kong et al. [2023] } & \multicolumn{2}{c}{Silva et al. [2006]} & \multicolumn{2}{c}{Kummerfeld and Ramsey [2016]} \\
\cmidrule(lr){2-3} \cmidrule(lr){4-5} \cmidrule(lr){6-7} \cmidrule(lr){8-9} \cmidrule(lr){10-11}
& F1$\uparrow$ & SHD$\downarrow$ & F1$\uparrow$ & SHD$\downarrow$ & F1$\uparrow$ & SHD$\downarrow$ & F1$\uparrow$ & SHD$\downarrow$ & F1$\uparrow$ & SHD$\downarrow$ \\
\midrule
Mixed
& \textbf{0.985} {\scriptsize$\pm$ 0.052} & \textbf{0.093} {\scriptsize$\pm$ 0.314}
& 0.437 {\scriptsize$\pm$ 0.326} & 2.290 {\scriptsize$\pm$ 0.907}
& 0.274 {\scriptsize$\pm$ 0.269} & 2.953 {\scriptsize$\pm$ 1.011}
& 0.224 {\scriptsize$\pm$ 0.284} & 3.363 {\scriptsize$\pm$ 1.276}
& 0.120 {\scriptsize$\pm$ 0.244} & 3.060 {\scriptsize$\pm$ 0.852} \\
Sin
& \textbf{0.966} {\scriptsize$\pm$ 0.088} & \textbf{0.193} {\scriptsize$\pm$ 0.507}
& 0.410 {\scriptsize$\pm$ 0.318} & 2.387 {\scriptsize$\pm$ 0.856}
& 0.304 {\scriptsize$\pm$ 0.248} & 2.827 {\scriptsize$\pm$ 0.941}
& 0.102 {\scriptsize$\pm$ 0.196} & 3.593 {\scriptsize$\pm$ 1.012}
& 0.068 {\scriptsize$\pm$ 0.196} & 3.150 {\scriptsize$\pm$ 0.810} \\
Sinh
& \textbf{0.996} {\scriptsize$\pm$ 0.026} & \textbf{0.023} {\scriptsize$\pm$ 0.151}
& 0.448 {\scriptsize$\pm$ 0.314} & 2.293 {\scriptsize$\pm$ 0.866}
& 0.237 {\scriptsize$\pm$ 0.263} & 3.163 {\scriptsize$\pm$ 0.980}
& 0.112 {\scriptsize$\pm$ 0.217} & 3.626 {\scriptsize$\pm$ 1.028}
& 0.110 {\scriptsize$\pm$ 0.235} & 3.107 {\scriptsize$\pm$ 0.934} \\
Softplus
& \textbf{0.974} {\scriptsize$\pm$ 0.083} & \textbf{0.167} {\scriptsize$\pm$ 0.469}
& 0.466 {\scriptsize$\pm$ 0.321} & 2.217 {\scriptsize$\pm$ 0.905}
& 0.194 {\scriptsize$\pm$ 0.265} & 3.293 {\scriptsize$\pm$ 0.954}
& 0.422 {\scriptsize$\pm$ 0.337} & 2.826 {\scriptsize$\pm$ 1.511}
& 0.101 {\scriptsize$\pm$ 0.231} & 3.153 {\scriptsize$\pm$ 0.883} \\
Tanh
& \textbf{0.950} {\scriptsize$\pm$ 0.105} & \textbf{0.303} {\scriptsize$\pm$ 0.632}
& 0.449 {\scriptsize$\pm$ 0.324} & 2.267 {\scriptsize$\pm$ 0.912}
& 0.307 {\scriptsize$\pm$ 0.252} & 2.803 {\scriptsize$\pm$ 0.956}
& 0.206 {\scriptsize$\pm$ 0.308} & 3.526 {\scriptsize$\pm$ 1.488}
& 0.107 {\scriptsize$\pm$ 0.234} & 3.063 {\scriptsize$\pm$ 0.899} \\
\midrule
Average
& \textbf{0.974} {\scriptsize$\pm$ 0.071} & \textbf{0.156} {\scriptsize$\pm$ 0.414}
& 0.442 {\scriptsize$\pm$ 0.320} & 2.291 {\scriptsize$\pm$ 0.889}
& 0.263 {\scriptsize$\pm$ 0.259} & 3.008 {\scriptsize$\pm$ 0.968}
& 0.213 {\scriptsize$\pm$ 0.297} & 3.387 {\scriptsize$\pm$ 1.313}
& 0.101 {\scriptsize$\pm$ 0.228} & 3.107 {\scriptsize$\pm$ 0.876} \\
\bottomrule
\end{tabular}%
}
\end{table}

\begin{table}[t]
\centering
% \vspace{-2mm}
\caption{Results under different latent dimensions.}
\vspace{-2mm}
\renewcommand{\arraystretch}{1.15}
\setlength{\tabcolsep}{5pt}
\resizebox{\linewidth}{!}{%
\begin{tabular}{c|cccccccccc}
\toprule
& \multicolumn{2}{c}{HRC} & \multicolumn{2}{c}{Prashant et al. [2025]} & \multicolumn{2}{c}{Kong et al. [2023]} & \multicolumn{2}{c}{Silva et al. [2006]} & \multicolumn{2}{c}{Kummerfeld and Ramsey [2016]} \\
\cmidrule(lr){2-3} \cmidrule(lr){4-5} \cmidrule(lr){6-7} \cmidrule(lr){8-9} \cmidrule(lr){10-11}
\makecell[c]{Dimension}
& F1$\uparrow$ & SHD$\downarrow$
& F1$\uparrow$ & SHD$\downarrow$
& F1$\uparrow$ & SHD$\downarrow$
& F1$\uparrow$ & SHD$\downarrow$
& F1$\uparrow$ & SHD$\downarrow$ \\
\midrule
% 3
% & \textbf{0.950} {\scriptsize$\pm$ 0.105} & \textbf{0.303} {\scriptsize$\pm$ 0.632}
% & 0.449 {\scriptsize$\pm$ 0.324} & 2.267 {\scriptsize$\pm$ 0.912}
% & 0.307 {\scriptsize$\pm$ 0.252} & 2.803 {\scriptsize$\pm$ 0.956}
% & 0.216 {\scriptsize$\pm$ 0.263} & 3.163 {\scriptsize$\pm$ 0.990}
% & 0.107 {\scriptsize$\pm$ 0.234} & 3.063 {\scriptsize$\pm$ 0.899} \\
4
& \textbf{0.962} {\scriptsize$\pm$ 0.099} & \textbf{0.203} {\scriptsize$\pm$ 0.493}
& 0.487 {\scriptsize$\pm$ 0.259} & 2.844 {\scriptsize$\pm$ 1.505}
& 0.502 {\scriptsize$\pm$ 0.238} & 2.404 {\scriptsize$\pm$ 1.053}
& 0.214 {\scriptsize$\pm$ 0.297} & 3.387 {\scriptsize$\pm$ 1.313}
& 0.254 {\scriptsize$\pm$ 0.362} & 3.500 {\scriptsize$\pm$ 1.743} \\
6
& \textbf{0.952} {\scriptsize$\pm$ 0.085} & \textbf{0.469} {\scriptsize$\pm$ 0.790}
& 0.417 {\scriptsize$\pm$ 0.176} & 5.333 {\scriptsize$\pm$ 1.155}
& 0.513 {\scriptsize$\pm$ 0.159} & 4.716 {\scriptsize$\pm$ 0.958}
& 0.216 {\scriptsize$\pm$ 0.220} & 5.546 {\scriptsize$\pm$ 1.648}
& 0.210 {\scriptsize$\pm$ 0.091} & 5.328 {\scriptsize$\pm$ 0.165} \\
8
& \textbf{0.937} {\scriptsize$\pm$ 0.100} & \textbf{0.972} {\scriptsize$\pm$ 1.021}
& 0.264 {\scriptsize$\pm$ 0.161} & 8.688 {\scriptsize$\pm$ 1.899}
& 0.184 {\scriptsize$\pm$ 0.130} & 8.732 {\scriptsize$\pm$ 1.244}
& 0.204 {\scriptsize$\pm$ 0.186} & 8.807 {\scriptsize$\pm$ 2.853}
& 0.191 {\scriptsize$\pm$ 0.061} & 8.452 {\scriptsize$\pm$ 0.337} \\
10
& \textbf{0.923} {\scriptsize$\pm$ 0.110} & \textbf{1.214} {\scriptsize$\pm$ 1.542}
& 0.154 {\scriptsize$\pm$ 0.105} & 12.538 {\scriptsize$\pm$ 1.866}
& 0.073 {\scriptsize$\pm$ 0.098} & 11.084 {\scriptsize$\pm$ 1.307}
& 0.163 {\scriptsize$\pm$ 0.138} & 11.276 {\scriptsize$\pm$ 3.176}
& 0.127 {\scriptsize$\pm$ 0.048} & 11.036 {\scriptsize$\pm$ 0.433} \\
\bottomrule
\end{tabular}%
}
\end{table}

\section{Experiments}
% \vspace{-1mm}
\subsection{Synthetic Experiment}
% \vspace{-1mm}
\textbf{Setup.} We evaluate HRC on synthetic nonlinear latent variable models with pure measurement variables. Each observed variable is generated from its latent parent through a nonlinear function. We consider five different nonlinear functions: $\sin$, $\sinh$, $\tanh$, softplus, and a mixed setting that assigns them independently across observed variables. We vary the latent dimensionality over $\{4,6,8,10\}$. For each dimension, we use 10 random latent graphs and 10 seeds per graph. We consider sample sizes $N\in\{200,1000,2000,\ldots,9000\}$. For latent variable location, we use F1 for evaluation. For latent causal discovery, we evaluate recovery of the latent Markov equivalence class using F1 and structural Hamming distance (SHD). We report the mean and standard deviation over all graph structures and random-seed combinations. Please refer to Appendix \ref{app:synthetic_data} for details.

\textbf{Baselines.} For the latent-variable location task, all methods use the FOFC framework; our method replaces the original covariance-rank constraint with the proposed Hessian Rank Constraint, while the baseline keeps the classical covariance-rank test. For latent causal discovery, we compare with \cite{prashant2025differentiable,kong2023identification,kummerfeld2016causal} and \cite{silva2006learning}. These methods represent recent nonlinear latent causal discovery approaches as well as classical covariance-rank-based methods for linear latent variable models and one-factor measurement models.

\textbf{Results.}
Figure 4 reports the clustering results for latent-variable location. Since both methods use the same FOFC framework, the comparison reflects the effect of the rank constraint. HRC achieves higher clustering F1 than the baseline, suggesting that HRC provides a better criterion for identifying pure measurement clusters under nonlinear mixing procedures.
\begin{wrapfigure}{r}{0.44\textwidth}
\vspace{-0.4\baselineskip}
    \centering
    \includegraphics[width=0.98\linewidth]{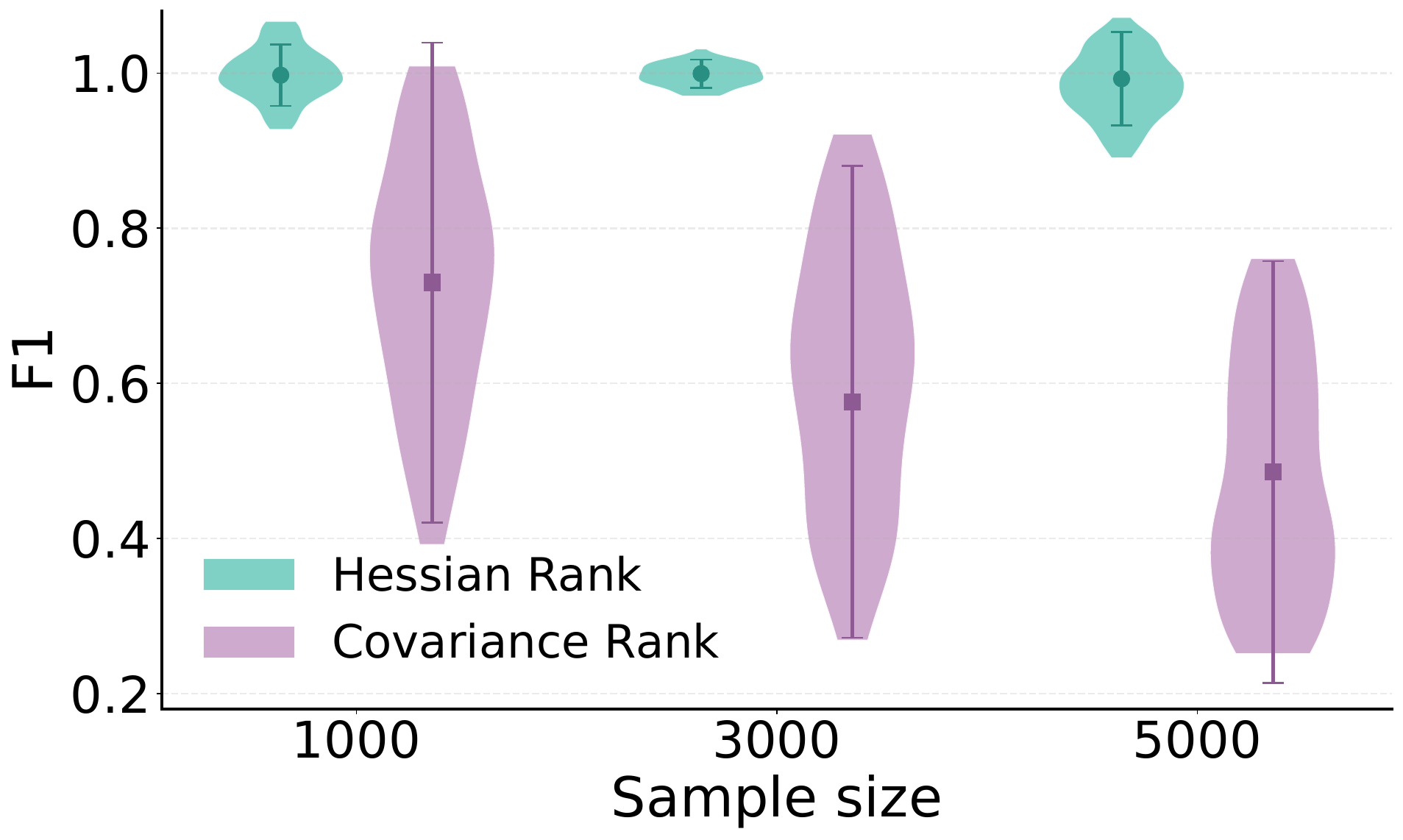}
    \caption{\small Clustering results between Hessian Rank constraint and covariance rank constraint.}
    \label{fig:clustering}
\vspace{-0.6\baselineskip}
\end{wrapfigure}
Tables 1 and 2 report the results for causal discovery among latent variables. HRC consistently outperforms all baselines across different nonlinear mechanisms and latent dimensions. In Table 1, HRC achieves the best results under every nonlinear measurement function, with an average F1 of 0.974 and an average SHD of 0.156, while the strongest baseline only reaches 0.442 F1 on average and has much larger SHD. This advantage remains stable across different nonlinearities, indicating that HRC can capture latent structural information beyond linear covariance dependence. Figure 4 further evaluates the effect of sample size. As the number of samples increases, HRC maintains high precision, recall, and F1. In contrast, the baselines remain substantially worse across the whole sample-size range, showing that HRC benefits more effectively from additional observations in nonlinear latent causal discovery. Please refer to Appendix \ref{app:more_exp} for experiments regarding varying noise levels and sampling regions of latent variables.

% Tables 1 and 2 show that HRC consistently outperforms all baselines across both nonlinear measurement functions and latent dimensions. In Table 1, HRC achieves the best results under every nonlinear mechanism, with an average F1 of 0.974 and an average SHD of 0.156, while the best-performing baseline on average reaches only 0.442 F1 with substantially larger SHD. This advantage remains stable across all tested nonlinearities, including the tanh setting, where HRC still achieves 0.950 F1 and 0.303 SHD. These results suggest that HRC captures a structural signal that is robust to heterogeneous nonlinear measurement functions.

% Table 2 further shows that HRC scales favorably with latent dimensionality. From 3 to 10 latent variables, HRC maintains high F1 scores and consistently achieves the lowest SHD among all methods. Although SHD increases as the latent graph becomes larger and more complex, the degradation of HRC is much milder than that of the baselines, whose F1 scores drop in higher-dimensional settings.

% Figure 2 shows that HRC also improves with sample size. As the number of samples increases from 1,000 to 9,000, HRC maintains high precision, recall, and F1, while its SHD steadily decreases. In contrast, the baselines remain substantially worse throughout the entire sample-size range. 
% This is consistent with the intuition behind HRC: with more data, the empirical cross-Hessian estimates become more stable, leading to more reliable rank estimation and more accurate recovery of the latent Markov equivalence class.
\begin{wrapfigure}{r}{0.44\textwidth}
\vspace{-0.4\baselineskip}
    \centering
    \includegraphics[width=0.98\linewidth]{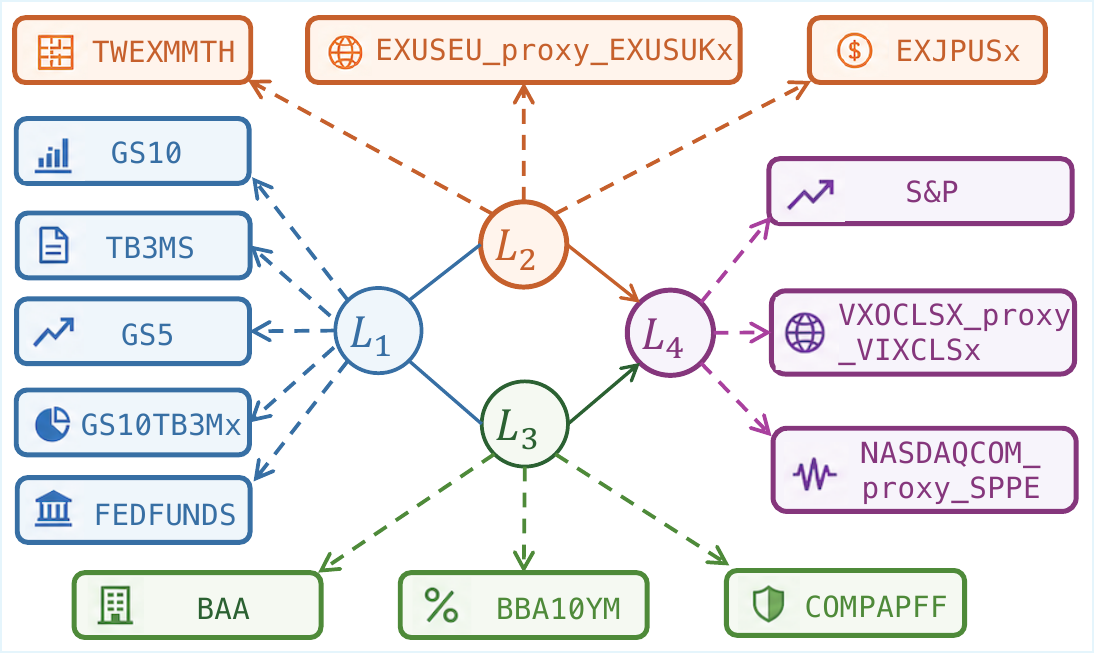}
    \caption{\small Recovered latent structure on the FRED-MD macroeconomic dataset.}
    \label{fig:real_world}
\vspace{-0.6\baselineskip}
\end{wrapfigure}
We further consider the real-world dataset, namely the FRED-MD macroeconomic database from the Federal Reserve Bank of St. Louis.\footnote{\url{https://www.stlouisfed.org/research/economists/mccracken/fred-databases}} We select 14 macroeconomic variables and group them into four latent factors according to economic domain knowledge. Specifically, FEDFUNDS, TB3MS, GS10, GS5, and GS10TB3Mx are used as indicators of an interest-rate factor; BAA, BAA10YM, and COMPAPFF are used as indicators of a credit factor; TWEXMMTH, EXUSEU, and EXJPUSx are used as indicators of an exchange-rate factor; and S\&P 500, NASDAQCOM, and VXOCLSX are used as indicators of an equity/volatility factor. More information can be found in Appendix \ref{app:real_data}.

Figure~\ref{fig:real_world} shows that HRC recovers an economically interpretable latent structure. The recovered factors correspond well to the predefined economic groups, including interest rates, credit conditions, exchange rates, and equity/volatility. Moreover, the learned latent graph connects the interest-rate factor with the credit and exchange-rate factors, which is consistent with the role of monetary and yield-curve conditions in shaping credit-market and currency movements. The credit and exchange-rate factors are further connected to the equity/volatility factor, suggesting that stock-market fluctuations are jointly associated with credit stress and foreign-exchange movements. These results indicate that HRC can recover meaningful latent structures from real-world data.

% \vspace{-1mm}
\section{Conclusion}
% \vspace{-1mm}
We proposed the Hessian Rank Constraint (HRC), a rank-based criterion for nonlinear latent causal discovery. HRC uses the cross-Hessian rank of the observed log-density to characterize latent separation, extending classical covariance-rank ideas beyond linear measurement models. Based on this criterion, we incorporated HRC into both latent-variable location and latent causal discovery: it can be used in the FOFC framework to recover pure measurement clusters, and further used in a PC-style procedure to learn causal relations among latent variables up to Markov equivalence. Experiments on synthetic data show that HRC consistently outperforms covariance-based methods and recent nonlinear baselines under different nonlinear functions, latent dimensions, and sample sizes. Results on the FRED-MD dataset further suggest that HRC can recover interpretable latent structures from real-world macroeconomic data.

\let\arxivoriginaltextcolor\textcolor
\renewcommand{\textcolor}[2]{%
  \ifstrequal{#1}{blue}{\arxivoriginaltextcolor{black}{#2}}{\arxivoriginaltextcolor{#1}{#2}}%
}
\hypersetup{linkcolor=black,citecolor=black,urlcolor=black}
\bibliographystyle{plainnat}
\bibliography{main}

@inproceedings{kummerfeld2016causal,
  title={Causal clustering for 1-factor measurement models},
  author={Kummerfeld, Erich and Ramsey, Joseph},
  booktitle={Proceedings of the 22nd ACM SIGKDD international conference on knowledge discovery and data mining},
  pages={1655--1664},
  year={2016}
}

@book{spirtes2000causation,
  title={Causation, prediction, and search},
  author={Spirtes, Peter and Glymour, Clark N and Scheines, Richard},
  year={2000},
  publisher={MIT press}
}

@article{zhang2008completeness,
  title={On the completeness of orientation rules for causal discovery in the presence of latent confounders and selection bias},
  author={Zhang, Jiji},
  journal={Artificial Intelligence},
  volume={172},
  number={16-17},
  pages={1873--1896},
  year={2008},
  publisher={Elsevier}
}

@article{colombo2012learning,
  title={Learning high-dimensional directed acyclic graphs with latent and selection variables},
  author={Colombo, Diego and Maathuis, Marloes H and Kalisch, Markus and Richardson, Thomas S},
  journal={The Annals of Statistics},
  pages={294--321},
  year={2012},
  publisher={JSTOR}
}

@article{akbari2021recursive,
  title={Recursive causal structure learning in the presence of latent variables and selection bias},
  author={Akbari, Sina and Mokhtarian, Ehsan and Ghassami, AmirEmad and Kiyavash, Negar},
  journal={Advances in Neural Information Processing Systems},
  volume={34},
  pages={10119--10130},
  year={2021}
}

@article{huang2022latent,
  title={Latent hierarchical causal structure discovery with rank constraints},
  author={Huang, Biwei and Low, Charles Jia Han and Xie, Feng and Glymour, Clark and Zhang, Kun},
  journal={Advances in neural information processing systems},
  volume={35},
  pages={5549--5561},
  year={2022}
}

@article{silva2006learning,
	title={Learning the structure of linear latent variable models},
	author={Silva, Ricardo and Scheine, Richard and Glymour, Clark and Spirtes, Peter},
	journal={Journal of Machine Learning Research},
	volume={7},
	number={Feb},
	pages={191--246},
	year={2006}
}

@inproceedings{xie2022identification,
  title={Identification of linear {{\textcolor{blue}{non-Gaussian}}} latent hierarchical structure},
  author={Xie, Feng and Huang, Biwei and Chen, Zhengming and He, Yangbo and Geng, Zhi and Zhang, Kun},
  booktitle={International Conference on Machine Learning},
  pages={24370--24387},
  year={2022},
  organization={PMLR}
}

@article{dong2023versatile,
  title={A versatile causal discovery framework to allow causally-related hidden variables},
  author={Dong, Xinshuai and Huang, Biwei and Ng, Ignavier and Song, Xiangchen and Zheng, Yujia and Jin, Songyao and Legaspi, Roberto and Spirtes, Peter and Zhang, Kun},
  journal={arXiv preprint arXiv:2312.11001},
  year={2023}
}

@article{shimizu2009estimation,
  title={Estimation of linear {{\textcolor{blue}{non-Gaussian}}} acyclic models for latent factors},
  author={Shimizu, Shohei and Hoyer, Patrik O and Hyv{\"a}rinen, Aapo},
  journal={Neurocomputing},
  volume={72},
  number={7-9},
  pages={2024--2027},
  year={2009},
  publisher={Elsevier}
}

@article{xie2020generalized,
  title={Generalized independent noise condition for estimating latent variable causal graphs},
  author={Xie, Feng and Cai, Ruichu and Huang, Biwei and Glymour, Clark and Hao, Zhifeng and Zhang, Kun},
  journal={Advances in neural information processing systems},
  volume={33},
  pages={14891--14902},
  year={2020}
}

@article{cai2019triad,
  title={Triad constraints for learning causal structure of latent variables},
  author={Cai, Ruichu and Xie, Feng and Glymour, Clark and Hao, Zhifeng and Zhang, Kun},
  journal={Advances in neural information processing systems},
  volume={32},
  year={2019}
}

@article{adams2021identification,
  title={Identification of partially observed linear causal models: Graphical conditions for the {{\textcolor{blue}{non-Gaussian}}} and heterogeneous cases},
  author={Adams, Jeffrey and Hansen, Niels and Zhang, Kun},
  journal={Advances in Neural Information Processing Systems},
  volume={34},
  pages={22822--22833},
  year={2021}
}

@inproceedings{chen2022identification,
  title={Identification of linear latent variable model with arbitrary distribution},
  author={Chen, Zhengming and Xie, Feng and Qiao, Jie and Hao, Zhifeng and Zhang, Kun and Cai, Ruichu},
  booktitle={Proceedings of the AAAI Conference on Artificial Intelligence},
  volume={36},
  number={6},
  pages={6350--6357},
  year={2022}
}

@inproceedings{anandkumar2013learning,
  title={Learning linear {{\textcolor{blue}{Bayesian}}} networks with latent variables},
  author={Anandkumar, Animashree and Hsu, Daniel and Javanmard, Adel and Kakade, Sham},
  booktitle={International Conference on Machine Learning},
  pages={249--257},
  year={2013},
  organization={PMLR}
}

@inproceedings{chen2025identification,
  title={Identification of Latent Confounders via Investigating the Tensor Ranks of the Nonlinear Observations},
  author={Chen, Zhengming and Xia, Yewei and Xie, Feng and Qiao, Jie and Hao, Zhifeng and Cai, Ruichu and Zhang, Kun},
  booktitle={Forty-second International Conference on Machine Learning},
  year={2025}
}

@article{cui2018learning,
  title={Learning the causal structure of copula models with latent variables},
  author={Cui, Ruifei and Groot, Perry and Schauer, Moritz and Heskes, Tom},
  year={2018},
  publisher={Corvallis: AUAI Press}
}

@inproceedings{zeng2021causal,
  title={Causal discovery with multi-domain {{\textcolor{blue}{LiNGAM}}} for latent factors},
  author={Zeng, Yan and Shimizu, Shohei and Cai, Ruichu and Xie, Feng and Yamamoto, Michio and Hao, Zhifeng},
  booktitle={Causal Analysis Workshop Series},
  pages={1--4},
  year={2021},
  organization={PMLR}
}

@article{li2023causal,
  title={Causal discovery from observational and interventional data across multiple environments},
  author={Li, Adam and Jaber, Amin and Bareinboim, Elias},
  journal={Advances in Neural Information Processing Systems},
  volume={36},
  pages={16942--16956},
  year={2023}
}

@article{sturma2023unpaired,
  title={Unpaired multi-domain causal representation learning},
  author={Sturma, Nils and Squires, Chandler and Drton, Mathias and Uhler, Caroline},
  journal={Advances in Neural Information Processing Systems},
  volume={36},
  pages={34465--34492},
  year={2023}
}

@article{zhang2024causal,
  title={Causal representation learning from multiple distributions: A general setting},
  author={Zhang, Kun and Xie, Shaoan and Ng, Ignavier and Zheng, Yujia},
  journal={arXiv preprint arXiv:2402.05052},
  year={2024}
}

@article{kivva2021learning,
  title={Learning latent causal graphs via mixture oracles},
  author={Kivva, Bohdan and Rajendran, Goutham and Ravikumar, Pradeep and Aragam, Bryon},
  journal={Advances in Neural Information Processing Systems},
  volume={34},
  pages={18087--18101},
  year={2021}
}

@article{kong2023identification,
  title={Identification of nonlinear latent hierarchical models},
  author={Kong, Lingjing and Huang, Biwei and Xie, Feng and Xing, Eric and Chi, Yuejie and Zhang, Kun},
  journal={Advances in Neural Information Processing Systems},
  volume={36},
  pages={2010--2032},
  year={2023}
}

@inproceedings{
prashant2025differentiable,
title={Differentiable Causal Discovery for Latent Hierarchical Causal Models},
author={Parjanya Prajakta Prashant and Ignavier Ng and Kun Zhang and Biwei Huang},
booktitle={The Thirteenth International Conference on Learning Representations},
year={2025},
url={https://openreview.net/forum?id=Bp0HBaMNRl}
}

@article{chen2017tutorial,
  title={A tutorial on kernel density estimation and recent advances},
  author={Chen, Yen-Chi},
  journal={Biostatistics \& Epidemiology},
  volume={1},
  number={1},
  pages={161--187},
  year={2017},
  publisher={Taylor \& Francis}
}

@article{mokhtarian2025recursive,
  title={Recursive causal discovery},
  author={Mokhtarian, Ehsan and Elahi, Sepehr and Akbari, Sina and Kiyavash, Negar},
  journal={Journal of Machine Learning Research},
  volume={26},
  number={61},
  pages={1--65},
  year={2025}
}

@article{kivva2022identifiability, title={Identifiability of deep generative models without auxiliary information}, author={Kivva, Bohdan and Rajendran, Goutham and Ravikumar, Pradeep and Aragam, Bryon}, journal={Advances in Neural Information Processing Systems}, volume={35}, pages={15687--15701}, year={2022} }

@article{spearman1928pearson,
	title={Pearson's contribution to the theory of two factors},
	author={Spearman, Charles},
	journal={British Journal of Psychology. General Section},
	volume={19},
	number={1},
	pages={95--101},
	year={1928},
	publisher={Wiley Online Library}
}

@article{Sullivant-T-separation,
	title={Trek separation for {{\textcolor{blue}{Gaussian}}} graphical models},
	author={Sullivant, Seth and Talaska, Kelli and Draisma, Jan and others},
	journal={The Annals of Statistics},
	volume={38},
	number={3},
	pages={1665--1685},
	year={2010},
	publisher={Institute of Mathematical Statistics}
}

@article{glymour2019review,
  title={Review of causal discovery methods based on graphical models},
  author={Glymour, Clark and Zhang, Kun and Spirtes, Peter},
  journal={Frontiers in genetics},
  volume={10},
  pages={524},
  year={2019},
  publisher={Frontiers Media SA}
}

@article{huang2020causal,
  title={Causal discovery from heterogeneous/nonstationary data},
  author={Huang, Biwei and Zhang, Kun and Zhang, Jiji and Ramsey, Joseph and Sanchez-Romero, Ruben and Glymour, Clark and Sch{\"o}lkopf, Bernhard},
  journal={Journal of Machine Learning Research},
  volume={21},
  number={89},
  pages={1--53},
  year={2020}
}

@article{zhu2019causal,
  title={Causal discovery with reinforcement learning},
  author={Zhu, Shengyu and Ng, Ignavier and Chen, Zhitang},
  journal={arXiv preprint arXiv:1906.04477},
  year={2019}
}

@book{pearl2009causality,
  title={Causality},
  author={Pearl, Judea},
  year={2009},
  publisher={Cambridge {{\textcolor{blue}{University Press}}}}
}

@article{hausman1999independence,
  title={Independence, invariance and the causal {{\textcolor{blue}{Markov}}} condition},
  author={Hausman, Daniel M and Woodward, James},
  journal={The British journal for the philosophy of science},
  volume={50},
  number={4},
  pages={521--583},
  year={1999},
  publisher={Oxford University Press}
}

@article{zhalama2017weakening,
  title={Weakening faithfulness: some heuristic causal discovery algorithms},
  author={Zhalama and Zhang, Jiji and Mayer, Wolfgang},
  journal={International journal of data science and analytics},
  volume={3},
  number={2},
  pages={93--104},
  year={2017},
  publisher={Springer}
}

@article{zhang2012kernel,
  title={Kernel-based conditional independence test and application in causal discovery},
  author={Zhang, Kun and Peters, Jonas and Janzing, Dominik and Sch{\"o}lkopf, Bernhard},
  journal={arXiv preprint arXiv:1202.3775},
  year={2012}
}

@inproceedings{ng2024score,
  title={Score-based causal discovery of latent variable causal models},
  author={Ng, Ignavier and Dong, Xinshuai and Dai, Haoyue and Huang, Biwei and Spirtes, Peter and Zhang, Kun},
  booktitle={Forty-first International Conference on Machine Learning},
  year={2024}
}

@inproceedings{montagna2023scalable,
  title={Scalable causal discovery with score matching},
  author={Montagna, Francesco and Noceti, Nicoletta and Rosasco, Lorenzo and Zhang, Kun and Locatello, Francesco},
  booktitle={Conference on Causal Learning and Reasoning},
  pages={752--771},
  year={2023},
  organization={PMLR}
}

@inproceedings{spirtes2016causal,
  title={Causal discovery and inference: concepts and recent methodological advances},
  author={Spirtes, Peter and Zhang, Kun},
  booktitle={Applied informatics},
  volume={3},
  number={1},
  pages={3},
  year={2016},
  organization={Springer}
}

@inproceedings{huang2018generalized,
  title={Generalized score functions for causal discovery},
  author={Huang, Biwei and Zhang, Kun and Lin, Yizhu and Sch{\"o}lkopf, Bernhard and Glymour, Clark},
  booktitle={Proceedings of the 24th ACM SIGKDD international conference on knowledge discovery \& data mining},
  pages={1551--1560},
  year={2018}
}

@inproceedings{li2025strong,
  title={Strong and Weak Identifiability of Optimization-based Causal Discovery in Non-linear Additive Noise Models},
  author={Li, Mingjia and Qian, Hong and Wang, Tian-Zuo and Zhang, Min and Zhou, Aimin and others},
  booktitle={Forty-second International Conference on Machine Learning},
  year={2025}
}

@article{yin2024optimization,
  title={Optimization-based causal estimation from heterogeneous environments},
  author={Yin, Mingzhang and Wang, Yixin and Blei, David M},
  journal={Journal of Machine Learning Research},
  volume={25},
  number={168},
  pages={1--44},
  year={2024}
}

@inproceedings{ling2025local,
  title={Local causal discovery without causal sufficiency},
  author={Ling, Zhaolong and Yu, Jiale and Zhang, Yiwen and Cheng, Debo and Zhou, Peng and Wu, Xingyu and Jiang, Bingbing and Yu, Kui},
  booktitle={Proceedings of the AAAI Conference on Artificial Intelligence},
  volume={39},
  number={18},
  pages={18737--18745},
  year={2025}
}

@incollection{spirtes2000discovery,
  title={Discovery algorithms without causal sufficiency},
  author={Spirtes, Peter and Glymour, Clark and Scheines, Richard},
  booktitle={Causation, Prediction, and Search},
  pages={163--200},
  year={2000},
  publisher={Springer}
}

@article{xie2023causal,
  title={Causal discovery of 1-factor measurement models in linear latent variable models with arbitrary noise distributions},
  author={Xie, Feng and Zeng, Yan and Chen, Zhengming and He, Yangbo and Geng, Zhi and Zhang, Kun},
  journal={Neurocomputing},
  volume={526},
  pages={48--61},
  year={2023},
  publisher={Elsevier}
}

@inproceedings{markham2020measurement,
  title={Measurement dependence inducing latent causal models},
  author={Markham, Alex and Grosse-Wentrup, Moritz},
  booktitle={Conference on Uncertainty in Artificial Intelligence},
  pages={590--599},
  year={2020},
  organization={PMLR}
}

@article{ying2025generalized,
  title={A generalized tetrad constraint for testing conditional independence given a latent variable},
  author={Ying, Naiwen and Zhang, Ping and Luo, Shanshan and Miao, Wang},
  journal={arXiv preprint arXiv:2504.14173},
  year={2025}
}

@article{scholkopf2021toward,
  title={Toward causal representation learning},
  author={Sch{\"o}lkopf, Bernhard and Locatello, Francesco and Bauer, Stefan and Ke, Nan Rosemary and Kalchbrenner, Nal and Goyal, Anirudh and Bengio, Yoshua},
  journal={Proceedings of the IEEE},
  volume={109},
  number={5},
  pages={612--634},
  year={2021},
  publisher={IEEE}
}

@inproceedings{ahuja2023interventional,
  title={Interventional causal representation learning},
  author={Ahuja, Kartik and Mahajan, Divyat and Wang, Yixin and Bengio, Yoshua},
  booktitle={International conference on machine learning},
  pages={372--407},
  year={2023},
  organization={PMLR}
}

@article{brehmer2022weakly,
  title={Weakly supervised causal representation learning},
  author={Brehmer, Johann and De Haan, Pim and Lippe, Phillip and Cohen, Taco S},
  journal={Advances in Neural Information Processing Systems},
  volume={35},
  pages={38319--38331},
  year={2022}
}

@inproceedings{khemakhem2020variational,
  title={Variational autoencoders and nonlinear {{\textcolor{blue}{ICA}}}: A unifying framework},
  author={Khemakhem, Ilyes and Kingma, Diederik and Monti, Ricardo and Hyvarinen, Aapo},
  booktitle={International conference on artificial intelligence and statistics},
  pages={2207--2217},
  year={2020},
  organization={PMLR}
}

@inproceedings{hyvarinen2019nonlinear,
  title={Nonlinear {{\textcolor{blue}{ICA}}} using auxiliary variables and generalized contrastive learning},
  author={Hyvarinen, Aapo and Sasaki, Hiroaki and Turner, Richard},
  booktitle={The 22nd international conference on artificial intelligence and statistics},
  pages={859--868},
  year={2019},
  organization={PMLR}
}

@article{xu2024sparsity,
  title={A sparsity principle for partially observable causal representation learning},
  author={Xu, Danru and Yao, Dingling and Lachapelle, S{\'e}bastien and Taslakian, Perouz and Von K{\"u}gelgen, Julius and Locatello, Francesco and Magliacane, Sara},
  journal={arXiv preprint arXiv:2403.08335},
  year={2024}
}

@article{lachapelle2024nonparametric,
  title={Nonparametric partial disentanglement via mechanism sparsity: Sparse actions, interventions and sparse temporal dependencies},
  author={Lachapelle, S{\'e}bastien and L{\'o}pez, Pau Rodr{\'\i}guez and Sharma, Yash and Everett, Katie and Priol, R{\'e}mi Le and Lacoste, Alexandre and Lacoste-Julien, Simon},
  journal={arXiv preprint arXiv:2401.04890},
  year={2024}
}

@article{lachapelle2022partial,
  title={Partial disentanglement via mechanism sparsity},
  author={Lachapelle, S{\'e}bastien and Lacoste-Julien, Simon},
  journal={arXiv preprint arXiv:2207.07732},
  year={2022}
}

@article{yao2022temporally,
  title={Temporally disentangled representation learning},
  author={Yao, Weiran and Chen, Guangyi and Zhang, Kun},
  journal={Advances in Neural Information Processing Systems},
  volume={35},
  pages={26492--26503},
  year={2022}
}

@article{varici2023score,
  title={Score-based causal representation learning with interventions},
  author={Varici, Burak and Acarturk, Emre and Shanmugam, Karthikeyan and Kumar, Abhishek and Tajer, Ali},
  journal={arXiv preprint arXiv:2301.08230},
  year={2023}
}

@article{shen2022weakly,
  title={Weakly supervised disentangled generative causal representation learning},
  author={Shen, Xinwei and Liu, Furui and Dong, Hanze and Lian, Qing and Chen, Zhitang and Zhang, Tong},
  journal={Journal of Machine Learning Research},
  volume={23},
  number={241},
  pages={1--55},
  year={2022}
}

@article{locatello2020sober,
  title={A sober look at the unsupervised learning of disentangled representations and their evaluation},
  author={Locatello, Francesco and Bauer, Stefan and Lucic, Mario and R{\"a}tsch, Gunnar and Gelly, Sylvain and Sch{\"o}lkopf, Bernhard and Bachem, Olivier},
  journal={Journal of Machine Learning Research},
  volume={21},
  number={209},
  pages={1--62},
  year={2020}
}

@inproceedings{reddy2022causally,
  title={On causally disentangled representations},
  author={Reddy, Abbavaram Gowtham and Balasubramanian, Vineeth N and others},
  booktitle={Proceedings of the AAAI Conference on Artificial Intelligence},
  volume={36},
  number={7},
  pages={8089--8097},
  year={2022}
}

@incollection{lee1998independent,
  title={Independent component analysis},
  author={Lee, Te-Won},
  booktitle={Independent component analysis: Theory and applications},
  pages={27--66},
  year={1998},
  publisher={Springer}
}

@article{hyvarinen2013independent,
  title={Independent component analysis: recent advances},
  author={Hyv{\"a}rinen, Aapo},
  journal={Philosophical Transactions of the Royal Society A: Mathematical, Physical and Engineering Sciences},
  volume={371},
  number={1984},
  year={2013},
  publisher={The Royal Society}
}

@incollection{hyvarinen2001independent,
  title={Independent component analysis},
  author={Hyv{\"a}rinen, Aapo and Hurri, Jarmo and Hoyer, Patrik O},
  booktitle={Natural image statistics: A probabilistic approach to early computational vision},
  pages={151--175},
  year={2001},
  publisher={Springer}
}

@inproceedings{cai2023causal,
  title={Causal discovery with latent confounders based on higher-order cumulants},
  author={Cai, Ruichu and Huang, Zhiyi and Chen, Wei and Hao, Zhifeng and Zhang, Kun},
  booktitle={International conference on machine learning},
  pages={3380--3407},
  year={2023},
  organization={PMLR}
}

@article{forre2018constraint,
  title={Constraint-based causal discovery for non-linear structural causal models with cycles and latent confounders},
  author={Forr{\'e}, Patrick and Mooij, Joris M},
  journal={arXiv preprint arXiv:1807.03024},
  year={2018}
}

@article{louis1982finding,
  title={Finding the Observed Information Matrix when Using the EM Algorithm},
  author={Louis, Thomas A.},
  journal={Journal of the Royal Statistical Society: Series B (Methodological)},
  volume={44},
  number={2},
  pages={226--233},
  year={1982}
}

\appendix
{\small
\clearpage
\appendix
  \textit{\large Supplement to}\\ \ \\
      % {\Large \bf ``\texorpdfstring{\raisebox{-1mm}{\includegraphics[scale=0.45]{figures/icon.pdf}}}{NSCtrl}: Causal Temporal Representation Learning with Nonstationary Dynamics''}\
      {\Large \bf ``Learning the Structure of Nonlinear Latent Variable Models via Hessian Rank Constraint''}\
\newcommand{\beginsupplement}{%
	\renewcommand{\thetable}{A\arabic{table}}%\renewcommand{\theHtable}{A\arabic{table}}%

        \setcounter{equation}{0}
	\renewcommand{\theequation}{A\arabic{equation}}%\renewcommand{\theHequation}{A\arabic{equation}}%

	\renewcommand{\thefigure}{A\arabic{figure}}%\renewcommand{\theHfigure}{A\arabic{figure}}%
	
	\setcounter{algorithm}{0}
	\renewcommand{\thealgorithm}{A\arabic{algorithm}}%\renewcommand{\theHalgorithm}{A\arabic{algorithm}}%
	
	\setcounter{section}{0}\renewcommand{\theHsection}{A\arabic{section}}%

    \setcounter{theorem}{0}
    \renewcommand{\thetheorem}{A\arabic{theorem}}%\renewcommand{\theHtheorem}{A\arabic{theorem}}%

    \setcounter{corollary}{0}
    \renewcommand{\thecorollary}{A\arabic{corollary}}%\renewcommand{\theHcorollary}{A\arabic{corollary}}%

        \setcounter{lemma}{0}
    \renewcommand{\thelemma}{A\arabic{lemma}}%\renewcommand{\theHlemma}{A\arabic{lemma}}%

    % Add any other environments as needed
}

\beginsupplement
{\large Appendix organization:}

\DoToC 
\clearpage

\section{Related Works}
\label{app:related_works}
\subsection{Causal discovery}
Causal discovery studies how causal relations can be inferred from observational or interventional data, and has been widely investigated in machine learning, statistics, and scientific discovery \citep{glymour2019review,huang2020causal,zhu2019causal,pearl2009causality}. Most classical methods are built on graphical assumptions such as the causal Markov condition \citep{hausman1999independence} and faithfulness \citep{zhalama2017weakening}. Under these assumptions, constraint-based methods use conditional independence tests to recover a Markov equivalence class of DAGs, with PC-style algorithms being representative examples \citep{glymour2019review,spirtes2016causal,zhang2012kernel}. Other methods formulate causal discovery as score-based search or continuous optimization over DAGs \citep{huang2018generalized,ng2024score,montagna2023scalable,li2025strong,yin2024optimization}. A common limitation of many such methods is the causal sufficiency assumption, which excludes unobserved common causes among measured variables \citep{spirtes2000discovery,ling2025local,cai2023causal,forre2018constraint}. In practice, this assumption is often too strong, since latent variables can induce dependencies among observed variables and make the observed causal structure ambiguous. FCI and its variants address this issue by allowing latent confounders and representing the remaining uncertainty over causal structures \citep{spirtes2000causation,pearl2009causality,zhang2008completeness,colombo2012learning,akbari2021recursive,mokhtarian2025recursive}. However, these methods primarily describe causal relations among observed variables. They do not, in general, identify the latent variables themselves or recover causal relations among them, which is the focus of latent-variable causal discovery.

\subsection{Causal discovery among latent variables}

A central goal in latent-variable causal discovery is to recover causal relations among latent variables \citep{huang2022latent,dong2023versatile}. This problem is often studied in measurement-model settings \citep{xie2023causal,markham2020measurement}, where observed variables serve as indicators or descendants of latent causes. Beyond detecting latent confounding, one aims to locate the latent variables, determine their dimensions, and learn their causal structure. Since this structure is not identifiable without additional assumptions, existing methods exploit various statistical signatures. Rank-based methods use tetrad constraints or low-rank cross-covariance matrices to identify pure measurement clusters or infer the number of latent variables that d-separate two measured sets \citep{silva2006learning,kummerfeld2016causal,huang2022latent,xie2022identification,dong2023versatile}. Other methods rely on higher-order moments \citep{shimizu2009estimation,cai2019triad,xie2020generalized,adams2021identification,chen2022identification}, matrix decomposition \citep{anandkumar2013learning,chen2025identification}, copula formulations \citep{cui2018learning}, multi-domain information \citep{zeng2021causal,li2023causal,sturma2023unpaired,zhang2024causal}, or mixture-based oracles \citep{kivva2021learning,kivva2022identifiability}. Despite their identifiability guarantees, many of these methods assume linear measurements, discrete latent variables, or other restrictive structures, which may not hold when observations are generated through nonlinear mechanisms.

Recent work has extended latent causal discovery to nonlinear measurement models. \cite{kong2023identification} study nonlinear latent hierarchical structures but require latent variables to be deterministic functions of measured variables. \cite{prashant2025differentiable} use the Jacobian rank of conditional expectation mappings to identify separating latent sets and learn the structure through differentiable optimization. Recent work has also generalized tetrad constraints to nonlinear models for testing conditional independence given a single latent variable \citep{ying2025generalized}, whereas our goal is to identify latent dimensions, locate latent variables, and recover their causal relations. Our work instead uses the cross-Hessian rank of the observed log-density, providing a nonlinear rank criterion that can be incorporated into existing clustering and PC-style latent discovery procedures.

\subsection{Causal representation learning}

Our work is also related to causal representation learning, which aims to recover latent variables from observed data \citep{scholkopf2021toward,ahuja2023interventional,brehmer2022weakly}. Many causal representation learning methods build on ideas from nonlinear independent component analysis (ICA) \citep{khemakhem2020variational,hyvarinen2019nonlinear}, where identifiability of latent factors is obtained under additional assumptions such as auxiliary variables \citep{khemakhem2020variational,zhang2024causal}, temporal dependence \citep{hyvarinen2019nonlinear}, sparsity \citep{xu2024sparsity,lachapelle2024nonparametric,lachapelle2022partial}, or nonstationarity \citep{yao2022temporally}. Other CRL approaches use interventions \citep{ahuja2023interventional,varici2023score}, multiple environments \citep{zhang2024causal}, or weak supervision \citep{shen2022weakly} to learn causally meaningful representations. These works share our interest in latent variables behind observations, but they mainly focus on representation recovery or disentanglement \citep{locatello2020sober,reddy2022causally}. In contrast, our goal is to locate latent variables and recover causal relations among them from observational data. Moreover, while ICA-based approaches often assume independent latent sources \citep{lee1998independent,hyvarinen2013independent,hyvarinen2001independent}, our setting allows latent variables to be causally dependent.

\section{Notation Description}
\label{app:notation}
This section collects the main notations used in the theorem proofs for clarity and consistency.
\begin{table}[!ht]
\vspace{-2mm}
\caption{List of main notations, explanations, and corresponding values.}
\centering
\scriptsize
\setlength{\tabcolsep}{3pt}
\renewcommand{\arraystretch}{0.86}
\begin{tabular}{p{0.23\textwidth} p{0.43\textwidth} p{0.28\textwidth}}
\toprule
$\textbf{Notation}$ & $\textbf{Explanation}$ & $\textbf{Support / Value}$ \\
\midrule
$\textbf{Notation Style}$ & & \\
\midrule
\textit{Uppercase Roman letters} & random variables or random vectors & \textemdash{} \\
\textit{Lowercase Roman letters} & realizations of the corresponding random variables or random vectors & \textemdash{} \\
\textit{Calligraphic letters} & sets of nodes in the graph & \textemdash{} \\
\textit{Bold uppercase letters} & matrices & \textemdash{} \\
$P$ & probability distributions or probability measures & \textemdash{} \\
$p$ & probability density functions & \textemdash{} \\
\midrule
$\textbf{Graph and Variables}$ & & \\
\midrule
$\mathcal G=(\mathcal V,\mathcal E)$ & directed acyclic graph with node set $\mathcal V$ and edge set $\mathcal E$ & $\mathcal V=\mathcal L\cup\mathcal X$ \\
$\mathcal L=\{L_i\}_{i=1}^{n}$ & set of all latent nodes & $|\mathcal L|=n$ \\
$\mathcal X=\{X_j\}_{j=1}^{m}$ & set of all observed nodes & $|\mathcal X|=m$ \\
$n,m$ & numbers of latent and observed variables, respectively & $n,m\in\mathbb{N}^{+}$ \\
$A,B$ & two disjoint groups of observed variables & $A,B\subseteq\mathcal X$, $A\cap B=\emptyset$ \\
$\mathcal Z;\ Z,z$ & specified latent d-separator; associated random vector and realization & $\mathcal Z\subseteq\mathcal L$, $|\mathcal Z|=d$; $Z,z\in\mathbb R^d$ \\
$d^*$ & cardinality of a minimum set of latent variables that d-separates $A$ and $B$ & $d^*=|\mathcal Z^*|$ \\
$Q$ & conditioning set of latent nodes & $Q\subseteq\mathcal L\setminus\{L_i,L_j\}$, $|Q|=q$ \\
$C_k$ & pure cluster corresponding to latent variable $L_k$ & $C_k\subseteq\mathcal X$ \\
\midrule
$\textbf{Structural Model}$ & & \\
\midrule
$L_i=f_{L_i}(\mathrm{Pa}(L_i),\epsilon_{L_i})$ 
& structural equation for latent variable $L_i$ 
& $\mathrm{Pa}(L_i)\subseteq\mathcal L$ \\
$X_j=f_{X_j}(\mathrm{Pa}(X_j),\epsilon_{X_j})$ 
& structural equation for observed variable $X_j$ 
& $\mathrm{Pa}(X_j)\subseteq\mathcal V$ \\
$\mathrm{Pa}(\cdot)$ & parent set of a variable in $\mathcal{G}$ & / \\
$\epsilon_{L_i},\epsilon_{X_j}$ & independent noise terms of $L_i$ and $X_j$ & / \\
$\begin{gathered}
p_{A,B}(a,b),\\
p_{A,B\mid Z}(a,b\mid z),\\
p_{Z\mid A,B}(z\mid a,b)
\end{gathered}$
& marginal, conditional, and separator-posterior densities
& / \\
\midrule
\textbf{Scores} & & \\
\midrule
$\begin{gathered}
\mathbf s_A(a,z),\\
\mathbf s_B(b,z)
\end{gathered}$
& conditional score functions of $A$ and $B$
& $\in\mathbb{R}^{|A|}$ and $\mathbb{R}^{|B|}$ \\
$\mathbf{u}_A(a),\mathbf{u}_B(b)$
& intercept terms in the Affine Derivatives assumption
& $\mathbf{u}_A\in\mathbb{R}^{|A|}$, $\mathbf{u}_B\in\mathbb{R}^{|B|}$ \\
$\mathbf{V}_A(a),\mathbf{V}_B(b)$
& slope matrices in the Affine Derivatives assumption
& $\mathbf{V}_A\in\mathbb{R}^{|A|\times d}$, $\mathbf{V}_B\in\mathbb{R}^{|B|\times d}$ \\
$\operatorname{Cov}(Z\mid A=a,B=b)$
& posterior covariance of a specified separator
& $\in\mathbb{R}^{d\times d}$ \\
$\mathbb{E}_{Z\mid A=a,B=b}[\cdot]$
& posterior expectation with respect to $P_{Z\mid A=a,B=b}$
& / \\
\midrule
\textbf{HRC} & & \\
\midrule
$\mathbf H_{A,B}$
& cross-Hessian function of the marginal log-density
& Equation~(\ref{eq:cross-hessian}) \\
$\operatorname{rank}(\mathbf H_{A,B})$
& rank of the cross-Hessian function
& largest matrix rank outside a $P_{A,B}$-null set \\
$\operatorname{rank}(\mathbf H_{A,B})=d^*$
& HRC equality under Theorem~\ref{thm:cross-hessian-constraints}
& functional rank equals the minimum separator cardinality \\
$\widehat{\mathbf H}_{A,B}(y)$
& empirical marginal cross-Hessian estimated from $Y$-samples
& Equation~(\ref{eq:kde-cross-hessian}) \\
$E(y)$ 
& cross-Hessian estimation error 
& $\widehat{\mathbf H}_{A,B}(y)-\mathbf H_{A,B}(y)$ \\
$\varepsilon$ 
& high-probability error threshold 
& $\|E(y)\|_2\leq\varepsilon$ \\
$\sigma_j(\cdot),\mathrm{SVD}_{\varepsilon}(\cdot)$ 
& singular values and hard-thresholded SVD operator 
& retain singular values $>\alpha$ \\
\bottomrule
\end{tabular}
\vspace{-2mm}
\end{table}

% \clearpage

\section{Definitions}
\label{def:measurement_model}
\begin{definition}[Measurement Model]
Given a latent variable model $\mathcal G=(\mathcal V,\mathcal E)$ with $\mathcal V=\mathcal L\cup\mathcal X$, the subgraph containing all nodes in $\mathcal V$ and exactly those edges directed into the observed nodes in $\mathcal X$ is called the \textbf{measurement model} of $\mathcal G$ \citep{silva2006learning}.
\end{definition}

\clearpage
\section{Proof}

\subsection{Proof of Theorem 1}
\label{app:the1}

\begin{quote}
\noindent\textbf{Theorem~\ref{thm:latent-dimension} (Cross-Hessian Rank under d-Separation by Latent Variables).}
Let $A,B$ be disjoint observed groups and $\mathcal Z\subseteq\mathcal L$ a set of
latent variables that d-separates them. Let $Z\in\mathbb R^d$ be its associated
random vector, where $d=|\mathcal Z|$. If Assumptions~\ref{assump:regularity}
and~\ref{assump:local-linearity} hold for $Z$, then
\[
\operatorname{rank}\!\left(\mathbf H_{A,B}\right)\le d.
\]
\end{quote}

\begin{proof}
\textcolor{blue}{
Fix a set $\mathcal Z$ as in the theorem and a point
$y=(y_A^\top,y_B^\top)^\top$ at which $p_{A,B}(y)>0$ and all derivatives
below exist. Write
\[
q_y(z):=p_{Z\mid A,B}(z\mid y_A,y_B)
=\frac{p_{A,B\mid Z}(y\mid z)p_Z(z)}{p_{A,B}(y)}
\]
for the posterior density of $Z$ at $y$. Assumption~\ref{assump:regularity}
justifies each interchange of differentiation and integration used below.
Because the structural causal model is Markov with respect to $\mathcal G$
and $\mathcal Z$ d-separates $A$ and $B$, we have
$A\perp\!\!\!\perp B\mid Z$. Hence
\begin{equation}
p_{A,B\mid Z}(y_A,y_B\mid z)
=p_{A\mid Z}(y_A\mid z)p_{B\mid Z}(y_B\mid z).
\label{eq:appendix-conditional-factorization}
\end{equation}
Let
$\mathbf s_A(y_A,z)=\nabla_{y_A}\log p_{A\mid Z}(y_A\mid z)$ and
$\mathbf s_B(y_B,z)=\nabla_{y_B}\log p_{B\mid Z}(y_B\mid z)$.
Differentiating the marginalization identity
\[
p_{A,B}(y)=\int p_{A,B\mid Z}(y\mid z)p_Z(z)\,\mathrm dz
\]
with respect to $y_A$ gives
\begin{align}
\nabla_{y_A}p_{A,B}(y)
&=\int \nabla_{y_A}p_{A,B\mid Z}(y\mid z)p_Z(z)\,\mathrm dz \notag\\
&=\int p_{A,B\mid Z}(y\mid z)p_Z(z)\,
\mathbf s_A(y_A,z)\,\mathrm dz.
\label{eq:appendix-first-density-derivative}
\end{align}
The analogous identity holds for the derivative with respect to $y_B$.
Differentiating once more yields
\begin{align}
\nabla_{y_A}\nabla_{y_B}^\top p_{A,B}(y)
={}&\int p_{A,B\mid Z}(y\mid z)p_Z(z)
\Bigl[
\mathbf s_A(y_A,z)\mathbf s_B(y_B,z)^\top \notag\\
&\hspace{32mm}+
\nabla_{y_A}\nabla_{y_B}^\top
\log p_{A,B\mid Z}(y\mid z)
\Bigr] \,\mathrm dz.
\label{eq:appendix-second-density-derivative}
\end{align}
By Equation~(\ref{eq:appendix-conditional-factorization}),
$\log p_{A,B\mid Z}=\log p_{A\mid Z}+\log p_{B\mid Z}$, so the mixed
conditional log-density derivative in
Equation~(\ref{eq:appendix-second-density-derivative}) is zero.
For any positive twice differentiable density $p$,
\[
\nabla_{y_A}\nabla_{y_B}^\top\log p
=
\frac{\nabla_{y_A}\nabla_{y_B}^\top p}{p}
-
\frac{(\nabla_{y_A}p)(\nabla_{y_B}p)^\top}{p^2}.
\]
Substituting Equations~(\ref{eq:appendix-first-density-derivative}) and
(\ref{eq:appendix-second-density-derivative}) into this identity and using
Bayes' rule gives
\begin{align}
\mathbf H_{A,B}(y)
={}&\mathbb E_{q_y}
\!\left[\mathbf s_A(y_A,Z)\mathbf s_B(y_B,Z)^\top\right] \notag\\
&-\mathbb E_{q_y}[\mathbf s_A(y_A,Z)]
\mathbb E_{q_y}[\mathbf s_B(y_B,Z)]^\top \notag\\
={}&\operatorname{Cov}_{q_y}
\!\left(\mathbf s_A(y_A,Z),\mathbf s_B(y_B,Z)\right).
\label{eq:appendix-score-covariance}
\end{align}
Under Assumption~\ref{assump:local-linearity},
\[
\mathbf s_A(y_A,Z)=\mathbf u_A(y_A)+\mathbf V_A(y_A)Z,
\qquad
\mathbf s_B(y_B,Z)=\mathbf u_B(y_B)+\mathbf V_B(y_B)Z.
\]
Let $\boldsymbol\mu_y=\mathbb E_{q_y}[Z]$. Centering the two scores under
$q_y$ removes the intercepts:
\[
\mathbf s_A-\mathbb E_{q_y}[\mathbf s_A]
=\mathbf V_A(y_A)(Z-\boldsymbol\mu_y),
\qquad
\mathbf s_B-\mathbb E_{q_y}[\mathbf s_B]
=\mathbf V_B(y_B)(Z-\boldsymbol\mu_y).
\]
Therefore Equation~(\ref{eq:appendix-score-covariance}) becomes
\begin{align}
\mathbf H_{A,B}(y)
&=\mathbb E_{q_y}\!\left[
\mathbf V_A(y_A)(Z-\boldsymbol\mu_y)(Z-\boldsymbol\mu_y)^\top
\mathbf V_B(y_B)^\top\right] \notag\\
&=\mathbf V_A(y_A)\,
\operatorname{Cov}(Z\mid A=y_A,B=y_B)\,
\mathbf V_B(y_B)^\top.
\label{eq:appendix-hessian-factorization}
\end{align}
The factors in Equation~(\ref{eq:appendix-hessian-factorization}) have
sizes $|A|\times d$, $d\times d$, and $d\times|B|$. Consequently,
\[
\operatorname{rank}(\mathbf H_{A,B}(y))
\leq
\min\!\left\{
\operatorname{rank}(\mathbf V_A(y_A)),
\operatorname{rank}(\operatorname{Cov}(Z\mid A=y_A,B=y_B)),
\operatorname{rank}(\mathbf V_B(y_B))
\right\}
\leq d.
\]
Thus the pointwise matrix rank is at most $d$ wherever the cross-Hessian is defined. Consequently, the rank of the matrix-valued function satisfies $\operatorname{rank}(\mathbf H_{A,B})\leq d$.
}
\end{proof}

\paragraph{\textcolor{revisiondarkgreen}{Relation to the linear Gaussian case.}}

For non-degenerate jointly Gaussian distributions, the block-inverse calculation in the proof of Corollary~\ref{coro1} shows that cross-Hessian rank equals cross-covariance rank. Classical algebraic results then imply that, in linear Gaussian structural equation models, additional rank loss occurs only on exceptional measure-zero parameter subsets \citep{Sullivant-T-separation}. This comparison motivates the explicit non-degeneracy condition used for exact rank recovery; no genericity claim is made for unrestricted nonlinear models.

\subsection{Proof of Corollary 2.1}
\label{app:cor1}
\begin{quote}
\noindent\textbf{Corollary~\ref{coro1} (Linear Gaussian Special Case).}
Let $A$ and $B$ be disjoint observed groups. Let $\mathcal Z^*$ be a
minimum latent d-separator between them, write $d^*=|\mathcal Z^*|$, and
let $Z^*\in\mathbb R^{d^*}$ be its associated random vector. Suppose the
full structural causal model is linear Gaussian, the joint distribution of
$(A,B)$ is non-degenerate, and Assumptions~\ref{assump:regularity}
and~\ref{assump:nondegeneracy} hold. Then
\[
\operatorname{rank}\!\left(\mathbf H_{A,B}\right)=d^*.
\]
\end{quote}

\begin{proof}
Under the stated assumptions, $Y=(A^\top,B^\top)^\top$ is a
non-degenerate jointly Gaussian vector. Let $\Sigma$ and
$\Omega=\Sigma^{-1}$ denote its covariance and precision matrices,
respectively. Since the Gaussian log-density has Hessian $-\Omega$, its
cross-Hessian block is $\mathbf H_{A,B}=-\Omega_{A,B}$. The block-inverse
identity gives
\[
\mathbf H_{A,B}
=
\Sigma_{A,A}^{-1}\Sigma_{A,B}
\bigl(\Sigma_{B,B}-\Sigma_{B,A}\Sigma_{A,A}^{-1}\Sigma_{A,B}\bigr)^{-1}.
\]
Because $\Sigma$ is positive definite, the two matrices multiplying
$\Sigma_{A,B}$ are invertible, and hence
\[
\operatorname{rank}(\mathbf H_{A,B})
=
\operatorname{rank}(\Sigma_{A,B}).
\]
This establishes the equivalence between covariance-rank and
cross-Hessian-rank constraints in the non-degenerate Gaussian case. Writing
$A=\Lambda_A Z^*+\epsilon_A$ and $B=\Lambda_B Z^*+\epsilon_B$, the
conditional scores are affine in $Z^*$, with
$\mathbf V_A=\Sigma_A^{-1}\Lambda_A$ and
$\mathbf V_B=\Sigma_B^{-1}\Lambda_B$. Thus
\[
\mathbf H_{A,B}
=
\mathbf V_A\,
\operatorname{Cov}(Z^*\mid A,B)\,
\mathbf V_B^\top.
\]
Assumption~\ref{assump:nondegeneracy} makes the middle factor positive
definite almost surely and the two slope matrices full column rank $d^*$.
Therefore the cross-Hessian function has rank $d^*$.
\end{proof}

\subsection{Proof of Theorem 2}
\label{app:the2}
\begin{quote}
\noindent\textbf{Theorem~\ref{thm:cross-hessian-constraints} (Hessian Rank Constraint).}
Let $A$ and $B$ be disjoint groups of observed variables, and let
$\mathcal Z^*$ be a minimum-cardinality set of latent variables that
d-separates them. Write $d^*:=|\mathcal Z^*|$. Under Assumptions~
\ref{assump:regularity} and~\ref{assump:local-linearity} with respect to
$\mathcal Z^*$, and under Assumption~\ref{assump:nondegeneracy}, we have
\[
\operatorname{rank}\!\left(\mathbf H_{A,B}\right)=d^*.
\]
\end{quote}

\begin{proof}
Let $Z^*\in\mathbb R^{d^*}$ be the random vector associated with the
minimum separator $\mathcal Z^*$. Equation~(\ref{equ:hessian_decompose})
gives the function identity
\[
\mathbf H_{A,B}
=
\mathbf V_A(A)\,
\operatorname{Cov}(Z^*\mid A,B)\,
\mathbf V_B(B)^\top.
\]
By Assumption~\ref{assump:nondegeneracy}, the posterior covariance is
positive definite almost surely, while $\mathbf V_A(A)$ and $\mathbf V_B(B)$
have full column rank $d^*$. Hence the product has matrix rank $d^*$ outside
a $P_{A,B}$-null set. By the definition of functional rank,
$\operatorname{rank}(\mathbf H_{A,B})=d^*$.
\end{proof}

\subsection{Derivation of the KDE Cross-Hessian Formula}\label{app:kde-cross-hessian-derivation}\textcolor{blue}{Write $y=(y_A^\top,y_B^\top)^\top$ and let $q(y):=\widehat p_h(y)$. Fix an evaluation point at which $q(y)>0$. For coordinates $y_{A,i}$ and $y_{B,j}$,\[\frac{\partial\log q(y)}{\partial y_{B,j}}=\frac{1}{q(y)}\frac{\partial q(y)}{\partial y_{B,j}}.\]Differentiating this identity with respect to $y_{A,i}$ and applying the product rule gives\[\begin{aligned}\frac{\partial^2\log q(y)}{\partial y_{A,i}\,\partial y_{B,j}}&=\frac{\partial}{\partial y_{A,i}}\left[q(y)^{-1}\frac{\partial q(y)}{\partial y_{B,j}}\right]\\&=q(y)^{-1}\frac{\partial^2q(y)}{\partial y_{A,i}\,\partial y_{B,j}}-q(y)^{-2}\frac{\partial q(y)}{\partial y_{A,i}}\frac{\partial q(y)}{\partial y_{B,j}}.\end{aligned}\]The $(i,j)$ entry of $\nabla_{y_A}\nabla_{y_B}^\top\log q(y)$ is the left-hand side. Stacking the identities over all $i$ and $j$ therefore yields\[\nabla_{y_A}\nabla_{y_B}^\top\log q(y)=\frac{\nabla^2_{y_A,y_B}q(y)}{q(y)}-\frac{\nabla_{y_A}q(y)\,\nabla_{y_B}q(y)^\top}{q(y)^2}.\]To make the dependence on the KDE explicit, define $u^{(n)}=(y-Y^{(n)})/h$. If the kernel is twice differentiable, differentiation of the finite sum gives\[\begin{aligned}\nabla_{y_A}\widehat p_h(y)&=\frac{1}{Nh^{s+1}}\sum_{n=1}^N\nabla_{y_A}\mathcal K(u^{(n)}),\\\nabla_{y_B}\widehat p_h(y)&=\frac{1}{Nh^{s+1}}\sum_{n=1}^N\nabla_{y_B}\mathcal K(u^{(n)}),\\\nabla^2_{y_A,y_B}\widehat p_h(y)&=\frac{1}{Nh^{s+2}}\sum_{n=1}^N\nabla^2_{y_A,y_B}\mathcal K(u^{(n)}),\end{aligned}\]where the derivatives on the right-hand side are taken with respect to the corresponding coordinate blocks of the kernel argument. Substituting these derivatives into the matrix identity above with $q=\widehat p_h$ gives Equation~(\ref{eq:kde-cross-hessian}).}

\subsection{Proof of Theorem 3}\label{app:the3}

\begin{lemma}[Taylor remainder under Hessian Lipschitz continuity]
\label{lem:conditional-score-taylor}
Under the conditions of Theorem~\ref{the:error_bound}, let
$\mathbf v_{G,i}^\top$ be the $i$th row of $\mathbf V_G$, define
\begin{equation}
r_{G,i}(z):=s_{G,i}(y_G,z)-s_{G,i}(y_G,z_0)
-\mathbf v_{G,i}^\top(z-z_0),
\label{eq:the3-component-remainder-definition}
\end{equation}
and write
$\widetilde C_{G,i}:=\kappa_{G,i}+\delta M/3$.
Then, for every $z\in\mathcal B(z_0,\delta)$,
\begin{equation}
|r_{G,i}(z)|\leq\frac12\widetilde C_{G,i}\|z-z_0\|_2^2.
\label{eq:the3-component-remainder-bound}
\end{equation}
\end{lemma}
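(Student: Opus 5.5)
The plan is a second-order Taylor expansion with integral remainder, applied to the scalar map $z\mapsto s_{G,i}(y_G,z)$ along the segment from $z_0$ to $z$. The curvature bound $\kappa_{G,i}$ is read as $\|\nabla_z^2 s_{G,i}(y_G,z_0)\|_2$, consistent with the componentwise bounds defining $\widetilde{\mathbf C}_G$ in Theorem~\ref{the:error_bound}.

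First, fix $z\in\mathcal B(z_0,\delta)$ and write $\Delta:=z-z_0$. The ball is convex, so the whole segment $z_0+t\Delta$, $t\in[0,1]$, lies in $\mathcal B(z_0,\delta)$. The Lipschitz-continuity hypothesis says $s_{G,i}(y_G,\cdot)$ is $C^2$ there, so $g(t):=s_{G,i}(y_G,z_0+t\Delta)$ is $C^2$ on $[0,1]$. Here $g'(0)=\mathbf v_{G,i}^\top\Delta$, because $\mathbf V_G=\nabla_z\mathbf s_G(y_G,z_0)$, and $g''(t)=\Delta^\top\mathbf K(t)\Delta$ with $\mathbf K(t):=\nabla_z^2 s_{G,i}(y_G,z_0+t\Delta)$. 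Taylor's formula with integral remainder gives
\[
r_{G,i}(z)=g(1)-g(0)-g'(0)=\int_0^1(1-t)\,\Delta^\top\mathbf K(t)\Delta\,\mathrm dt .
\]

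Second, split $\mathbf K(t)=\mathbf K(0)+(\mathbf K(t)-\mathbf K(0))$. The constant part contributes exactly $\tfrac12\Delta^\top\mathbf K(0)\Delta$, whose absolute value is at most $\tfrac12\kappa_{G,i}\|\Delta\|_2^2$. For the variation part, the $M$-Lipschitz property of the componentwise Hessian gives $\|\mathbf K(t)-\mathbf K(0)\|_2\le Mt\|\Delta\|_2$. Hence its contribution is bounded by
\[
M\|\Delta\|_2^3\int_0^1(1-t)t\,\mathrm dt=\tfrac{M}{6}\|\Delta\|_2^3\le\tfrac{\delta M}{6}\|\Delta\|_2^2,
\]
using $\|\Delta\|_2\le\delta$. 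Adding the two pieces yields $|r_{G,i}(z)|\le\tfrac12(\kappa_{G,i}+\delta M/3)\|\Delta\|_2^2=\tfrac12\widetilde C_{G,i}\|z-z_0\|_2^2$, which is the claimed bound.

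The only delicate point is bookkeeping. One must confirm that the Lipschitz constant is taken in the operator norm on the same ball, and use the weight $\int_0^1 t(1-t)\,\mathrm dt=1/6$ rather than a cruder bound. That choice is exactly what produces the $\delta M/3$ term inside $\tfrac12\widetilde C_{G,i}$. The rest is routine.
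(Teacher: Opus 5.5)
Your proposal is correct and follows essentially the same route as the paper. Both arguments use the integral-form Taylor remainder along the segment, the Lipschitz bound $\|\nabla_z^2 s_{G,i}(y_G,z_0+t\Delta)-\nabla_z^2 s_{G,i}(y_G,z_0)\|_2\le tM\|\Delta\|_2$, the weight $\int_0^1 t(1-t)\,\mathrm dt=1/6$, and finally $\|\Delta\|_2\le\delta$; the only cosmetic difference is that you split the Hessian before taking norms, whereas the paper applies the triangle inequality to the Hessian norm directly, and both reach the same $\tfrac12\kappa_{G,i}\|\Delta\|_2^2+\tfrac16 M\|\Delta\|_2^3$ bound.
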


\begin{proof}
Fix $G$ and $i$, and write $h=z-z_0$. Taylor's theorem with integral
remainder gives
\begin{equation}
r_{G,i}(z)=\int_0^1(1-t)h^\top
\nabla_z^2s_{G,i}(y_G,z_0+th)h\,\mathrm dt.
\label{eq:appendix-taylor-integral-remainder}
\end{equation}
Because $\mathcal B(z_0,\delta)$ is convex, the entire segment
$z_0+th$, $t\in[0,1]$, remains in the region covered by the Hessian
Lipschitz condition. Hence
\begin{align}
\|\nabla_z^2s_{G,i}(y_G,z_0+th)\|_2
&\leq \|\nabla_z^2s_{G,i}(y_G,z_0)\|_2 \notag\\
&\quad+\|\nabla_z^2s_{G,i}(y_G,z_0+th)
-\nabla_z^2s_{G,i}(y_G,z_0)\|_2 \notag\\
&\leq \kappa_{G,i}+tM\|h\|_2.
\label{eq:appendix-hessian-lipschitz-bound}
\end{align}
Substitution into Equation~(\ref{eq:appendix-taylor-integral-remainder}) gives
\begin{align}
|r_{G,i}(z)|
&\leq \|h\|_2^2\int_0^1(1-t)
\bigl(\kappa_{G,i}+tM\|h\|_2\bigr)\,\mathrm dt \notag\\
&=\frac12\kappa_{G,i}\|h\|_2^2
+\frac16M\|h\|_2^3 \notag\\
&\leq\frac12\left(\kappa_{G,i}
+\frac{\delta M}{3}\right)\|h\|_2^2
=\frac12\widetilde C_{G,i}\|h\|_2^2,
\end{align}
where the last inequality uses $\|h\|_2\leq\delta$.
\end{proof}

\begin{lemma}[Covariance bound on a posterior ball]
\label{lem:posterior-ball-covariance}
If a random vector $Z$ is supported on $\mathcal B(z_0,\delta)$, then
\begin{equation}
\|\operatorname{Cov}(Z)\|_2\leq\delta^2.
\label{eq:posterior-ball-covariance}
\end{equation}
Equivalently, $\operatorname{Var}(d^\top Z)\leq\delta^2$ for every unit
vector $d$.
\end{lemma}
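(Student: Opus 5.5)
The bound $\|\operatorname{Cov}(Z)\|_2\le\delta^2$ follows from the variational form of the spectral norm, applied one direction at a time. Since $\operatorname{Cov}(Z)$ is symmetric positive semidefinite,
\[
\|\operatorname{Cov}(Z)\|_2=\sup_{\|d\|_2=1} d^\top\operatorname{Cov}(Z)\,d=\sup_{\|d\|_2=1}\operatorname{Var}(d^\top Z).
\]
This also shows that the two formulations in the statement are equivalent. Both moments exist because $Z$ is bounded, so it suffices to prove $\operatorname{Var}(d^\top Z)\le\delta^2$ for a fixed unit vector $d$.

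The key step is the least-squares characterization of the variance: for any constant $c\in\mathbb R$,
\[
\operatorname{Var}(W)=\min_{c}\mathbb E[(W-c)^2]\le\mathbb E[(W-c)^2].
\]
This holds because $\mathbb E[(W-c)^2]=\operatorname{Var}(W)+(\mathbb E W-c)^2$. We take $W=d^\top Z$ and $c=d^\top z_0$.

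Then $|W-c|=|d^\top(Z-z_0)|\le\|d\|_2\|Z-z_0\|_2\le\delta$ almost surely, by Cauchy--Schwarz and the support assumption $Z\in\mathcal B(z_0,\delta)$. Hence $\operatorname{Var}(d^\top Z)\le\mathbb E[(W-c)^2]\le\delta^2$. Taking the supremum over unit vectors $d$ gives the claim.

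There is no real obstacle here. The one point to get right is that we center at the ball's center $z_0$ rather than at the unknown posterior mean; the mean-minimizes-squared-error identity is exactly what makes that substitution legitimate. The constant $\delta^2$ is loose, but it is all that Theorem~\ref{the:error_bound} requires. It will be used there to bound the linear--linear covariance term and, combined with Lemma~\ref{lem:conditional-score-taylor}, the cross terms involving the remainders $r_{G,i}$.
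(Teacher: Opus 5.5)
Your proof is correct and follows the paper's route. Both reduce to one-dimensional projections via the Rayleigh--Ritz identity $\|\operatorname{Cov}(Z)\|_2=\sup_{\|d\|_2=1}\operatorname{Var}(d^\top Z)$; the paper then cites Popoviciu's inequality for an interval of length $2\delta$, while you inline its standard proof by centering at the midpoint $d^\top z_0$.

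Two minor asides on your closing remarks. First, the constant $\delta^2$ is sharp rather than loose: on a closed ball, put mass $1/2$ at each of $z_0\pm\delta e$ for a unit vector $e$. Second, in Theorem~\ref{the:error_bound} the lemma is used only for the latent--remainder cross terms; the linear--linear term $\mathbf V_A\operatorname{Cov}_Q(Z)\mathbf V_B^\top$ is $\mathbf H^{\mathrm{aff}}_{A,B}(y)$ itself and is never bounded.
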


\begin{proof}
For any unit vector $d$, the scalar projection $d^\top Z$ lies in an
interval of length at most $2\delta$. Indeed, for any
$z_1,z_2\in\mathcal B(z_0,\delta)$,
\begin{equation}
|d^\top z_1-d^\top z_2|
\leq\|d\|_2\|z_1-z_2\|_2\leq2\delta.
\end{equation}
For a scalar random variable $W$ supported on an interval $[m,M]$,
Popoviciu's inequality gives
$\operatorname{Var}(W)\leq(M-m)^2/4$. Hence
$\operatorname{Var}(d^\top Z)\leq\delta^2$. Since
$\operatorname{Cov}(Z)$ is positive semidefinite, the Rayleigh--Ritz
characterization gives
\begin{equation}
\|\operatorname{Cov}(Z)\|_2
=\sup_{\|d\|_2=1}d^\top\operatorname{Cov}(Z)d
=\sup_{\|d\|_2=1}\operatorname{Var}(d^\top Z)
\leq\delta^2.
\end{equation}
\end{proof}

\begin{lemma}[Spectral and Frobenius norms]
\label{lem:spectral-frobenius}
For every matrix $M$, $\|M\|_2\leq\|M\|_F$.
\end{lemma}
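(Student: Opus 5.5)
The plan is to reduce the spectral-norm bound to an entrywise bound and then sum squares. Write $M=(m_{ij})$ with $i$ indexing rows and $j$ columns. For any unit vectors $x,y$, bilinearity gives $x^\top M y=\sum_{i,j}x_i m_{ij}y_j$.

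Applying Cauchy--Schwarz over the row index $i$ to the vectors $(x_i)_i$ and $\bigl(\sum_j m_{ij}y_j\bigr)_i$ gives
\[
|x^\top M y|\le \|x\|_2\Bigl(\sum_i\Bigl(\sum_j m_{ij}y_j\Bigr)^2\Bigr)^{1/2}.
\]
Applying Cauchy--Schwarz once more in $j$ to each inner sum gives $\bigl(\sum_j m_{ij}y_j\bigr)^2\le\bigl(\sum_j m_{ij}^2\bigr)\|y\|_2^2$. Combining the two steps yields $|x^\top M y|\le\|x\|_2\|y\|_2\|M\|_F=\|M\|_F$.

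Taking the supremum over unit $x,y$ gives the claim, because of the variational characterization $\|M\|_2=\sup_{\|x\|_2=\|y\|_2=1}x^\top M y$.

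An equivalent route is through singular values. Since $\|M\|_F^2=\operatorname{tr}(M^\top M)=\sum_k\sigma_k^2\ge\sigma_1^2=\|M\|_2^2$, the inequality follows by taking square roots.

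I expect no real obstacle here. The only point needing care is to use the same definition of $\|\cdot\|_2$ as in Theorem~\ref{the:error_bound}, namely the operator norm induced by the Euclidean norm. Under that definition the variational characterization above is standard.

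In the subsequent proof, this lemma will be applied to the matrices assembled from the componentwise remainders $r_{G,i}$, where entrywise (Frobenius) bounds are the natural ones to obtain.
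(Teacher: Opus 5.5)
Your proposal is correct, and your secondary ``equivalent route'' is exactly the paper's proof. The paper writes $\|M\|_2=\sigma_1(M)\le\bigl(\sum_k\sigma_k(M)^2\bigr)^{1/2}=\|M\|_F$ and stops there.

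Your primary argument is a different, more elementary route. It bounds the bilinear form by Cauchy--Schwarz and then uses only the definition of the operator norm induced by the Euclidean norm. It therefore does not need the two facts the paper's one-line proof assumes: $\|M\|_2=\sigma_1(M)$ and $\|M\|_F^2=\operatorname{tr}(M^\top M)=\sum_k\sigma_k^2$. In exchange, the singular-value route is shorter and shows when equality holds, namely exactly when $M$ has rank at most one.

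You can also streamline your route. The inner quantity $\sum_i\bigl(\sum_j m_{ij}y_j\bigr)^2$ is just $\|My\|_2^2$. So the row-wise Cauchy--Schwarz step alone gives $\|My\|_2\le\|M\|_F\|y\|_2$. The outer Cauchy--Schwarz over $i$ and the bilinear variational characterization are then unnecessary.
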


\begin{proof}
Let $\sigma_1(M)\geq\sigma_2(M)\geq\cdots\geq0$ be the singular values of
$M$. Then
\begin{equation}
\|M\|_2=\sigma_1(M)
\leq\left(\sum_k\sigma_k(M)^2\right)^{1/2}
=\|M\|_F.
\end{equation}
\end{proof}

\begin{proof}[Proof of Theorem~\ref{the:error_bound}]
The proof consists of three steps.

\textbf{Step 1: Decompose the cross-Hessian approximation error.}
Let $\ell(y,z):=\log p_{A,B\mid Z}(y\mid z)$ and
$Q=P_{Z\mid A=y_A,B=y_B}$. Assumption~\ref{assump:regularity} permits
differentiation through the latent marginalization. First,
\begin{align}
\nabla_{y_A}\log p_{A,B}(y)
&=\frac{1}{p_{A,B}(y)}
\int \nabla_{y_A}p_{A,B\mid Z}(y\mid z)p_Z(z)\,\mathrm dz \notag\\
&=\int \nabla_{y_A}\ell(y,z)
\frac{p_{A,B\mid Z}(y\mid z)p_Z(z)}{p_{A,B}(y)}\,\mathrm dz \notag\\
&=\mathbb E_Q[\nabla_{y_A}\ell(y,Z)].
\label{eq:appendix-the3-fisher-identity}
\end{align}
Differentiating this identity with respect to $y_B$ gives the second-order
Fisher identity
\begin{align}
\mathbf H_{A,B}(y)
={}&\mathbb E_Q[\nabla_{y_A}\nabla_{y_B}^\top\ell(y,Z)] \notag\\
&+\operatorname{Cov}_Q\!\left(
\nabla_{y_A}\ell(y,Z),\nabla_{y_B}\ell(y,Z)\right).
\label{eq:appendix-the3-second-fisher-identity}
\end{align}
Because $Z$ d-separates $A$ and $B$,
$\ell(y,z)=\log p_{A\mid Z}(y_A\mid z)+\log p_{B\mid Z}(y_B\mid z)$.
The mixed derivative in Equation~(\ref{eq:appendix-the3-second-fisher-identity})
is therefore zero, while the two gradients are the conditional scores. Thus
\begin{equation}
\mathbf H_{A,B}(y)
=\operatorname{Cov}_Q\!\left(
\mathbf s_A(y_A,Z),\mathbf s_B(y_B,Z)\right).
\label{eq:score-covariance-approximate}
\end{equation}

To expand this covariance componentwise, let $\mathbf v_{G,i}^\top$ denote
the $i$th row of $\mathbf V_G$. Lemma~\ref{lem:conditional-score-taylor} gives, for $i\in G$,
\begin{equation}
s_{G,i}(y_G,Z)
=s_{G,i}(y_G,z_0)+\mathbf v_{G,i}^\top(Z-z_0)+r_{G,i}(Z).
\label{eq:appendix-the3-component-score-decomposition}
\end{equation}
For $i\in A$ and $j\in B$, constants vanish under covariance, so
Equations~(\ref{eq:score-covariance-approximate}) and
(\ref{eq:appendix-the3-component-score-decomposition}) imply
\begin{align}
H_{ij}(y)
={}&\mathbf v_{A,i}^\top\operatorname{Cov}_Q(Z)\mathbf v_{B,j}
+\mathbf v_{A,i}^\top\operatorname{Cov}_Q(Z,r_{B,j}) \notag\\
&+\operatorname{Cov}_Q(r_{A,i},Z)\mathbf v_{B,j}
+\operatorname{Cov}_Q(r_{A,i},r_{B,j}).
\label{eq:appendix-the3-entrywise-decomposition}
\end{align}
Stacking Equation~(\ref{eq:appendix-the3-entrywise-decomposition}) over all
$i\in A$ and $j\in B$ yields
\begin{align}
\mathbf H_{A,B}(y)
={}&\mathbf V_A\operatorname{Cov}_Q(Z)\mathbf V_B^\top
+\mathbf V_A\operatorname{Cov}_Q(Z,\mathbf r_B) \notag\\
&+\operatorname{Cov}_Q(\mathbf r_A,Z)\mathbf V_B^\top
+\operatorname{Cov}_Q(\mathbf r_A,\mathbf r_B).
\label{eq:appendix-the3-matrix-decomposition}
\end{align}
Under Assumption~\ref{assump:local-linearity}, this first term is the cross-Hessian factorization
$\mathbf H^{\mathrm{aff}}_{A,B}(y):=\mathbf V_A\operatorname{Cov}_Q(Z)\mathbf V_B^\top$. Therefore the
approximation error
$\mathbf E:=\mathbf H_{A,B}(y)-\mathbf H^{\mathrm{aff}}_{A,B}(y)$ is exactly
\begin{equation}
\mathbf E
=\mathbf V_A\operatorname{Cov}_Q(Z,\mathbf r_B)
+\operatorname{Cov}_Q(\mathbf r_A,Z)\mathbf V_B^\top
+\operatorname{Cov}_Q(\mathbf r_A,\mathbf r_B).
\label{eq:appendix-the3-error-decomposition}
\end{equation}

\textbf{Step 2: Bound the nonlinear Taylor terms.}
Let $\widetilde C_{G,i}$ be the curvature constants in Theorem~\ref{the:error_bound}.
By Lemma~\ref{lem:conditional-score-taylor},
\begin{equation}
|r_{G,i}(Z)|\leq\frac12\widetilde C_{G,i}\delta^2
\qquad Q\text{-almost surely}.
\label{eq:appendix-the3-uniform-component-bound}
\end{equation}
The variance of each remainder component therefore satisfies
\begin{align}
\operatorname{Var}_Q(r_{G,i})
&=\mathbb E_Q[(r_{G,i}-\mathbb E_Qr_{G,i})^2] \notag\\
&\leq\mathbb E_Q[r_{G,i}^2]
\leq\frac14\widetilde C_{G,i}^2\delta^4.
\label{eq:appendix-the3-remainder-variance}
\end{align}
For $i\in A$ and $j\in B$, scalar Cauchy--Schwarz gives
\begin{align}
|\operatorname{Cov}_Q(r_{A,i},r_{B,j})|
&\leq
\sqrt{\operatorname{Var}_Q(r_{A,i})
      \operatorname{Var}_Q(r_{B,j})} \notag\\
&\leq\frac14\widetilde C_{A,i}\widetilde C_{B,j}\delta^4.
\label{eq:appendix-the3-remainder-cross-covariance}
\end{align}
Using Lemma~\ref{lem:spectral-frobenius} and summing the squared entrywise
bounds gives
\begin{align}
\|\operatorname{Cov}_Q(\mathbf r_A,\mathbf r_B)\|_2
&\leq\|\operatorname{Cov}_Q(\mathbf r_A,\mathbf r_B)\|_F \notag\\
&\leq\frac{\delta^4}{4}
\left(\sum_{i\in A}\widetilde C_{A,i}^2\right)^{1/2}
\left(\sum_{j\in B}\widetilde C_{B,j}^2\right)^{1/2} \notag\\
&=\frac14\|\widetilde{\mathbf C}_A\|_2\|\widetilde{\mathbf C}_B\|_2\delta^4.
\label{eq:appendix-the3-remainder-matrix-bound}
\end{align}

It remains to control the two latent--remainder covariance matrices. For a
unit vector $d\in\mathbb R^d$, scalar Cauchy--Schwarz,
Lemma~\ref{lem:posterior-ball-covariance}, and
Equation~(\ref{eq:appendix-the3-remainder-variance}) give
\begin{align}
|d^\top\operatorname{Cov}_Q(Z,r_{G,i})|
&=|\operatorname{Cov}_Q(d^\top Z,r_{G,i})| \notag\\
&\leq
\sqrt{\operatorname{Var}_Q(d^\top Z)
      \operatorname{Var}_Q(r_{G,i})} \notag\\
&\leq \delta\left(\frac12\widetilde C_{G,i}\delta^2\right)
=\frac12\widetilde C_{G,i}\delta^3.
\end{align}
Taking the supremum over unit $d$ yields
\begin{equation}
\|\operatorname{Cov}_Q(Z,r_{G,i})\|_2
\leq\frac12\widetilde C_{G,i}\delta^3.
\label{eq:appendix-the3-latent-component-covariance}
\end{equation}
The columns of $\operatorname{Cov}_Q(Z,\mathbf r_B)$ are
$\operatorname{Cov}_Q(Z,r_{B,j})$. Hence, again using
Lemma~\ref{lem:spectral-frobenius},
\begin{align}
\|\operatorname{Cov}_Q(Z,\mathbf r_B)\|_2
&\leq\|\operatorname{Cov}_Q(Z,\mathbf r_B)\|_F \notag\\
&=\left(\sum_{j\in B}
\|\operatorname{Cov}_Q(Z,r_{B,j})\|_2^2\right)^{1/2} \notag\\
&\leq\frac12\delta^3
\left(\sum_{j\in B}\widetilde C_{B,j}^2\right)^{1/2}
=\frac12\|\widetilde{\mathbf C}_B\|_2\delta^3.
\label{eq:appendix-the3-latent-B-bound}
\end{align}
The same argument applied to the rows of
$\operatorname{Cov}_Q(\mathbf r_A,Z)$ gives
\begin{equation}
\|\operatorname{Cov}_Q(\mathbf r_A,Z)\|_2
\leq\frac12\|\widetilde{\mathbf C}_A\|_2\delta^3.
\label{eq:appendix-the3-A-latent-bound}
\end{equation}

\textbf{Step 3: Bound the total approximation error.}
Applying the triangle inequality and submultiplicativity of the spectral
norm to Equation~(\ref{eq:appendix-the3-error-decomposition}) gives
\begin{align}
\|\mathbf E\|_2
\leq{}&\|\mathbf V_A\|_2
\|\operatorname{Cov}_Q(Z,\mathbf r_B)\|_2 \notag\\
&+\|\operatorname{Cov}_Q(\mathbf r_A,Z)\|_2
\|\mathbf V_B\|_2
+\|\operatorname{Cov}_Q(\mathbf r_A,\mathbf r_B)\|_2.
\label{eq:appendix-the3-total-error-prebound}
\end{align}
Substituting Equations~(\ref{eq:appendix-the3-remainder-matrix-bound}),
(\ref{eq:appendix-the3-latent-B-bound}), and
(\ref{eq:appendix-the3-A-latent-bound}) into
Equation~(\ref{eq:appendix-the3-total-error-prebound}) yields
\begin{align}
\|\mathbf H_{A,B}(y)-\mathbf H^{\mathrm{aff}}_{A,B}(y)\|_2
\leq{}&\frac12\delta^3
\left(\|\mathbf V_A\|_2\|\widetilde{\mathbf C}_B\|_2+\|\mathbf V_B\|_2\|\widetilde{\mathbf C}_A\|_2\right) \notag\\
&+\frac14\delta^4\|\widetilde{\mathbf C}_A\|_2\|\widetilde{\mathbf C}_B\|_2,
\end{align}
which is Equation~(\ref{eq:approximate-affine-cross-hessian-bound}).
\end{proof}

\subsection{KDE Cross-Hessian Estimation and Rank Recovery}
\label{app:kde-rank-recovery}
\label{app:kde-cross-hessian-error}
\begin{theorem}[\textbf{Pointwise KDE Cross-Hessian Estimation Error}]
\label{the:kde-cross-hessian-error}
Fix an interior point $y=(y_A^\top,y_B^\top)^\top\in\mathbb R^s$. Suppose
$p_{A,B}$ is bounded away from zero and has bounded continuous derivatives
up to order four in a neighborhood of $y$. Let $\mathcal K$ be a twice
differentiable second-order kernel whose derivatives up to order two are
bounded and square-integrable. Then, for a deterministic bandwidth $h$, with
probability at least $1-\eta$,
{\small
\begin{equation}
\left\|\widehat{\mathbf H}_{A,B}(y)-\mathbf H_{A,B}(y)\right\|_2
\leq C\left\{h^2+\sqrt{\frac{\log(C_0/\eta)}{Nh^{s+4}}}\right\}.
\label{eq:kde-cross-hessian-error}
\end{equation}
}
provided $Nh^s\geq\log(C_0/\eta)$, where $C$ and $C_0$ depend only on
the local density bounds, the kernel, and the fixed dimensions.
\end{theorem}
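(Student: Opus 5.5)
The plan is to reduce the cross-Hessian error to pointwise errors in the three KDE quantities that enter Equation~(\ref{eq:kde-cross-hessian}): the density $\widehat p_h(y)$, its gradient blocks $\nabla_{y_A}\widehat p_h(y)$ and $\nabla_{y_B}\widehat p_h(y)$, and the mixed second-derivative block $\nabla^2_{y_A,y_B}\widehat p_h(y)$. We then propagate these errors through the smooth map
\[
\Phi(q,g_A,g_B,M)=\frac{M}{q}-\frac{g_Ag_B^\top}{q^2}.
\]
Write $\mathbf H_{A,B}(y)=\Phi(p,\nabla_{y_A}p,\nabla_{y_B}p,\nabla^2_{y_A,y_B}p)$ with $p=p_{A,B}$, as in the derivation of Appendix~\ref{app:kde-cross-hessian-derivation}. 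Each derivative estimator splits into a deterministic bias plus a centered stochastic fluctuation. The target is to show that, with probability at least $1-\eta$, each of the three error types is bounded by $C\{h^2+\sqrt{\log(C_0/\eta)/(Nh^{s+2k})}\}$, where $k\in\{0,1,2\}$ is the derivative order.

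\textbf{Bias.} We have $\mathbb E\,\partial^\alpha\widehat p_h(y)=\int \mathcal K(u)\,\partial^\alpha p(y-hu)\,du$, obtained by integrating by parts in the kernel-derivative representation. Take a second-order Taylor expansion of $\partial^\alpha p$ with $|\alpha|\le2$. The first-order term vanishes by symmetry of a second-order kernel, and the remainder is $O(h^2)$ because $p$ has bounded derivatives up to order four near $y$. This is where order-four smoothness is used: the order-two derivative of $p$ needs two further derivatives.

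\textbf{Variance and concentration.} Each estimator $\partial^\alpha\widehat p_h(y)$ is an average of i.i.d.\ terms $W_n=h^{-s-|\alpha|}(\partial^\alpha\mathcal K)((y-Y^{(n)})/h)$. These satisfy $|W_n|\le C h^{-s-|\alpha|}$ by boundedness of the kernel derivatives, and $\operatorname{Var}(W_n)\le C h^{-s-2|\alpha|}$ by square-integrability together with local boundedness of $p$, after the change of variables $u=(y-Y)/h$. Bernstein's inequality then gives the deviation bound
\[
C\sqrt{\frac{\log(1/\eta')}{Nh^{s+2|\alpha|}}}+C\frac{\log(1/\eta')}{Nh^{s+|\alpha|}}.
\]
The condition $Nh^s\ge\log(C_0/\eta)$ makes the second (range) term dominated by the first. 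We apply this entrywise and take a union bound over the finitely many entries, at most $1+|A|+|B|+|A||B|$ of them; this count is absorbed into $C_0$. The dominant stochastic term comes from the mixed second derivatives and has order $\sqrt{\log(C_0/\eta)/(Nh^{s+4})}$. The lower-order estimators have smaller error, since $h\lesssim1$ may be assumed; otherwise the bound is trivial after enlarging $C$.

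\textbf{Perturbation of $\Phi$.} This is the step I expect to be the main obstacle, because of the denominator: we need $\widehat p_h(y)\ge p(y)/2$ on the good event. If the density error is at most $p(y)/2$, then $\Phi$ is Lipschitz on the corresponding neighborhood. Its Lipschitz constant is controlled by $p(y)^{-1}$, $p(y)^{-2}$, $p(y)^{-3}$ and by local bounds on the derivatives of $p$. Multiplying the entrywise errors by this constant and converting entrywise bounds to the spectral norm via Lemma~\ref{lem:spectral-frobenius} then yields Equation~(\ref{eq:kde-cross-hessian-error}).

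The catch is that the density error can only be guaranteed below $p(y)/2$ when $h^2+\sqrt{\log(C_0/\eta)/(Nh^s)}$ is small, and the hypothesis $Nh^s\ge\log(C_0/\eta)$ alone makes this quantity only $O(1)$. My first attempt would handle this by a case split. If the right-hand side of Equation~(\ref{eq:kde-cross-hessian-error}) exceeds a constant, the claim becomes vacuous-but-true only after we note that the error is bounded, and it is not a priori bounded when $\widehat p_h(y)$ is near zero. Consequently I would instead enlarge $C_0$ so that the hypothesis forces $\sqrt{\log(C_0/\eta)/(Nh^s)}\le c\,p(y)$, and restrict to $h\le h_0$, absorbing larger $h$ into the constant. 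If that fails, I would state the result with the explicit extra requirement that $h$ is small and that $Nh^s/\log(C_0/\eta)$ is large.
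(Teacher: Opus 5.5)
Your outline is the paper's proof: an $O(h^2)$ bias from the second-order kernel and fourth-order smoothness, then Bernstein with the range term absorbed via $Nh^s\ge\log(C_0/\eta)$, a union bound over the finitely many entries, and Lipschitz propagation through $\Phi$. The paper does not resolve the denominator issue you flag either. It simply asserts that on the good event $\widehat p_h(y)$ stays bounded away from zero ``for sufficiently small $h$ and sufficiently large $N$'', and it uses $h\le 1$ without stating it. So the step you identify as the main obstacle is exactly where the paper's argument relies on conditions that are not in the statement.

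\textbf{Your first remedy does not work.} If the same $C_0$ appears in the hypothesis and in the Bernstein bound, the hypothesis only gives $\sqrt{\log(C_0/\eta)/(Nh^s)}\le 1$, however large $C_0$ is. If you decouple the constants, $\log(C_0'/\eta)/\log(C_0/\eta)\to 1$ as $\eta\to 0$. Either way, the density error cannot be forced below $p(y)/2$ uniformly in $\eta$. For the same reason, ``$h>1$ is trivial after enlarging $C$'' is unjustified, because $\widehat{\mathbf H}_{A,B}(y)$ is not a priori bounded.

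\textbf{The literal statement can fail.} Take a $C^2$ second-order kernel supported in the unit ball, e.g.\ the triweight. Then $\widehat p_h(y)=0$, so the estimator is undefined, with probability $(1-\pi_h)^N\ge e^{-2N\pi_h}$ for small $h$. Here $\pi_h\le v_s\bar p\,h^s$, where $v_s$ is the unit-ball volume and $\bar p$ is a local upper bound on $p$. At $Nh^s=\log(C_0/\eta)$ this probability is at least $(\eta/C_0)^{2v_s\bar p}$, which exceeds $\eta$ for all small $\eta$ whenever $2v_s\bar p<1$. Your fallback is therefore genuinely needed: require $h\le h_0$ and $Nh^s\ge K\log(C_0/\eta)$, with $K$ depending on $\inf p$ near $y$ and on the kernel. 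This is what the paper's proof tacitly uses.

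\textbf{One more point in the $\Phi$ step needs to be made explicit.} On the good event the gradient errors can be of order $h^{-1}$, since $\sqrt{\log(C_0/\eta)/(Nh^{s+2})}=h^{-1}\sqrt{\log(C_0/\eta)/(Nh^s)}$. So $\Phi$ is not Lipschitz there with a constant depending only on local bounds of $p$, contrary to what both you and the paper assert. The argument still goes through, because the only dangerous contribution is the product of the two gradient errors, and $Nh^s\ge\log(C_0/\eta)$ gives $\log(C_0/\eta)/(Nh^{s+2})\le\sqrt{\log(C_0/\eta)/(Nh^{s+4})}$. The quadratic term is therefore absorbed into the stated rate.
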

\begin{proof}
\textcolor{blue}{Let $p=p_{A,B}$, and let $\gamma$ be a multi-index of
order $r=|\gamma|\leq2$. Differentiating the finite KDE sum gives
{\small
\begin{equation}
D^\gamma\widehat p_h(y)
=\frac{1}{Nh^{s+r}}\sum_{i=1}^N
D^\gamma\mathcal K\!\left(\frac{y-Y^{(i)}}{h}\right).
\label{eq:kde-derivative-estimator}
\end{equation}
}
Because $\mathcal K$ is a second-order kernel and $p$ has bounded continuous
derivatives up to order four near $y$, the standard kernel bias expansion
yields
{\small
\begin{equation}
\left|\mathbb E[D^\gamma\widehat p_h(y)]-D^\gamma p(y)\right|
\leq B_\gamma h^2.
\label{eq:kde-derivative-bias}
\end{equation}
}
The boundedness and square integrability of $D^\gamma\mathcal K$, followed by
Bernstein's inequality, imply that, with probability at least $1-\eta$,
{\small
\begin{equation}
\left|D^\gamma\widehat p_h(y)-
\mathbb E[D^\gamma\widehat p_h(y)]\right|
\leq C_\gamma\!\left\{
\sqrt{\frac{\log(C_0/\eta)}{Nh^{s+2r}}}
+\frac{\log(C_0/\eta)}{Nh^{s+r}}\right\}.
\label{eq:kde-derivative-concentration}
\end{equation}
}
A union bound makes these inequalities hold simultaneously for the density,
all first derivatives, and all second derivatives. Under
$Nh^s\geq\log(C_0/\eta)$, the second stochastic term is dominated by the
square-root term. Since $r\leq2$ and $h\leq1$, the common upper bound is
{\small
\begin{equation}
\Delta_{N,h,\eta}
:=C'\left\{h^2+
\sqrt{\frac{\log(C_0/\eta)}{Nh^{s+4}}}\right\}.
\label{eq:kde-common-derivative-bound}
\end{equation}
}}

\textcolor{blue}{Write $p_0=p(y)$, $\mathbf d_A=\nabla_{y_A} p(y)$,
$\mathbf d_B=\nabla_{y_B} p(y)$, and $M_{A,B}=\nabla^2_{y_A,y_B}p(y)$. Define
{\small
\begin{equation}
\Phi(p_0,\mathbf d_A,\mathbf d_B,M_{A,B})
:=\frac{M_{A,B}}{p_0}-\frac{\mathbf d_A\mathbf d_B^\top}{p_0^2}.
\label{eq:log-hessian-map}
\end{equation}
}}

\textcolor{blue}{
Then $\mathbf H_{A,B}(y)=\Phi(p_0,\mathbf d_A,\mathbf d_B,M_{A,B})$, and the analogous
identity holds for the KDE quantities. Because $p$ is bounded away from zero
near $y$, the event in Equation~(\ref{eq:kde-common-derivative-bound}) also
ensures $\widehat p_h(y)$ is bounded away from zero for sufficiently small
$h$ and sufficiently large $N$. On this region, $\Phi$ is Lipschitz in all
four arguments, with a constant depending only on the local density and
derivative bounds. Consequently,
{\small
\begin{equation}
\left\|\widehat{\mathbf H}_{A,B}(y)-
\mathbf H_{A,B}(y)\right\|_2
\leq C\left\{h^2+
\sqrt{\frac{\log(C_0/\eta)}{Nh^{s+4}}}\right\}.
\end{equation}
}
Absorbing the fixed matrix dimensions and the preceding Lipschitz constant
into $C$ proves Equation~(\ref{eq:kde-cross-hessian-error}).}
\end{proof}

\begin{theorem}[\textbf{Finite-Sample Cross-Hessian Rank Recovery}]
\label{the:empirical_rank_id}
Under the assumptions of Theorem~\ref{thm:cross-hessian-constraints}, fix an
interior point $y$ at which $\operatorname{rank}(\mathbf H_{A,B}(y))=d^*$.
Let $\alpha$ satisfy, with probability at least $1-\eta$,
$\|\widehat{\mathbf H}_{A,B}(y)-\mathbf H_{A,B}(y)\|_2\leq\alpha$.
If $\sigma_{d^*}(\mathbf H_{A,B}(y))>2\alpha$, then, with probability at
least $1-\eta$,
{\small
\begin{equation}
\operatorname{rank}\!\left(\operatorname{SVD}_{\alpha}
(\widehat{\mathbf H}_{A,B}(y))\right)=d^*,
\label{eq:finite-confidence-rank}
\end{equation}
}
where $\operatorname{SVD}_{\alpha}$ retains only singular values strictly
larger than $\alpha$.
\end{theorem}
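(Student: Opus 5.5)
The plan is to apply Weyl's inequality for singular values on the high-probability event
\[
\mathcal E:=\bigl\{\|\widehat{\mathbf H}_{A,B}(y)-\mathbf H_{A,B}(y)\|_2\le\alpha\bigr\},
\]
which by hypothesis has probability at least $1-\eta$. Everything is deterministic on $\mathcal E$, so the probabilistic part of the statement is immediate once the deterministic claim is shown. Write $\mathbf H=\mathbf H_{A,B}(y)$ and $\widehat{\mathbf H}=\widehat{\mathbf H}_{A,B}(y)$. Both are $|A|\times|B|$ matrices, and their singular values are ordered decreasingly.

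First, I record the spectral structure of $\mathbf H$. By Theorem~\ref{thm:cross-hessian-constraints} and the choice of $y$, $\operatorname{rank}(\mathbf H)=d^*$. Hence $\sigma_{d^*}(\mathbf H)>0$ and $\sigma_j(\mathbf H)=0$ for all $j>d^*$. The hypothesis further strengthens the first fact to $\sigma_{d^*}(\mathbf H)>2\alpha$.

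Second, I invoke Weyl's perturbation inequality for singular values: $|\sigma_j(\widehat{\mathbf H})-\sigma_j(\mathbf H)|\le\|\widehat{\mathbf H}-\mathbf H\|_2$ for every $j$. This follows from the Courant--Fischer min--max characterization of singular values, or equivalently from Weyl's eigenvalue inequality applied to the Hermitian dilation $\begin{pmatrix}0&\mathbf M\\ \mathbf M^\top&0\end{pmatrix}$. On $\mathcal E$ it gives two bounds:
\[
\sigma_j(\widehat{\mathbf H})\ \ge\ \sigma_j(\mathbf H)-\alpha\ \ge\ \sigma_{d^*}(\mathbf H)-\alpha\ >\ \alpha\quad (j\le d^*),
\]
\[
\sigma_j(\widehat{\mathbf H})\ \le\ \alpha\quad (j>d^*).
\]
Consequently, exactly $d^*$ singular values of $\widehat{\mathbf H}$ exceed $\alpha$ strictly. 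Since $\operatorname{SVD}_\alpha$ keeps exactly these components and zeroes the rest, the thresholded matrix $\operatorname{SVD}_\alpha(\widehat{\mathbf H})=\sum_{j\le d^*}\sigma_j\mathbf u_j\mathbf v_j^\top$ has rank $d^*$, because its retained singular values are positive and its singular vectors are orthonormal.

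The only delicate point is the boundary convention. A discarded singular value can equal $\alpha$ exactly, and the strict inequality in the definition of $\operatorname{SVD}_\alpha$ is what excludes it. Meanwhile the strict margin $\sigma_{d^*}(\mathbf H)>2\alpha$ guarantees that the retained values land strictly above $\alpha$. I expect no real obstacle beyond stating Weyl's inequality for rectangular matrices correctly. If it is needed, I would cite it via the dilation argument rather than reprove it.
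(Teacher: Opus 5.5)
Your proposal is correct and follows essentially the same route as the paper: on the event $\|\widehat{\mathbf H}_{A,B}(y)-\mathbf H_{A,B}(y)\|_2\le\alpha$, apply Weyl's singular-value perturbation inequality. This shows that the last singular values (indices $j>d^*$) stay at most $\alpha$, while $\sigma_{d^*}(\widehat{\mathbf H}_{A,B}(y))\ge\sigma_{d^*}(\mathbf H_{A,B}(y))-\alpha>\alpha$, so hard thresholding retains exactly $d^*$ singular values. Your added remarks on the strict-inequality convention and on justifying Weyl's inequality for rectangular matrices via the Hermitian dilation are fine refinements of the same argument.
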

\label{app:the4}
\begin{proof}
\textcolor{blue}{On the event in Theorem~\ref{the:empirical_rank_id},
Weyl's inequality gives
\[
\left|\sigma_j(\widehat{\mathbf H}_{A,B}(y))
-\sigma_j(\mathbf H_{A,B}(y))\right|\leq\alpha
\]
for every $j$. By the premise of the theorem,
$\mathbf H_{A,B}(y)$ has rank $d^*$, so every singular value of
$\widehat{\mathbf H}_{A,B}(y)$ with $j>d^*$ is at most $\alpha$.
Moreover,
\[
\sigma_{d^*}(\widehat{\mathbf H}_{A,B}(y))
\geq\sigma_{d^*}(\mathbf H_{A,B}(y))-\alpha>\alpha.
\]
Thus exactly the first $d^*$ singular values remain after hard thresholding
at $\alpha$.}
\end{proof}

\paragraph{Scope of the bootstrap calibration.}
\textcolor{blue}{Theorem~\ref{the:empirical_rank_id} is conditional on a valid
pointwise spectral-norm bound for the observed-data estimator. The bootstrap in
the main text provides an operational numerical calibration of this estimation
error; it is not invoked in either proof, and no bootstrap rank-consistency
claim is made.}

\subsection{Evaluation Points and Rank Aggregation}\label{app:rank-aggregation}\textcolor{blue}{For each queried pair $(A,B)$, we select the KDE bandwidth $h$ once by leave-one-out likelihood cross-validation. We begin with a prespecified finite candidate set of interior points, discard candidates whose fitted density is below a prespecified lower quantile, and denote the retained set by $\mathcal Y_{\mathrm{eval}}=\{y_1,\ldots,y_M\}$. After this selection, both $h$ and $\mathcal Y_{\mathrm{eval}}$ are held fixed when the original and bootstrap cross-Hessians are computed.}\par\textcolor{blue}{At each $y_m$, let $\tau_{\mathrm{boot},m}$ be the bootstrap spectral-norm threshold and set $\tau_m=\max\{\tau_{\mathrm{boot},m},\rho\,\sigma_1(\widehat{\mathbf H}_{A,B}(y_m))\}$. The pointwise estimate $\widehat r_m$ is the number of singular values of $\widehat{\mathbf H}_{A,B}(y_m)$ that exceed $\tau_m$. We report the modal value among $\widehat r_1,\ldots,\widehat r_M$, resolving a tie in favor of the larger rank. All empirical HRC decisions use this aggregated rank.}\par\textcolor{blue}{This procedure is an operational aggregation rule. Theorem~\ref{the:empirical_rank_id} applies when a valid pointwise estimator-error bound is available and does not itself establish bootstrap validity for data-dependent bandwidths or evaluation points. Selecting $h$ and $\mathcal Y_{\mathrm{eval}}$ on an independent pilot sample avoids reusing the estimation sample for these choices. For multiple fixed evaluation points, valid pointwise bounds at level $\eta/M$ can be combined by a union bound. Correspondingly, $q_{\mathrm{boot}}=1-\eta/M$ is used as a practical Bonferroni calibration rather than as a separate bootstrap-consistency guarantee.}\par\subsection{Proof of Theorem~\ref{thm:latent-ci-hrc}}
\label{app:the5}

\begin{theorem}[Latent Conditional Independence via HRC]
Assume a correctly identified pure nonlinear one-factor measurement model whose latent distribution is Markov and faithful to its latent DAG. Let $L_i,L_j$ and $Q$ be as in Theorem~\ref{thm:latent-ci-hrc}, with $|Q|=q$, and construct $A,B$ from pure indicators as specified there. Suppose Assumptions~\ref{assump:regularity}--\ref{assump:nondegeneracy} hold for these groups with respect to their minimum latent separator. Then
\[
L_i\perp\!\!\!\perp L_j\mid Q
\quad\Longleftrightarrow\quad
\operatorname{rank}\!\left(\mathbf H_{A,B}\right)=q.
\]
\end{theorem}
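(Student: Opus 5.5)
The plan is to reduce both directions to Theorem~\ref{thm:cross-hessian-constraints}, which gives $\operatorname{rank}(\mathbf H_{A,B})=d^*$, where $d^*$ is the size of a minimum latent set d-separating $A$ and $B$. The whole argument then becomes a graphical statement: $d^*=q$ if and only if $L_i\perp\!\!\!\perp L_j\mid Q$. The proof therefore consists of computing the minimum latent separator of the surrogate groups in the pure one-factor model.

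Write $A=\{X^{(i)}\}\cup\{X_k^{A}:L_k\in Q\}$ and $B=\{X^{(j)}\}\cup\{X_k^{B}:L_k\in Q\}$. Since every indicator is pure, its only parent is its key latent, and it has no observed parents or correlated errors. Hence any path between an element of $A$ and an element of $B$ leaves the first indicator through its key latent and enters the second through its key latent. So a latent set $\mathcal Z$ d-separates $A$ from $B$ if and only if it d-separates the key-latent sets $\{L_i\}\cup Q$ and $\{L_j\}\cup Q$ in the full graph, with the caveat that a key latent belonging to both sides must itself lie in $\mathcal Z$.

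The main step is a lower bound: any latent d-separator must contain all of $Q$. For $L_k\in Q$, the path $X_k^{A}\leftarrow L_k\rightarrow X_k^{B}$ is a fork through $L_k$, and no other latent lies on it. Thus $|\mathcal Z|\ge q$, and $d^*=q$ exactly when $Q$ itself is a separator. Conditioning on $Q$ blocks every path that passes through a member of $Q$ as a non-collider. The remaining question is whether $Q$ blocks all paths from $X^{(i)}$ to $X^{(j)}$, that is, whether $L_i$ is d-separated from $L_j$ given $Q$ in the latent DAG. Observed nodes are sinks, so they can appear on such paths only as colliders, and they are not conditioned on. Because they are not ancestors of $Q$, these paths are blocked. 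Paths through the $Q$-indicators likewise pass through their key latent in $Q$ and are blocked. So d-separation in $\mathcal G$ reduces to d-separation in the latent DAG.

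With this reduction, the two directions follow. ($\Rightarrow$) If $L_i\perp\!\!\!\perp L_j\mid Q$, faithfulness of the latent distribution gives d-separation in the latent DAG. Then $Q$ separates $A$ and $B$, so $d^*=q$, and Theorem~\ref{thm:cross-hessian-constraints} yields $\operatorname{rank}(\mathbf H_{A,B})=q$. ($\Leftarrow$) If the rank equals $q$, Theorem~\ref{thm:cross-hessian-constraints} gives $d^*=q$. By the lower bound, every separator of size $q$ equals $Q$, so $Q$ d-separates $A$ and $B$. This gives d-separation of $L_i$ and $L_j$ given $Q$ in the latent DAG, and the Markov property then yields conditional independence.

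I expect the main obstacle to be the path bookkeeping in the reduction step. In particular, one must check that conditioning on $Q$ cannot open a collider path that routes through observed indicators. This relies on purity and on the fact that observed variables have no latent descendants. One must also make sure that Assumption~\ref{assump:nondegeneracy} is applied to the minimum separator in each direction. In the ($\Leftarrow$) direction this is automatic, since the hypothesis fixes it with respect to the minimum separator, whatever that separator is.
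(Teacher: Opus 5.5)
Your proposal is correct and follows essentially the same route as the paper. The fork $X_k^{A}\leftarrow L_k\rightarrow X_k^{B}$ forces every latent separator to contain $Q$, purity reduces separation of $A$ and $B$ by $Q$ to separation of $L_i$ and $L_j$ by $Q$ in the latent DAG, and Theorem~\ref{thm:cross-hessian-constraints} together with faithfulness (forward) or the Markov property (converse) closes both directions, with your explicit check that observed sinks cannot open collider paths being a slightly more careful version of the paper's brief path argument.
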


\begin{proof}
Suppose first that $L_i\perp\!\!\!\perp L_j\mid Q$. Under conditional-independence faithfulness of the latent DAG, $Q$ d-separates $L_i$ and $L_j$. Purity of the measurement model then implies that $Q$ d-separates the constructed observed groups $A$ and $B$. To see this, consider any path between the two groups. If either endpoint is a pure indicator of some $L_k\in Q$, the path must pass through $L_k$ as a noncollider and is blocked by conditioning on $Q$. Otherwise, the endpoints are the selected indicators of $L_i$ and $L_j$; removing their two measurement edges gives a path between $L_i$ and $L_j$, which is blocked by $Q$.

Moreover, $Q$ is a minimum latent d-separator for these groups. Indeed, for every $L_k\in Q$, one pure indicator of $L_k$ lies in $A$ and another lies in $B$. The length-two path
\[
X_k^{A}\leftarrow L_k\rightarrow X_k^{B}
\]
is active unless the latent noncollider $L_k$ is included in the separator. Hence every latent d-separator between $A$ and $B$ contains all $q$ members of $Q$. Theorem~\ref{thm:cross-hessian-constraints} therefore yields
\[
\operatorname{rank}\!\left(\mathbf H_{A,B}\right)=q.
\]

Conversely, suppose the cross-Hessian function has rank $q$. Theorem~\ref{thm:cross-hessian-constraints} identifies the minimum latent-separator cardinality as $q$. The pure-indicator paths above force every latent separator to contain all members of $Q$; a separator of cardinality $q$ must therefore be exactly $Q$. Thus $Q$ d-separates $A$ and $B$. If an active path between $L_i$ and $L_j$ given $Q$ existed, adjoining the pure-indicator edges $X_i\leftarrow L_i$ and $L_j\rightarrow X_j$ would produce an active path between $A$ and $B$ given $Q$, a contradiction. Hence $Q$ d-separates $L_i$ and $L_j$. The causal Markov property gives
$L_i\perp\!\!\!\perp L_j\mid Q$.
\end{proof}

\subsection{Proof of Theorem~\ref{thm:hrc-pc-correctness}}
\label{app:the6}
\begin{theorem}[\textbf{Correctness of HRC-PC}]
\textcolor{blue}{Suppose Assumptions~\ref{assump:regularity}--\ref{assump:nondegeneracy} hold for every HRC test performed by HRC-PC, and the latent DAG satisfies the causal Markov and ordinary causal faithfulness assumptions. If HRC-based FOFC correctly recovers the pure measurement clusters and their key latents with the indicators required by HRC-PC, and the empirical HRC rank tests are consistent, then HRC-PC asymptotically recovers the Markov equivalence class of the latent DAG.} 
\end{theorem}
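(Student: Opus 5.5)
The plan is a reduction to the classical correctness argument for the PC algorithm. PC is sound and complete for the Markov equivalence class whenever it is supplied with a conditional-independence oracle over the variables it searches, and the graph is Markov and faithful to the distribution. So the task is to show that, asymptotically, every conditional-independence query issued by HRC-PC is answered exactly as a perfect oracle on the latent variables would answer it.

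\textbf{Step 1: the input is correct.} By hypothesis, HRC-based FOFC returns the correct pure clusters and their key latents. Proposition~\ref{pro:unique_latent} ensures each recovered cluster corresponds to a unique latent, up to perfect correlation. The variable set over which HRC-PC searches is therefore exactly the set of latent nodes of the latent DAG, relabelled. For each latent in a query, the required indicators are available: one indicator for each endpoint, and two indicators for each conditioning latent.

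\textbf{Step 2: population HRC equals the oracle.} Fix any query $(L_i,L_j,Q)$ that HRC-PC may issue. Build $A$ and $B$ as in Theorem~\ref{thm:latent-ci-hrc}. Since Assumptions~\ref{assump:regularity}--\ref{assump:nondegeneracy} hold for every test performed, Theorem~\ref{thm:latent-ci-hrc} gives
\[
\operatorname{rank}(\mathbf H_{A,B})=|Q| \iff L_i\perp\!\!\!\perp L_j\mid Q .
\]
Markov and faithfulness of the latent DAG then turn this equivalence into d-separation of $L_i$ and $L_j$ by $Q$ in the latent DAG. Hence the population HRC decision coincides with a d-separation oracle.

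\textbf{Step 3: from population to samples.} PC issues only finitely many queries, since there are finitely many latents and conditioning sets. Consistency of each empirical rank test means each test returns the population rank with probability tending to one. Because the number of tests is finite, a union bound shows that all tests are simultaneously correct with probability tending to one.

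\textbf{Step 4: conclude via PC.} On the event from Step 3, HRC-PC executes exactly the oracle version of PC. Specifically:
\begin{itemize}
\item the skeleton phase removes an edge precisely when some $Q$ d-separates its endpoints, which happens exactly for non-adjacent pairs;
\item the stored separating sets are genuine d-separators, so unshielded colliders are oriented correctly;
\item Meek's rules then produce the CPDAG of the latent DAG.
\end{itemize}
Therefore HRC-PC recovers the Markov equivalence class with probability tending to one.

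\textbf{Main obstacle.} The delicate point is Step 3. A pure indicator is needed for each side for every conditioning latent, and the set of queries PC issues is itself data-dependent. I would handle this by bounding the queries by the full finite family of all triples $(L_i,L_j,Q)$. Every such triple must also satisfy the assumptions and the indicator-availability hypothesis; the theorem grants both ("for every HRC test", "with the indicators required"). This makes the union bound legitimate regardless of which queries are actually issued.
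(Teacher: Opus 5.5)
Your proposal is correct and follows essentially the same route as the paper's proof. The correct first-stage clusters supply the surrogate groups, Theorem~\ref{thm:latent-ci-hrc} turns each population HRC rank test into a latent conditional-independence (hence d-separation) oracle, consistency of the empirical tests carries this over asymptotically, and the classical correctness of oracle PC under Markov and faithfulness then yields the Markov equivalence class. Your explicit union bound over the finitely many possible queries makes precise the simultaneity step that the paper leaves implicit when it says the rank decisions ``asymptotically agree'' with the latent CI relations. It would even suffice to apply it only to the fixed finite set of tests issued by the oracle run: by induction, the empirical run coincides with the oracle run whenever those tests are answered correctly.
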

\begin{proof}
By assumption, the first-stage HRC-based FOFC procedure correctly identifies which observed variables share the same key latent, thereby recovering the pure measurement clusters and their associated key latents. Because each relevant cluster contains the required pure indicators, for any latent pair $L_i,L_j$ and conditioning set $Q\subseteq\mathcal L\setminus\{L_i,L_j\}$ considered by HRC-PC, we can construct the corresponding disjoint surrogate groups $A$ and $B$ as specified above.

According to Theorem~\ref{thm:latent-ci-hrc}, the cross-Hessian rank constraint $\mathrm{rank}(\mathbf H_{A,B}) = |Q|$ is strictly equivalent to the latent conditional independence $L_i \perp\!\!\!\perp L_j \mid Q$. By the assumed consistency of the empirical HRC rank test, these rank decisions asymptotically agree with the corresponding latent conditional-independence relations. HRC-PC therefore asymptotically reduces to the oracle PC algorithm on the recovered latent variables.

Under the causal Markov and faithfulness assumptions, the oracle PC algorithm recovers the Markov equivalence class of the latent DAG \citep{spirtes2000causation}. The preceding reduction therefore yields the claimed asymptotic recovery result.
\end{proof}

\section{Further Discussion on Assumptions}

\subsection{Discussion on Pointwise Non-degeneracy}
\label{app:regular_non_degeneracy_condition}
\subsubsection{\textcolor{blue}{Full-Dimensional Conditional Variation}}

\textcolor{blue}{
Fix a minimum-cardinality latent separator $\mathcal Z^*\subseteq\mathcal L$ and, for brevity, write $d=|\mathcal Z^*|$ and $Z\in\mathbb R^d$ for its associated random vector. Assumption~\ref{assump:nondegeneracy}-(i) is a pointwise condition on the conditional distribution of this minimum separator. For $P_{A,B}$-almost every $(a,b)$, it requires finite conditional second moments and
\begin{equation}
\operatorname{Cov}(Z\mid A=a,B=b)\succ 0.
\end{equation}
For every $v\in\mathbb{R}^d$,
\begin{equation}
v^\top\operatorname{Cov}(Z\mid A=a,B=b)v
=
\operatorname{Var}(v^\top Z\mid A=a,B=b).
\end{equation}
Consequently, the covariance is singular if and only if there exist a nonzero $v$ and a scalar $c(a,b)$ such that
$v^\top Z=c(a,b)$ holds $P_{Z\mid A=a,B=b}$-almost surely. Equivalently, the conditional distribution is supported on a proper affine subspace of $\mathbb{R}^d$. This is the precise failure excluded by Assumption~\ref{assump:nondegeneracy}-(i).}

\textcolor{blue}{To illustrate the condition, consider the two-dimensional separator
\begin{equation}
Z=(L_1,L_2)^\top.
\end{equation}
Under conditional independence and Assumption~\ref{assump:local-linearity}, the cross-Hessian factorization is, for $P_{A,B}$-almost every $(a,b)$,
\begin{equation}
\mathbf H_{A,B}
=
\mathbf V_A(a)\,
\operatorname{Cov}(Z\mid A=a,B=b)\,
\mathbf V_B(b)^\top.
\end{equation}
Thus the positive-definiteness condition makes the middle matrix rank two; Assumption~\ref{assump:nondegeneracy}-(ii) then ensures that the two slope matrices do not reduce this rank.
}

\begin{figure}[h]
\centering

\begin{subfigure}{0.48\linewidth}
\centering
\scalebox{0.52}{
\begin{tikzpicture}[
    >={Stealth[length=3.5mm,width=2.4mm]},
    line width=0.9pt,
    latent/.style={
        draw,
        circle,
        minimum size=11mm,
        inner sep=0pt,
        font=\Large\bfseries
    },
    obs/.style={
        draw,
        circle,
        minimum size=11mm,
        inner sep=0pt,
        fill=gray!20,
        font=\Large\bfseries
    },
    groupbox/.style={
        draw,
        rounded corners,
        dashed,
        inner sep=6pt
    },
    edgelabel/.style={
        midway,
        above,
        font=\large,
        fill=white,
        inner sep=1pt
    }
]

% latent variables
\node[latent] (L1) at (0,0) {$L_1$};
\node[latent] (L2) at (6.0,0) {$L_2$};

% observed variables for L1
\node[obs] (X1) at (-2.0,-2.4) {$X_1$};
\node[obs] (X2) at (0,-2.4) {$X_2$};
\node[obs] (X3) at (2.0,-2.4) {$X_3$};

% observed variables for L2
\node[obs] (X4) at (4.0,-2.4) {$X_4$};
\node[obs] (X5) at (6.0,-2.4) {$X_5$};
\node[obs] (X6) at (8.0,-2.4) {$X_6$};

% stochastic latent edge
\draw[->] (L1) -- node[edgelabel] {$L_2=aL_1+\epsilon_2$} (L2);

% latent to observed
\draw[->] (L1) -- (X1);
\draw[->] (L1) -- (X2);
\draw[->] (L1) -- (X4);

\draw[->] (L2) -- (X4);
\draw[->] (L2) -- (X5);
\draw[->] (L2) -- (X6);

% group boxes
\node[
    groupbox,
    fit=(X1)(X2)(X3),
    label={[font=\Large]below:{$A=\{X_1,X_2,X_3\}$}}
] {};

\node[
    groupbox,
    fit=(X4)(X5)(X6),
    label={[font=\Large]below:{$B=\{X_4,X_5,X_6\}$}}
] {};

\end{tikzpicture}
}
\caption{Stochastic linear relation.}
\label{fig:posterior-nondegenerate-stochastic}
\end{subfigure}
\hfill
\begin{subfigure}{0.48\linewidth}
\centering
\scalebox{0.52}{
\begin{tikzpicture}[
    >={Stealth[length=3.5mm,width=2.4mm]},
    line width=0.9pt,
    latent/.style={
        draw,
        circle,
        minimum size=11mm,
        inner sep=0pt,
        font=\Large\bfseries
    },
    obs/.style={
        draw,
        circle,
        minimum size=11mm,
        inner sep=0pt,
        fill=gray!20,
        font=\Large\bfseries
    },
    groupbox/.style={
        draw,
        rounded corners,
        dashed,
        inner sep=6pt
    },
    edgelabel/.style={
        midway,
        above,
        font=\large,
        fill=white,
        inner sep=1pt
    }
]

% latent variables
\node[latent] (L1) at (0,0) {$L_1$};
\node[latent] (L2) at (6.0,0) {$L_2$};

% observed variables for L1
\node[obs] (X1) at (-2.0,-2.4) {$X_1$};
\node[obs] (X2) at (0,-2.4) {$X_2$};
\node[obs] (X3) at (2.0,-2.4) {$X_3$};

% observed variables for L2
\node[obs] (X4) at (4.0,-2.4) {$X_4$};
\node[obs] (X5) at (6.0,-2.4) {$X_5$};
\node[obs] (X6) at (8.0,-2.4) {$X_6$};

% deterministic latent edge
\draw[->] (L1) -- node[edgelabel] {$L_2=aL_1+b$} (L2);

% latent to observed
\draw[->] (L1) -- (X1);
\draw[->] (L1) -- (X2);
\draw[->] (L1) -- (X4);

\draw[->] (L2) -- (X3);
\draw[->] (L2) -- (X5);
\draw[->] (L2) -- (X6);

% group boxes
\node[
    groupbox,
    fit=(X1)(X2)(X3),
    label={[font=\Large]below:{$A=\{X_1,X_2,X_3\}$}}
] {};

\node[
    groupbox,
    fit=(X4)(X5)(X6),
    label={[font=\Large]below:{$B=\{X_4,X_5,X_6\}$}}
] {};

\end{tikzpicture}
}
\caption{Deterministic linear relation.}
\label{fig:posterior-nondegenerate-deterministic}
\end{subfigure}

\vspace{-5pt}
\caption{\textcolor{blue}{Two latent-variable graphs in which $L_1$ and $L_2$ each have children in both observed groups, so the minimum latent d-separator between $A$ and $B$ is $\{L_1,L_2\}$.
In (a), $L_2=aL_1+\epsilon_2$ contains a non-degenerate exogenous noise term; Assumption~\ref{assump:nondegeneracy}-(i) additionally requires $\operatorname{Cov}((L_1,L_2)^\top\mid A=a,B=b)\succ0$ for $P_{A,B}$-almost every $(a,b)$.
In (b), $L_2=aL_1+b$, so the conditional distribution of $(L_1,L_2)^\top$ is supported on the affine line $\{(l_1,l_2):l_2=al_1+b\}$ and its conditional covariance is singular.}}
\label{fig:posterior-nondegenerate}
% \vspace{-15pt}
\end{figure}

In both graphs of Figure~\ref{fig:posterior-nondegenerate}, the fork paths through $L_1$ and $L_2$ show that neither latent variable alone d-separates $A$ and $B$; hence $\{L_1,L_2\}$ is a minimum latent d-separator. We compare two cases: a stochastic linear relation, where $L_2$ receives its own exogenous noise, and a
deterministic linear relation, where $L_2$ is completely determined by $L_1$.

First, consider the stochastic linear relation
\begin{equation}
    L_2=aL_1+\epsilon_2,
\qquad
\epsilon_2\sim \mathcal N(0,\sigma_2^2),
\qquad
\sigma_2^2>0.
\end{equation}
\textcolor{blue}{
Here $\epsilon_2$ provides an additional exogenous source of variation for $L_2$. Its presence alone is not used as a substitute for Assumption~\ref{assump:nondegeneracy}-(i). The required condition is stated directly: for $P_{A,B}$-almost every $(a,b)$,
\begin{equation}
\operatorname{Cov}\!\left((L_1,L_2)^\top\mid A=a,B=b\right)\succ0,
\qquad
\operatorname{rank}\!\left(
\operatorname{Cov}\!\left((L_1,L_2)^\top\mid A=a,B=b\right)
\right)=2.
\end{equation}
}
If the observation matrices $\mathbf V_A(A)$ and $\mathbf V_B(B)$ both have full column rank two, then
multiplication by these matrices does not reduce the rank. Hence
\begin{equation}
    \mathrm{rank}(\mathbf H_{A,B})
=
\mathrm{rank}\!\left(
\mathbf V_A(A)\,
\operatorname{Cov}(Z\mid A,B)\,
\mathbf V_B(B)^\top
\right)
=
2.
\end{equation}
Therefore, in the stochastic linear case, the cross-Hessian rank correctly recovers the two-dimensional
separating the latent set.

In contrast, consider the deterministic linear relation
\begin{equation}
    L_2=aL_1+b.
\end{equation}
\textcolor{blue}{
Let $v=(-a,1)^\top$. The structural equality implies
\begin{equation}
v^\top Z=L_2-aL_1=b
\end{equation}
under every conditional distribution $P_{Z\mid A=a,B=b}$. Therefore,
\begin{equation}
\operatorname{Var}(v^\top Z\mid A=a,B=b)=0,
\end{equation}
so $\operatorname{Cov}(Z\mid A=a,B=b)$ is singular and the conditional distribution is supported on the proper affine line
$\{(l_1,l_2):l_2=al_1+b\}$.
}
Expanding this covariance gives
\begin{equation}
    \operatorname{Cov}(Z\mid A=a,B=b)
=
\begin{pmatrix}
\operatorname{Var}(L_1\mid A=a,B=b)
&
a\,\operatorname{Var}(L_1\mid A=a,B=b)
\\
a\,\operatorname{Var}(L_1\mid A=a,B=b)
&
a^2\,\operatorname{Var}(L_1\mid A=a,B=b)
\end{pmatrix}.
\end{equation}
Equivalently,
\begin{equation}
    \operatorname{Cov}(Z\mid A=a,B=b)
=
\operatorname{Var}(L_1\mid A=a,B=b)
\begin{pmatrix}
1\\
a
\end{pmatrix}
\begin{pmatrix}
1 & a
\end{pmatrix}.
\end{equation}
This is an outer-product matrix, so whenever $\operatorname{Var}(L_1\mid A=a,B=b)>0$, we have
\begin{equation}
    \mathrm{rank}\!\left(\operatorname{Cov}(Z\mid A=a,B=b)\right)=1.
\end{equation}
Thus, the posterior covariance is not positive definite, and Assumption~\ref{assump:nondegeneracy}-(i) is violated.
Substituting this degenerate posterior covariance into the cross-Hessian decomposition yields
\begin{equation}
    \mathrm{rank}(\mathbf H_{A,B})
=
\mathrm{rank}\!\left(
\mathbf V_A(A)\,
\operatorname{Cov}(Z\mid A=a,B=b)\,
\mathbf V_B(B)^\top
\right)
\leq
\mathrm{rank}\!\left(\operatorname{Cov}(Z\mid A=a,B=b)\right)
=
1.
\end{equation}
\textcolor{blue}{If both slope matrices are nonzero on the rank-one image of the conditional covariance, then the inequality is tight and}
\begin{equation}
    \mathrm{rank}(\mathbf H_{A,B})=1.
\end{equation}

\textcolor{blue}{This example makes the role of Assumption~\ref{assump:nondegeneracy}-(i) explicit. In the stochastic model, the assumption requires the conditional covariance to have rank two for $P_{A,B}$-almost every $(a,b)$. In the deterministic model, the identity $(-a,1)^\top Z=b$ forces the conditional distribution onto a proper affine line, so the conditional covariance has rank one and the cross-Hessian rank is at most one. Thus, the relevant condition is full-dimensional conditional support, stated equivalently through positive-definite conditional covariance.}

\subsubsection{Zero Measurements and Informative Observations}
For a given evaluation point, a nonzero vector $c\in\mathbb R^d$ is a \emph{zero-measurement direction} for group $A$ if $\mathbf V_A(a)c=0$, and analogously for group $B$. In nonlinear models, this direction may vary with the observed value. Assumption~\ref{assump:nondegeneracy}-(ii) excludes such directions for $P_{A,B}$-almost every $(a,b)$. In a linear Gaussian measurement model, the slope matrices are constant and have the same column ranks as the loading matrices. A zero loading column is one example, while linearly dependent nonzero columns can produce the same failure. Conversely, an individual zero loading coefficient does not violate the assumption if the complete loading matrix remains full column rank. When full column rank is feasible, rank-deficient loading matrices form a proper algebraic, and hence Lebesgue-measure-zero, subset of the unrestricted loading-parameter space. This parameter-space statement is distinct from the $P_{A,B}$-almost-everywhere qualification over evaluation points; under structural restrictions that force rank deficiency, the failure need not be measure zero within the restricted model class.

Under conditional independence, the cross-Hessian can be written as
\begin{equation}
\mathbf H_{A,B}
=
\mathbf V_A(A)\,
\operatorname{Cov}(Z\mid A,B)\,
\mathbf V_B(B)^\top .
\end{equation}
Assumption~\ref{assump:nondegeneracy}-(ii) requires both
$\mathbf V_A(a)$ and $\mathbf V_B(b)$ to have full column rank $d$.
Thus, both observed groups must collectively contain information about
every direction of the separator vector $Z\in\mathbb R^d$. If one group
has a zero-measurement direction, then the cross-Hessian rank may be smaller
than $d$ even when the separating latent set is $d$-dimensional and the
posterior covariance is positive definite.

The requirement is imposed on each observed group as a whole; it does not
require every observed variable to depend on every latent variable. For
example, when $Z=(L_1,L_2)^\top$, one variable in group $A$ may primarily
measure $L_1$ and another may primarily measure $L_2$, provided that the
group jointly spans both latent directions. What is excluded is a group-level
loss of a latent direction.

Figure~\ref{fig:informative-observation} illustrates the distinction. In
both graphs, the separating latent set is $\mathcal Z=\{L_1,L_2\}$. In
Figure~\ref{fig:informative-observation-good}, both observed groups contain
information about two independent latent directions. In
Figure~\ref{fig:informative-observation-bad}, both latents can have nonzero
effects on group $B$, but the two loading columns are proportional. Hence
$B$ responds only to one linear combination of $(L_1,L_2)$ and has a
zero-measurement direction, so its cross-Hessian cannot recover both
graphically present latent directions.

\begin{figure}[h]
\centering

\begin{subfigure}{0.48\linewidth}
\centering
\scalebox{0.50}{
\begin{tikzpicture}[
    >={Stealth[length=3.5mm,width=2.4mm]},
    line width=0.9pt,
    latent/.style={
        draw,
        circle,
        minimum size=11mm,
        inner sep=0pt,
        font=\Large\bfseries
    },
    obs/.style={
        draw,
        circle,
        minimum size=11mm,
        inner sep=0pt,
        fill=gray!20,
        font=\Large\bfseries
    },
    groupbox/.style={
        draw,
        rounded corners,
        dashed,
        inner sep=6pt
    }
]

% latent variables
\node[latent] (L1) at (0,0) {$L_1$};
\node[latent] (L2) at (7.2,0) {$L_2$};

% observed variables in A
\node[obs] (X1) at (-2.4,-2.4) {$X_1$};
\node[obs] (X2) at (0,-2.4) {$X_2$};
\node[obs] (X3) at (2.4,-2.4) {$X_3$};

% observed variables in B
\node[obs] (X4) at (4.8,-2.4) {$X_4$};
\node[obs] (X5) at (7.2,-2.4) {$X_5$};
\node[obs] (X6) at (9.6,-2.4) {$X_6$};

% latent to observed: A informative about both
\draw[->] (L1) -- (X1);
\draw[->] (L1) -- (X2);
\draw[->] (L1) -- (X4);
\draw[->] (L2) -- (X3);

% latent to observed: B informative about both
\draw[->] (L1) -- (X4);
\draw[->] (L2) -- (X5);
\draw[->] (L1) -- (X6);
\draw[->] (L2) -- (X6);

% group boxes
\node[
    groupbox,
    fit=(X1)(X2)(X3),
    label={[font=\Large]below:{$A=\{X_1,X_2,X_3\}$}}
] {};

\node[
    groupbox,
    fit=(X4)(X5)(X6),
    label={[font=\Large]below:{$B=\{X_4,X_5,X_6\}$}}
] {};

\end{tikzpicture}
}
\caption{Both groups are informative.}
\label{fig:informative-observation-good}
\end{subfigure}
\hfill
\begin{subfigure}{0.48\linewidth}
\centering
\scalebox{0.50}{
\begin{tikzpicture}[
    >={Stealth[length=3.5mm,width=2.4mm]},
    line width=0.9pt,
    latent/.style={
        draw,
        circle,
        minimum size=11mm,
        inner sep=0pt,
        font=\Large\bfseries
    },
    obs/.style={
        draw,
        circle,
        minimum size=11mm,
        inner sep=0pt,
        fill=gray!20,
        font=\Large\bfseries
    },
    groupbox/.style={
        draw,
        rounded corners,
        dashed,
        inner sep=6pt
    }
]

% latent variables
\node[latent] (L1) at (0,0) {$L_1$};
\node[latent] (L2) at (7.2,0) {$L_2$};

% observed variables in A
\node[obs] (X1) at (-2.4,-2.4) {$X_1$};
\node[obs] (X2) at (0,-2.4) {$X_2$};
\node[obs] (X3) at (2.4,-2.4) {$X_3$};

% observed variables in B
\node[obs] (X4) at (4.8,-2.4) {$X_4$};
\node[obs] (X5) at (7.2,-2.4) {$X_5$};
\node[obs] (X6) at (9.6,-2.4) {$X_6$};

% latent to observed: A informative about both
\draw[->] (L1) -- (X1);
\draw[->] (L1) -- (X2);
\draw[->] (L2) -- (X1);
\draw[->] (L2) -- (X3);

% latent to observed: both latents affect B, but the two loading
% columns are proportional, leaving one latent direction unmeasured
\draw[->] (L1) -- (X4);
\draw[->] (L2) -- (X3);
\draw[->] (L1) -- (X5);
\draw[->] (L2) -- (X5);
\draw[->] (L1) -- (X6);
\draw[->] (L2) -- (X6);

% group boxes
\node[
    groupbox,
    fit=(X1)(X2)(X3),
    label={[font=\Large]below:{$A=\{X_1,X_2,X_3\}$}}
] {};

\node[
    groupbox,
    fit=(X4)(X5)(X6),
    label={[font=\Large]below:{$B=\{X_4,X_5,X_6\}$}}
] {};

\end{tikzpicture}
}
\caption{Group $B$ measures only one latent direction.}
\label{fig:informative-observation-bad}
\end{subfigure}

% \vspace{-5pt}
\caption{Illustration of the informative-observation condition. In both graphs,
the separating latent set is $\mathcal Z=\{L_1,L_2\}$. In (a), both
observed groups collectively span two latent directions. In (b), both
latents have nonzero effects on group $B$, but their loading columns are
proportional. Thus $B$ measures only one linear combination of the latents,
$\mathbf V_B(b)$ has rank one, and the cross-Hessian loses one graphically
present latent direction.}
\label{fig:informative-observation}
% \vspace{-10pt}
\end{figure}

To make this point explicit, consider the special case of a linear Gaussian measurement model:
\begin{equation}
    A=\Lambda_A Z +\epsilon_A,
\qquad
B=\Lambda_B Z+\epsilon_B,
\end{equation}
where $Z=(L_1,L_2)^\top$, $\epsilon_A\sim \mathcal N(0,\Omega_A)$, and
$\epsilon_B\sim \mathcal N(0,\Omega_B)$. Then
\begin{equation}
    \frac{\partial \ln \textcolor{blue}{p_{A\mid Z}(A\mid Z)}}{\partial A}
=
-\Omega_A^{-1}A+\Omega_A^{-1}\Lambda_A Z,
\qquad
\frac{\partial \ln \textcolor{blue}{p_{B\mid Z}(B\mid Z)}}{\partial B}
=
-\Omega_B^{-1}B+\Omega_B^{-1}\Lambda_B Z.
\end{equation}
Thus, in this special case,
\begin{equation}
    \mathbf V_A(A)=\Omega_A^{-1}\Lambda_A,
\qquad
\mathbf V_B(B)=\Omega_B^{-1}\Lambda_B.
\end{equation}
Since $\Omega_A$ and $\Omega_B$ are positive definite noise covariance matrices, multiplication by
$\Omega_A^{-1}$ or $\Omega_B^{-1}$ does not change column rank. Therefore,
\begin{equation}
    \mathrm{rank}(\mathbf V_A(A))=\mathrm{rank}(\Lambda_A),
\qquad
\mathrm{rank}(\mathbf V_B(B))=\mathrm{rank}(\Lambda_B).
\end{equation}

In the informative case, suppose, for example, that
\begin{equation}
    \Lambda_A
=
\begin{pmatrix}
\varepsilon_1 & 0\\
0 & \varepsilon_2\\
\varepsilon_3 & \varepsilon_4
\end{pmatrix},
\qquad
\Lambda_B
=
\begin{pmatrix}
\beta_1 & 0\\
0 & \beta_2\\
\beta_3 & \beta_4
\end{pmatrix},
\end{equation}
where the columns of both matrices are linearly independent. Then
\begin{equation}
    \mathrm{rank}(\mathbf V_A)=
\mathrm{rank}(\mathbf V_B)=2.
\end{equation}
If Assumption~\ref{assump:nondegeneracy}-(i) also holds, so that
\begin{equation}
    \mathrm{rank}\!\left(\operatorname{Cov}(Z\mid A,B)\right)=2\quad P_{A,B}\text{-almost surely},
\end{equation}
then
\begin{equation}
    \mathrm{rank}(\mathbf H_{A,B})
=
\mathrm{rank}\!\left(
\mathbf V_A\,
\operatorname{Cov}(Z\mid A,B)\,
\mathbf V_B^\top
\right)
=
2.
\end{equation}
Hence, the cross-Hessian rank correctly recovers the two-dimensional separating latent set.

In contrast, suppose the two loading columns for group $B$ are
proportional:
\begin{equation}
\Lambda_B
=
\begin{pmatrix}
\beta_1 & c\beta_1\\
\beta_2 & c\beta_2\\
\beta_3 & c\beta_3
\end{pmatrix},
\qquad c\neq 0.
\end{equation}
Both latent variables may therefore have nonzero arrows into every variable
in $B$, while $B$ depends only on the combination $L_1+cL_2$. Indeed,
$\Lambda_B(-c,1)^\top=0$, and hence
\begin{equation}
\operatorname{rank}(\mathbf V_B)
=
\operatorname{rank}(\Lambda_B)
=
1.
\end{equation}
Even if group $A$ is fully informative and the posterior covariance of $Z$
is positive definite,
\begin{equation}
\operatorname{rank}(\mathbf H_{A,B})
\leq
\operatorname{rank}(\mathbf V_B)
=
1.
\end{equation}
Thus, although the graph contains two latent paths and its minimum latent
d-separating set has cardinality two, the observed cross-Hessian sees only
one direction. This is precisely the rank loss excluded by the full-column-rank clause of Assumption~\ref{assump:nondegeneracy}-(ii).
A literal
zero loading column is the boundary case $c=0$; the proportional-column
construction shows that zero measurement refers more generally to an
unobserved latent direction, not necessarily to an individually absent
latent variable.

\subsection{Exact Affinity of Conditional Derivatives}
\label{app:local_linearity}

\textcolor{blue}{
Assumption~\ref{assump:local-linearity} is imposed relative to a specified
latent separator $\mathcal Z\subseteq\mathcal L$. Writing
$d=|\mathcal Z|$ and $Z\in\mathbb R^d$ for its associated random vector,
the assumption requires the conditional log-density derivatives to be
exactly affine in the realization $z$. It is not a global linearity
condition on the structural mechanisms, nor is it required to hold
simultaneously for every subset of $\mathcal L$.}

\textcolor{blue}{The condition permits genuinely nonlinear measurement functions for
arbitrary separator dimension. Let $X\in(0,\infty)^m$ satisfy
\[
X=\exp(\Lambda Z+\epsilon),
\qquad
\epsilon\sim\mathcal N(0,\Omega),
\]
where the exponential is applied componentwise,
$\Lambda\in\mathbb R^{m\times d}$, and $\Omega\succ0$. For a realization
$x$, let $D(x):=\operatorname{diag}(x)$. The conditional log-density is
\[
\log p_{X\mid Z}(x\mid z)
=
-\frac12(\log x-\Lambda z)^\top
\Omega^{-1}(\log x-\Lambda z)
-\sum_{i=1}^m\log x_i+C,
\]
where $\log x$ is componentwise. Differentiation gives
\[
\nabla_x\log p_{X\mid Z}(x\mid z)
=
\underbrace{-D(x)^{-1}\Omega^{-1}\log x-D(x)^{-1}\mathbf 1}_{\mathbf u_X(x)}
+
\underbrace{D(x)^{-1}\Omega^{-1}\Lambda}_{\mathbf V_X(x)}z.
\]
Thus the derivative is exactly affine in the full $d$-dimensional
separator realization, even though the measurement function is nonlinear.
Applying this construction separately to $A$ and $B$ yields
Assumption~\ref{assump:local-linearity}. Figure~\ref{fig:main_nonlinear_density_local_linearity}
is the scalar special case $m=d=1$, $\Lambda=1$, and $\Omega=1$.
This example gives a concrete sufficient nonlinear class; it does not claim
that every post-nonlinear model satisfies the assumption.
}

\textcolor{blue}{
\paragraph{An iterated post-nonlinear special case.}
The same affine-derivative property can hold after several nonlinear
structural layers. Set $W_0=Z$ and $g_0$ to the identity map, and consider
\[
W_k
=
g_k\!\left(
M_k g_{k-1}^{-1}(W_{k-1})+\epsilon_k
\right),
\qquad k=1,\ldots,K,
\]
where each $g_k$ is a twice continuously differentiable diffeomorphism and
the noises $\epsilon_k\sim\mathcal N(0,\Omega_k)$ are mutually
independent and independent of $Z$. The intermediate variables
$W_1,\ldots,W_{K-1}$ may be latent, while $X:=W_K$ is observed. Defining
$U_k:=g_k^{-1}(W_k)$ gives the linear Gaussian recursion
$U_k=M_kU_{k-1}+\epsilon_k$. Consequently, after marginalizing the
intermediate variables,
\[
U_K\mid Z=z
\sim
\mathcal N(\Lambda z,\Omega),
\qquad
\Lambda=M_K\cdots M_1,
\]
where $\Omega\succ0$ is the covariance obtained by propagating the layer
noises. Let $h:=g_K^{-1}$. A change of variables yields
\[
\begin{aligned}
\nabla_x\log p_{X\mid Z}(x\mid z)
&=
\underbrace{
\nabla_x\log|\det J_h(x)|
-
J_h(x)^\top\Omega^{-1}h(x)
}_{\mathbf u_X(x)}
+
\underbrace{
J_h(x)^\top\Omega^{-1}\Lambda
}_{\mathbf V_X(x)}z.
\end{aligned}
\]
Thus the conditional log-density derivative remains exactly affine in
the full separator realization $z$, although the structural relations are
iterated and nonlinear and the intermediate variables have been
marginalized. Applying two such branches with conditionally independent
noises to the groups $A$ and $B$ gives
Assumption~\ref{assump:local-linearity}.
}

\section{Details of FOFC Framework}
\label{app:fofc_details}

Find One Factor Clusters (FOFC) is a clustering framework for learning pure one-factor measurement
clusters from observed indicators. The goal is to group observed variables that are likely to share the same
latent parent. FOFC does not assume that the number of latent variables or the cluster assignments are known
in advance. Instead, it searches for sets of observed variables whose covariance structure is consistent with a
one-factor measurement model.

The algorithm is built on quartet tests. For a set of four observed variables, FOFC checks whether the quartet
satisfies the rank pattern expected from a one-factor structure, which is usually implemented through
vanishing tetrad constraints. A triple of variables is retained as a candidate pure triple if adding any other
observed variable forms a valid one-factor quartet. These candidate triples are then merged into larger
clusters, and the final output is obtained by selecting maximal non-overlapping clusters.

\begin{algorithm}[h]
\caption{Find One Factor Clusters (FOFC)}
\label{alg:fofc}
\begin{algorithmic}[1]
\REQUIRE Observed variables $\mathcal X$, quartet test
\ENSURE Recovered measurement clusters $\mathcal C$

\STATE $\mathcal T \leftarrow \emptyset$

\FOR{each triple $S\subset\mathcal X$ with $|S|=3$}
    \STATE $\mathrm{Pure} \leftarrow \mathrm{TRUE}$
    \FOR{each variable $X_i\in\mathcal X\setminus S$}
        \IF{\textsc{QuartetTest}$(S\cup\{X_i\})=\mathrm{FALSE}$}
            \STATE $\mathrm{Pure} \leftarrow \mathrm{FALSE}$
            \STATE \textbf{break}
        \ENDIF
    \ENDFOR
    \IF{$\mathrm{Pure}=\mathrm{TRUE}$}
        \STATE $\mathcal T \leftarrow \mathcal T\cup\{S\}$
    \ENDIF
\ENDFOR

\STATE $\mathcal C_{\mathrm{cand}} \leftarrow \mathcal T$

\FOR{each candidate cluster $C\in \mathcal C_{\mathrm{cand}}$}
    \FOR{each variable $X_i\in\mathcal X\setminus C$}
        \STATE $\mathrm{Expandable} \leftarrow \mathrm{TRUE}$
        \FOR{each pair $P\subset C$ with $|P|=2$}
            \IF{$P\cup\{X_i\}\notin \mathcal T$}
                \STATE $\mathrm{Expandable} \leftarrow \mathrm{FALSE}$
                \STATE \textbf{break}
            \ENDIF
        \ENDFOR
        \IF{$\mathrm{Expandable}=\mathrm{TRUE}$}
            \STATE $C \leftarrow C\cup\{X_i\}$
        \ENDIF
    \ENDFOR
\ENDFOR

\STATE $\mathcal C \leftarrow \emptyset$

\WHILE{$\mathcal C_{\mathrm{cand}}\neq \emptyset$}
    \STATE Select the largest cluster $C^\star$ in $\mathcal C_{\mathrm{cand}}$
    \STATE $\mathcal C \leftarrow \mathcal C\cup\{C^\star\}$
    \STATE Remove from $\mathcal C_{\mathrm{cand}}$ all clusters that intersect $C^\star$
\ENDWHILE

\RETURN $\mathcal C$
\end{algorithmic}
\end{algorithm}

Here, \textsc{QuartetTest}$(Q)$ denotes the statistical test used to decide whether a quartet $Q$ is consistent
with a one-factor measurement structure. In the original FOFC algorithm, this is based on vanishing tetrad
constraints. The first stage finds candidate pure triples, the second stage grows these triples into larger
candidate clusters, and the final stage selects maximal non-overlapping clusters as the recovered measurement
model.

\section{Experiment}

\subsection{Synthetic Dataset}
\label{app:synthetic_data}
The synthetic data were generated from a nonlinear structural equation model with latent measurement structure. For each setting, we fixed the number of latent variables and the number of observed measurements per latent variable \(M\). We first generated a directed acyclic graph over the latent variables by sampling a random topological ordering, constructing a connected directed tree, and then optionally adding extra directed edges. Each latent edge coefficient was sampled independently from \(\mathrm{Unif}(0.7,1.3)\). Root latent variables were sampled from \(\mathrm{Unif}(a,b)\), while non-root latent variables were generated in topological order as

\begin{equation}
    L_j=\sum_{i\in \mathrm{Pa}(j)} \beta_{ij} L_i+\epsilon_j,\qquad 
\epsilon_j\sim \mathcal N(0,0.2^2).
\end{equation}
Each latent variable \(L_j\) then generated \(M\) pure observed measurements:

\begin{equation}
    X_{jm}=b_{jm} f_{jm}\big((m+1)L_j\big)+\eta_{jm},\qquad
\eta_{jm}\sim \mathcal N(0,\sigma_x^2).
\end{equation}
The loading coefficient \(b_{jm}\) had a random sign and magnitude sampled from \(\mathrm{Unif}(2,5)\). The nonlinear measurement function \(f_{jm}\) was chosen from \(\{\sin,\sinh,\tanh,\mathrm{softplus}\}\). In the single-function setting, all measurements in a dataset used the same nonlinear function; in the mixed setting, each measurement was assigned a function independently. The observation noise level \(\sigma_x\) was selected from a predefined grid. All datasets were generated with fixed random seeds for reproducibility, and each observed variable was standardized before evaluation. All experiments are run on nodes with 128 logical CPU cores.\par\textcolor{blue}{We consider four single-function settings, using $\sin$, $\sinh$, $\tanh$, or softplus for all measurements, and a mixed setting in which each measurement independently selects one of these functions. The number of latent variables is varied over $\{4,6,8,10\}$. For each dimension, we generate 10 random latent graphs and run 10 random seeds per graph. We use sample sizes $N\in\{200,1000,2000,\ldots,9000\}$. Latent-variable location is evaluated by F1, while latent causal discovery is evaluated by F1 and structural Hamming distance (SHD) for the latent Markov equivalence class. We report the mean and standard deviation over all graph--seed combinations.}

\subsection{Real-world Dataset}
\label{app:real_data}
The real-world dataset used in this study is based on the August 2025 vintage of FRED-MD, a monthly U.S. macroeconomic database introduced by McCracken and Ng (2015) as a publicly accessible and regularly updated resource for large-scale empirical macroeconomic analysis. The raw file contains monthly macroeconomic and financial indicators, together with the transformation code row provided by FRED-MD. After removing this metadata row, the sample spans January 1959 to July 2025. For our real-data experiment, we focus on a financially interpretable subset of 14 variables organized into four latent macro-financial groups: interest rates, credit conditions, foreign exchange, and stock-market/volatility conditions. The interest-rate group includes the federal funds rate, short-term Treasury bill rates, medium- and long-term Treasury yields, and the term spread; the credit group includes Baa corporate bond yields and credit spreads; the foreign-exchange group includes a trade-weighted dollar index and bilateral exchange-rate proxies; and the stock/volatility group includes the S\&P 500, a valuation proxy, and a volatility proxy. When a requested series was unavailable in the current FRED-MD vintage, we used the closest available proxy or constructed a spread from available series. The final analysis used the common non-missing sample of the selected variables, applied a first-difference preprocessing step, and evaluated the learned latent CPDAG against the domain-specified macro-financial structure.

\subsection{More Experiment Results}
\label{app:more_exp}
\begin{table}[h]
\centering
\caption{Sensitivity to noise level on L4M3 with $n=3000$.}
\label{tab:noise_sensitivity_l4m3}
\begin{tabular}{c|c|cc}
\toprule
Noise & Cluster F1 & Structure F1 & Structure SHD \\
\midrule
1.00 & $0.7143 \pm 0.2771$ & $0.5430 \pm 0.3031$ & $2.0000 \pm 1.1466$ \\
0.75 & $0.8663 \pm 0.2160$ & $0.7277 \pm 0.2291$ & $1.3960 \pm 1.0341$ \\
0.50 & $0.9387 \pm 0.1475$ & $0.8469 \pm 0.1764$ & $0.9540 \pm 1.0593$ \\
0.25 & $0.9910 \pm 0.0556$ & $0.9502 \pm 0.1088$ & $0.2820 \pm 0.5789$ \\
0.15 & $0.9993 \pm 0.0149$ & $0.9643 \pm 0.0991$ & $0.1900 \pm 0.4923$ \\
\bottomrule
\end{tabular}
\end{table}

\begin{table}[h]
\centering
\caption{Sensitivity to latent variable range with $n=3000$ and noise level fixed at $0.15$.}
\label{tab:range_sensitivity_l4m3}
\begin{tabular}{c|c|cc}
\toprule
Range & Cluster F1 & Structure F1 & Structure SHD \\
\midrule
$[-0.5, 0.5]$ & $0.9993 \pm 0.0149$ & $0.9643 \pm 0.0991$ & $0.1900 \pm 0.4923$ \\
$[-1, 1]$ & $0.9452 \pm 0.1320$ & $0.8268 \pm 0.2060$ & $0.8780 \pm 0.9492$ \\
$[-1.5, 1.5]$ & $0.7543 \pm 0.2451$ & $0.6064 \pm 0.2661$ & $1.8280 \pm 1.0792$ \\
$[-2, 2]$ & $0.6025 \pm 0.2382$ & $0.5133 \pm 0.2901$ & $2.1960 \pm 1.2201$ \\
$[-2.5, 2.5]$ & $0.4990 \pm 0.2203$ & $0.4348 \pm 0.3102$ & $2.3980 \pm 1.1757$ \\
\bottomrule
\end{tabular}
\end{table}

Assumption~\ref{assump:local-linearity} is an exact condition, whereas the nonlinear measurement functions used here are not constructed to satisfy it exactly. Theorem~\ref{the:error_bound} instead bounds the resulting structural approximation error in terms of posterior concentration and conditional-score curvature. The sensitivity study probes related regimes by varying the observation-noise level and latent range. Tables~\ref{tab:noise_sensitivity_l4m3} and~\ref{tab:range_sensitivity_l4m3} show improved empirical performance at lower noise levels and over narrower latent ranges. These results provide robustness evidence beyond the exact affine-derivative setting, but they do not directly verify the numerical bound in Theorem~\ref{the:error_bound}.

\section{Broader Impact}
\label{app:border}

This work may help improve the interpretability of latent-variable models in domains where observed variables are noisy nonlinear measurements of hidden factors, such as economics, finance, health care, and scientific discovery. By using cross-Hessian rank constraints to identify latent dimensions and latent causal structure, the proposed method can provide more structured explanations of high-dimensional observations. However, in sensitive applications, the recovered graph should be viewed as exploratory evidence rather than definitive causal knowledge, and should be interpreted together with domain expertise and data-quality considerations.

\section{Limitations}
\label{app:limit}

The current application to latent-variable location and causal discovery follows the pure measurement model setting, where observed indicators are assumed to be associated with latent variables in a structured way. More general measurement structures, such as mixed indicators, correlated measurement errors, or complex observed causal relations, may require additional algorithmic development. The exact HRC characterization also relies on the affine-derivative and non-degeneracy conditions; these are explicit sufficient conditions and are not claimed to hold for arbitrary nonlinear structural causal models. Finally, the HRC-PC consistency result is conditional on correct first-stage clustering and an available consistent cross-Hessian rank test. The fixed finite-sample bootstrap calibration used in the experiments is not itself proved to be rank consistent.
% \section{Limitation}

\clearpage

}

\end{document}